\renewcommand{\arraystretch}{1.4}
\newcommand{\mam}{\mathcal M}
\newcommand{\wtx}{\widetilde X}
\newcommand{\mP}{\mathbb P}
\newcommand{\mE}{\mathbb E}
\newcommand{\mcw}{\mathcal W}
\newcommand{\mcs}{\mathcal S}
\newcommand{\mca}{\mathcal A}
\newcommand{\mf}{\mathcal F}
\newcommand{\ltwo}[1]{\left\|#1\right\|_2}
\newcommand{\lone}[1]{\left|#1\right|}
\newcommand{\lo}[1]{\left\|#1\right\|}
\newcommand{\lF}[1]{\left\|#1\right\|_F}
\newcommand{\lTV}[1]{\left\|#1\right\|_{TV}}
\newcommand{\linf}[1]{\left\|#1\right\|_\infty}
\newcommand{\mR}{\mathbb{R}}
\newtheorem{theorem}{Theorem}
\newtheorem{lemma}{Lemma}
\newtheorem{proposition}{Proposition}
\newtheorem{assumption}{Assumption}
\newcommand{\printfnsymbol}[1]{%
	\textsuperscript{\@fnsymbol{#1}}%
}
\begin{document}
	
\title{Improving Sample Complexity Bounds for (Natural) Actor-Critic Algorithms}

\author[ ]{Tengyu Xu, Zhe Wang, Yingbin Liang}
\affil[ ]{Department of Electrical and Computer Engineering, The Ohio State University}
\affil[ ]{\{xu.3260, wang.10982, liang.889\}@osu.edu}

\date{}
\maketitle

\begin{abstract}
The actor-critic (AC) algorithm is a popular method to find an optimal policy in reinforcement learning. In the infinite horizon scenario, the finite-sample convergence rate for the AC and natural actor-critic (NAC) algorithms has been established recently, but under independent and identically distributed (i.i.d.) sampling and single-sample update at each iteration. In contrast, this paper characterizes the convergence rate and sample complexity of AC and NAC under Markovian sampling, with mini-batch data for each iteration, and with actor having general policy class approximation. We show that the overall sample complexity for a mini-batch AC to attain an $\epsilon$-accurate stationary point improves the best known sample complexity of AC by an order of $\mathcal{O}(\epsilon^{-1}\log(1/\epsilon))$, and the overall sample complexity for a mini-batch NAC to attain an $\epsilon$-accurate globally optimal point improves the existing sample complexity of NAC by an order of $\mathcal{O}(\epsilon^{-1}/\log(1/\epsilon))$. Moreover, the sample complexity of AC and NAC characterized in this work outperforms that of policy gradient (PG) and natural policy gradient (NPG) by a factor of $\mathcal{O}((1-\gamma)^{-3})$ and $\mathcal{O}((1-\gamma)^{-4}\epsilon^{-1}/\log(1/\epsilon))$, respectively. This is the first theoretical study establishing that AC and NAC attain orderwise performance improvement over PG and NPG under infinite horizon due to the incorporation of critic.
\end{abstract}

\section{Introduction}\label{sc: intro}
The goal of reinforcement learning (RL) \cite{sutton2018reinforcement} is to maximize the expected total reward by taking actions according to a policy in a stochastic environment, which is modelled as a Markov decision process (MDP) \cite{bellman1957markovian}. To obtain an optimal policy, one popular method is the direct maximization of the expected total reward via gradient ascent, which is referred to as the policy gradient (PG) method \cite{sutton2000policy,williams1992simple}. In practice, PG methods often suffer from large variance and high sampling cost caused by Monte Carlo rollouts to acquire the value function for estimating the policy gradient, which substantially slow down the convergence. To address such an issue, the actor-critic (AC) type of algorithms have been proposed \cite{konda1999actor,konda2000actor}, in which {\em critic tracks the value function and actor updates the policy using the return of critic}. The usage of critic effectively reduces the variance of the policy update and the sampling cost, and significantly speeds up the convergence.

The first AC algorithm was proposed by \cite{konda2000actor}, in which actor's updates adopt the simple stochastic policy gradient ascent step. This algorithm was later extended to the natural actor-critic (NAC) algorithm in \cite{peters2008natural,bhatnagar2009natural}, in which actor's updates adopt the natural policy gradient (NPG) algorithm \cite{kakade2002natural}. The asymptotic convergence of AC and NAC algorithms under both independent and identically distributed (i.i.d.) sampling and Markovian sampling have been established in \cite{kakade2002natural,konda2002actor,bhatnagar2010actor,bhatnagar2009natural,bhatnagar2008incremental}. The non-asymptotic convergence rate (i.e., the finite-sample analysis) of AC and NAC has recently been studied. 
More specifically, \cite{yang2019provably} studied the sample complexity of AC with linear function approximation in the linear quadratic regulator (LQR) problem.
For general MDP with possibly infinity state space,
\cite{wang2019neural} studied AC and NAC with both actor and critic utilize overparameterized neural networks as approximation functions, \cite{kumar2019sample} studied AC with general nonlinear policy class and linear function approximation for critic, but with the requirement that the true value function is in the linear function class of critic. 
\cite{qiu2019finite} studied a similar problem as \cite{kumar2019sample} with weaker assumptions.

Although having progressed significantly, existing finite-sample analysis of AC and NAC have several limitations. They all assume that algorithms have access to the stationary distribution to generate i.i.d.\ samples, which can hardly be satisfied in practice. Moreover, existing studies focused on single-sample estimator for each update of actor and critic, which may not be overall sample-efficient.

\begin{list}{$\bullet$}{\topsep=0.ex \leftmargin=0.15in \rightmargin=0.in \itemsep =0.02in}
\item In this paper, we consider the {\em discounted} MDP with {\bf infinite} horizon and possibly {\em infinite} state and action space, and with the policy taking a general {\em nonlinear} function approximation. We study the online AC and NAC algorithms, which has the entire execution based on a {\bf single sample path} and each update based on a Markovian mini-batch of samples taken from such a sample path.
We characterize the convergence rate for both AC and NAC, and show that mini-batch AC improves the best known sample complexity of AC \cite{qiu2019finite} by a factor of $\mathcal{O}(\epsilon^{-1}\log(1/\epsilon))$ to attain an $\epsilon$-accurate stationary point, and mini-batch NAC improves the existing sample complexity \cite{wang2019neural} by a factor of $\mathcal{O}(\epsilon^{-1}/\log(1/\epsilon))$ to attain an $\epsilon$-accurate globally optimal point. \Cref{tab:results} includes the detailed comparison among AC and NAC algorithms.

\end{list}



Second, the sample complexity of AC and NAC characterized in the existing studies is no better (in fact often worse) than that of PG and NPG under infinite horizon MDP. Specifically, the best known sample complexity $\mathcal{O}(\epsilon^{-3}\log^2(1/\epsilon))$ \cite{qiu2019finite} of AC is worse than that of PG $\mathcal{O}(\epsilon^{-2})$ in \cite{zhang2019global,xiong2020non}, and the best known complexity $\mathcal{O}(\epsilon^{-4})$ \cite{wang2019neural} of NAC is the same as that of NPG \cite{agarwal2019optimality}. Clearly, these theoretical studies of AC and NAC did not capture their performance advantage over PG and NPG due to the incorporation of critic. Furthermore, the existing studies of AC and NAC with discounted reward did not capture the dependence of the sample complexity on $1-\gamma$, and hence did not capture one important aspect of the comparison to PG and NPG.


\begin{list}{$\bullet$}{\topsep=0.ex \leftmargin=0.15in \rightmargin=0.in \itemsep =0.02in}
\item In this paper, for both AC and NAC, our characterization of the sample complexity is orderwisely better than the best known results for PG and NPG, respectively. Specifically, 
we show that AC improves the best known complexity $\mathcal{O}((1-\gamma)^{-5}\epsilon^{-2})$ of PG in \cite{xiong2020non} by a factor of $\mathcal{O}((1-\gamma)^{-3})$.
We further show that NAC improves significantly upon the complexity $\mathcal{O}((1-\gamma)^{-8}\epsilon^{-4})$ of NPG in \cite{agarwal2019optimality} by a factor of $\mathcal{O}((1-\gamma)^{-4}\epsilon^{-1}/\log(1/\epsilon))$. This is the {\em first} time that AC and NAC are shown to have better convergence rate than PG and NPG in theory.
\end{list}



\renewcommand{\arraystretch}{1.2}
\begin{table*}[!t]
\vspace{-2mm}
	\centering
	\caption{Comparison of sample complexity of AC and NAC algorithms{\small $^{1,2}$}}\small
	\vspace{0.05cm}
\begin{threeparttable}
	\begin{tabular}{|c|c|c|c|c|}
		\hline
		\multirow{2}{*}{Algorithm} & \multirow{2}{*}{Reference} & \multicolumn{2}{c|}{Sampling}  & \multirow{2}{*}{Total complexity\tnote{3,4}} \\ \cline{3-4}
		~ & ~ & ~~Actor~~ & Critic & ~   \\ \hhline{|=====|}
		 \multirow{4}{*}{Actor-Critic}  &(Wang et al., 2019) \cite{wang2019neural}  & i.i.d.\ & i.i.d.\ & $\mathcal{O}(\epsilon^{-4})$  \\ \cline{2-5}
		~ & (Kumar et al., 2019) \cite{kumar2019sample} & i.i.d.\ & i.i.d.\ & $\mathcal{O}(\epsilon^{-4})$  \\  \cline{2-5}
	         ~ & (Qiu et al., 2019) \cite{qiu2019finite} & i.i.d.\ & Markovian & $\mathcal{O}(\epsilon^{-3}\log^2(1/\epsilon))$  \\ \cline{2-5}
		~ & \cellcolor{blue!15}{\textcolor{red}{This paper}} & \cellcolor{blue!15}{\textcolor{red}{Markovian}} & \cellcolor{blue!15}{\textcolor{red}{Markovian}} & \cellcolor{blue!15}{\textcolor{red}{$\mathcal{O}(\epsilon^{-2}\log(1/\epsilon))$}}  \\ \cline{1-5}
	         \multirow{2}{*}{Natural Actor-Critic} & (Wang et al., 2019) \cite{wang2019neural} & i.i.d.\ & i.i.d.\ & $\mathcal{O}(\epsilon^{-4})$ \\ \cline{2-5}
		~ & \cellcolor{blue!15}{\textcolor{red}{This paper}} & \cellcolor{blue!15}{\textcolor{red}{Markovian}} & \cellcolor{blue!15}{\textcolor{red}{Markovian}} & \cellcolor{blue!15}{\textcolor{red}{$\mathcal{O}(\epsilon^{-3}\log(1/\epsilon))$}} \\ \cline{2-5}
\hline
	\end{tabular}\label{tab:results}
\begin{tablenotes}
	\item[1] The table includes all previous studies on finite-sample analysis of AC and NAC under infinite-horizon MDP and policy function approximation, to our best knowledge. 
	\item[2] For comparison between our results of AC and NAC and the best known results of PG \cite{xiong2020non} and NPG \cite{agarwal2019optimality}, please refer to the discussion after \Cref{thm1} and \Cref{thm2}.
	\item[3] Total complexity of AC is measured to attain an $(\epsilon+\text{error})$-accurate stationary point $\bar{w}$, i.e., $\ltwo{\nabla_w J(\bar{w})}^2 < \epsilon+\text{error}$. Total complexity of NAC is measured to attain an $(\epsilon+\text{error})$-accurate global optimum $\bar{w}$, i.e., $J(\pi^*)-J(\bar{w}) <\epsilon + \text{error}$. 
	\item[4] We do not include the dependence on $1-\gamma$ into the complexity because most studies do not capture such dependence and it is difficult to make a fair comparison. Our results do capture such dependence as specified in our theorems.
	\end{tablenotes}
\vspace{-3mm}
\end{threeparttable}
\end{table*}

We develop the following new techniques in our analysis. To obtain the convergence rate for critic, we develop a new technique to handle the bias error caused by {\em mini-batch} Markovian sampling in the linear stochastic approximation (SA) setting, which is different in nature from how existing studies handle single-sample bias \cite{bhandari2018finite}. Our result shows that Markovian mini-batch linear SA outperforms single-sample linear SA in terms of the total sample complexity by a factor of $\log(1/\epsilon)$ \cite{bhandari2018finite,srikant2019finite,hu2019characterizing}. For actor's update in AC, we develop a new technique to bound the bias error caused by mini-batch Markovian sampling in the {\bf nonlinear SA} setting, which is different from the bias error of linear SA in critic's update. We show that the Markovian minibatch update allows a constant stepsize for actor's update, which yields a faster convergence rate and hence improves the total sample complexity by a factor of $\mathcal{O}(\epsilon^{-1})$ compared with previous study on AC \cite{qiu2019finite}. For actor's update in NAC, we discover that the variance of actor's update is self-reduced under the Markovian mini-batch update, which yields an improved complexity by a factor of $\mathcal{O}(\epsilon^{-1})$ compared with previous study on NAC \cite{wang2019neural}.



\subsection{Related Work}\label{sec:relatedwork}

We include here only the studies that are highly related to our work.

\textbf{AC and NAC.} The first AC algorithm was proposed by \cite{konda2000actor} and was later extended to NAC in \cite{peters2008natural} using NPG \cite{kakade2002natural}. The asymptotic convergence of AC and NAC algorithms under both i.i.d.\ sampling and Markovian sampling have been established in \cite{kakade2002natural,konda2002actor,bhatnagar2010actor,bhatnagar2009natural,bhatnagar2008incremental}. The convergence rate (i.e., the finite-sample rate) of AC and NAC has been studied respectively in \cite{wang2019neural,yang2019provably,kumar2019sample,qiu2019finite} and in \cite{wang2019neural}. As aforementioned, all above convergence rate results are not better than that of PG and NPG. In contrast to the above studies of AC and NAC with single sample for each iteration, our study focuses on Markovian sampling and mini-batch data for each iteration, and establishes the improved sample complexity over the previous studies of AC and NAC. Two recent studies \cite{xu2020non,wu2020finite} (concurrent to this paper) characterized the convergence rate of two time-scale AC in the Markovian setting. Our sample complexity also outperforms that of these two concurrent studies.


\textbf{Policy gradient.} The asymptotic convergence of PG in both the finite and infinite horizon scenarios has been established in \cite{williams1992simple,baxter2001infinite,sutton2000policy,kakade2002natural,pirotta2015policy,tadic2017asymptotic}. In some special RL problems such as LQR, under tabular policy, or with convex policy function approximation, PG has been shown to converge to the global optimum \cite{fazel2018global,malik2018derivative,tu2018gap,bhandari2019global}. General nonconcave/nonconvex function approximation has also been studied. For finite-horizon scenarios, \cite{shen2019hessian,papini2018stochastic,papini2017adaptive,xu2019improved,xu2019sample} established the convergence rate (or sample complexity) of PG and variance reduced PG, and \cite{cai2019provably} studied the exploration efficiency of PG and established the regret bound.
For infinite-horizon scenarios (which is the focus of this paper), \cite{karimi2019non} showed that PG converges to a neighborhood of a first-order stationary point and \cite{zhang2019global} modified the algorithm so that PG is guaranteed to converge to a second-order stationary point. The recent study \cite{xiong2020non} improved the sample complexity for PG in both studies \cite{karimi2019non,zhang2019global}. And \cite{agarwal2019optimality} studied the convergence rate and sample complexity for NPG. This paper shows that AC and NAC have better convergence rate than the best known PG result in \cite{xiong2020non} and NPG result in \cite{agarwal2019optimality}. As another line of research parallel to AC-type algorithms, more advanced PG algorithms TRPO/PPO have been studied in \cite{shani2019adaptive} for the tabular case and in \cite{liu2019neural} with the neural network function approximation.

\textbf{Linear SA and TD learning.} The convergence analysis of critic in AC and NAC in this paper is related to but different from the studies on TD learning, which we briefly summarize as follows. 
For TD learning under i.i.d.\ sampling (which can be modeled as linear SA with martingale noise), the asymptotic convergence has been well established in \cite{borkar2000ode,borkar2009stochastic}, and the non-asymptotic convergence (i.e., finite-time analysis) has been provided in \cite{dalal2018finite,kamal2010convergence,thoppe2019concentration}. For TD learning under Markovian sampling (which can be modeled as linear SA with Markovian noise), the asymptotic convergence has been established in \cite{tsitsiklis1997analysis,tadic2001convergence}, and the non-asymptotic analysis has been provided in \cite{bhandari2018finite,xu2020reanalysis,srikant2019finite,hu2019characterizing}.

\section{Problem Formulation and Preliminaries}
In this section, we introduce the background of MDP, AC and NAC, and technical assumptions.

\subsection{Markov Decision Process}\label{sc: MDP}

A discounted Markov decision process (MDP) is defined by a tuple $(\mcs, \mca, \mathsf{P},r,\xi,\gamma)$, where $\mcs$ and $\mca$ are the state and action spaces,
$\mathsf{P}$ is the transition kernel, and $r$ is the reward function. Specifically, at step $t$, an agent takes an action $a_t\in\mca$ at state $s_t\in\mcs$, transits into the next state $s_{t+1}\in\mcs$ according to the transition probability $\mathsf{P}(s_{t+1} |s_t,a_t)$ and receives a reward $r(s_t,a_t,s_{t+1})$.
Moreover, $\xi$ denotes the distribution of the initial state $s_0\in\mcs$ and $\gamma\in(0,1)$ denotes the discount factor.
A policy $\pi$ maps a state $s\in \mcs$ to the actions in $\mca$ via a probability distribution $\pi(\cdot|s)$.

For a given policy $\pi$, we define the state value function as
 $V_\pi(s)=\mE[\sum_{t=0}^{\infty}\gamma^t r(s_t,a_t, s_{t+1})|s_0=s,\pi]$ and the state-action value function (i.e., the $Q$-function) as $Q_\pi(s,a)=\mE[\sum_{t=0}^{\infty}\gamma^t r(s_t,a_t, s_{t+1})|s_0=s,a_0=a,\pi]$, where $a_t\sim\pi(\cdot|s_t)$ for all $t\geq 0$. We also define the advantage function of the policy $\pi$ as $A_\pi(s,a)=Q_\pi(s,a)-V_\pi(s)$. 
Moreover, the visitation measure induced by the police $\pi$ is defined as $\nu_\pi(s,a)=(1-\gamma)\sum_{t=0}^{\infty}\gamma^t \mP(s_t=s,a_t=a)$. It has been shown in \cite{konda2002actor} that $\nu_\pi(s,a)$ is the stationary distribution of a Markov chain with the transition kernel $\widetilde{\mathsf{P}}(\cdot|s,a)=\gamma \mathsf{P}(\cdot|s,a) + (1-\gamma)\xi(\cdot)$ and the policy $\pi$ if the Markov chain is ergodic.
For a given policy $\pi$, we define the expected total reward function as 
$J(\pi)=(1-\gamma)\mE[\sum_{t=0}^{\infty}\gamma^t r(s_t,a_t,s_{t+1})]=\mE_\xi[V_\pi(s)]$.
The goal of reinforcement learning is to find an optimal policy $\pi^*$ that maximizes $J(\pi)$.

\subsection{Policy Gradient Theorem}\label{sec:parameterization}

In order to find the optimal policy $\pi^*$ that maximizes $J(\pi)$, a popular approach is to parameterize the policy and then optimize over the set of parameters. We let the policy $\pi$ be parameterized by $w\in \mcw \subset \mR^{d_1}$, where the parameter space $\mcw$ is Euclidean. Thus, the parameterized policy class is $\{ \pi_w:w\in \mcw \}$. We allow general {\em nonlinear} parameterization of the policy $\pi$.
Thus, the policy optimization problem is to solve the problem:
\begin{align}\label{eq:opt}
\max_{w\in \mcw} J(\pi_w):=J(w),
\end{align}
where we write $J(\pi_w)=J(w)$ for notational simplicity. In order to solve the problem \cref{eq:opt} by gradient-based approaches, 
the gradient $\nabla J(w)$ is derived by \cite{sutton2000policy} as follows:
\begin{flalign}\label{pg_gradient}
\nabla J(w) = \mE_{\nu_{\pi_{w}}}\big[ Q_{\pi_{w}}(s,a)\psi_{w}(s,a) \big] = \mE_{\nu_{\pi_{w}}}\big[ A_{\pi_{w}}(s,a)\psi_{w}(s,a) \big],
\end{flalign}
where $\psi_{w}(s,a)\coloneqq\nabla_w\log\pi_{w}(a|s)$ denotes the score function. Ideally, policy gradient (PG) algorithms \cite{williams1992simple} update the parameter $w$ via gradient ascent: $w_{t+1}=w_t + \alpha \nabla_wJ(w_t)$, where $\alpha>0$ is the stepsize.

Alternatively, natural policy gradient (NPG) algorithms \cite{kakade2002natural} apply natural gradient descent \cite{amari1998natural}, which is invariant to the parametrization of policies. At each iteration, NPG ideally performs the update: $w_{t+1}=w_t + \alpha (F(w_t))^\dagger\nabla_wJ(w_t)$, in which $F(w)$ is the Fisher information matrix given by $F(w)=\mE_{\nu_{\pi_{w}}}[\psi_w(s,a)\psi_w(s,a)^\top]$. In practice, $F(w_t)$ is usually estimated via sampling \cite{bhatnagar2009natural}.

In practice, both PG and NPG utilize Monte Carlo methods to estimate $Q_{\pi_w}(s,a)$ in \cref{pg_gradient} to approximate the gradient $\nabla J(w_t)$. However, Monte Carlo rollout typically suffers from large variance and high sampling cost, which substantially degrades the convergence performance of PG and NPG. This motivates the design of {\em Actor-Critc (AC) and Natural Actor-Critic (NAC)} algorithms as we introduce in \Cref{sc: acnac}, which have significantly reduced variance and sampling cost.

\subsection{Actor-Critic and Natural Actor-Critic Algorithms}\label{sc: acnac}


We study the AC and NAC algorithms that adopt the design of Advantage Actor-Critic (A2C) proposed in \cite{bhatnagar2009natural,mnih2016asynchronous} (see Algorithm \ref{algorithm_nlac}). \Cref{algorithm_nlac} performs {\bf online} updates based on a {\bf single sample path} in a nested fashion. Namely, the outer loop consists of actor's updates of the parameter $w$ to optimize the policy $\pi_w$, and each outer-loop update is followed by an entire inner loop of critic's $T_c$ updates of the parameter $\theta$ to estimate the value function $V_{\pi_w}(s)$, which further yields an estimate of the advantage function $A_{\pi_w}(s,a)$ to approximate the policy gradient in \cref{pg_gradient}.

{\bf Critic's update:} Critic uses linear function approximation $V_\theta(s)=\phi(s)^\top\theta$ or $V_\theta=\Phi\theta$, and adopts TD learning to update the parameter $\theta$, where $\theta\in\mR^{d_2}$, $\phi(\cdot)$: $\mcs\rightarrow \mR^{d_2}$ is a known feature mapping, and $\Phi$ is the correspondingly $\lone{\mcs}\times d_2$ feature matrix. Critic updates the parameter $\theta$ as in \Cref{algorithm_mbsa}, which utilizes a mini-batch of samples $\{(s_{k,j},a_{k,j},s_{k,j+1})\}_{0\leq j\leq M-1}$ sequentially drawn from the trajectory to perform the TD update (see line 8 of \Cref{algorithm_mbsa}).

{\bf Actor's update:} Based on critic's estimation of the value function $V_\theta(s)$, actor approximates the advantage function $A_{\pi_w}(s,a)$ by the temporal difference error $\delta_\theta(s,a,s^\prime) = r(s,a,s^\prime) + \gamma V_\theta(s^\prime) - V_\theta(s)$. The policy gradient can then be estimated as $\nabla_wJ(w)\approx \delta_\theta(s,a,s^\prime)\psi_w(s,a)$ based on \cref{pg_gradient}. In \Cref{algorithm_nlac}, for AC, we adopt Markovian mini-batch sampling to estimate the policy gradient. For NAC, we first approximate the Fisher information matrix $F(w)$ via Markovian mini-batch sampling (see line 12 of \Cref{algorithm_nlac}), where $\lambda I$ is the regularization term to prevent the matrix from being singular. We then update the policy parameter based on natural policy gradient.

In contrast to other nested-loop AC and NAC algorithms studied in \cite{qiu2019finite,kumar2019sample,zhang2019global,agarwal2019optimality}, which assume i.i.d.\ sampling, \Cref{algorithm_nlac} naturally takes a {\bf single sample path} to perform the updates {\em without} requiring a restarted sample path. Specifically, critic inherits the sample path from the last iteration of actor to take the next Markovian sample (see lines 4 and 5 in \Cref{algorithm_nlac}), and vice versa.



Note that our work is the {\em first} that applies the mini-batch technique to Markovian linear and nonlinear SA problems, which correspond respectively to critic and actor's iterations. We show in \Cref{sc: nlac} that the mini-batch technique {\em orderwisely} improves the sample complexity of AC and NAC algorithms that apply single-sample update.


\begin{algorithm}[tb]
	\null
	\caption{Actor-critic (AC) and natural actor-critic (NAC) online  algorithms}
	\label{algorithm_nlac}
	\small
	\begin{algorithmic}[1]
		\STATE {\bfseries Input:} Policy class $\pi_w$, based function $\phi$, actor stepsize $\alpha$, critic stepsize $\beta$, regularization $\lambda$
		\STATE {\bfseries Initialize:} actor parameter $w_0$, initial state $s_0$
		\FOR{$t=0,\cdots,T-1$}
		\STATE $s_{\text{ini}}=s_{t-1,B}$ (when $t=0$, $s_{\text{ini}}=s_0$)
		\STATE \textbf{\textit{Critic update: }}{$\theta_{t}, s_{t,0}=\text{Minibatch-TD}(s_{\text{ini}}, \pi_{w_t}, \phi, \beta, T_c,M)$}
		\STATE
		\STATE \textbf{\textit{Online Markovian mini-batch sampling:}}
		\STATE {\small $F_t(w_t)=0$}
		\FOR {$i=0,\cdots,B-1$}
		\STATE {\small $a_{t,i}\sim \pi_{w_t}(\cdot|s_{t,i})$, $\quad s_{t,i+1}\sim \widetilde{\mathsf{P}}_{\pi_{w_t}}(\cdot |s_{t,i},a_{t,i})$}
		\STATE {\small $\delta_{\theta_t}(s_{t,i}, a_{t,i}, s_{t,i+1})=r(s_{t,i}, a_{t,i}, s_{t,i+1}) + \gamma \phi(s_{t,i+1})^\top\theta_t - \phi(s_{t,i})^\top\theta_t$}
		\STATE {\small $F_t(w_t)=F_t(w_t) + \frac{1}{B} \psi_{w_t}(s_{t,i}, a_{t,i})\psi^\top_{w_t}(s_{t,i}, a_{t,i})$ \textbf{(only for NAC update)}}
		\ENDFOR
		\STATE
		\STATE \textit{\textbf{Option I: Actor update in AC}}
		\STATE {\small $w_{t+1}=w_t+\alpha \frac{1}{B}\sum_{i=0}^{B-1}\delta_{\theta_t}(s_{t,i},a_{t,i},s_{t,i+1})\psi_{w_t}(s_{t,i},a_{t,i})$}
		\STATE
		\STATE \textit{\textbf{Option II: Actor update in NAC}}
		\STATE {\small $w_{t+1}=w_t+\alpha \left[ F_t(w_t)+\lambda I \right]^{-1} \left[ \frac{1}{B}\sum_{i=0}^{B-1}\delta_{\theta_t}(s_{t,i},a_{t,i})\psi_{w_t}(s_{t,i},a_{t,i}, s_{t,i+1}) \right]$}
		\ENDFOR
		\STATE {\bfseries Output:} $w_{\hat{T}}$ with $\hat{T}$ chosen uniformly from $\{1,\cdots,T\}$
	\end{algorithmic}
\end{algorithm}


\begin{algorithm}[tb]
	\null
	\caption{$\text{Minibatch-TD}(s_{\text{ini}}, \pi, \phi, \beta, T_c, M)$}
	\label{algorithm_mbsa}
	\small
	\begin{algorithmic}[1]
		\STATE {\bfseries Initialize:} Critic parameter $\theta_0$
		\FOR{$k=0,\cdots,T_c-1$}
		\STATE {\small $s_{k,0}=s_{k-1,M}$ ( when $k=0, s_{k,0}=s_{\text{ini}}$)}
		\FOR{$j=0,\cdots,M-1$}
		\STATE {\small $a_{k,j}\sim \pi(\cdot|s_{k,j}),$$\quad s_{k,j+1}\sim  \mathsf{P}_\pi(\cdot |s_{k,j},a_{k,j})$ (observe reward $r(s_{k,j},a_{k,j},s_{k,j+1})$)}
		\STATE {\small $\delta_{\theta_k}(s_{k,j},a_{k,j},s_{k,j+1})=r(s_{k,j},a_{k,j},s_{k,j+1})+\gamma\phi(s_{k,j+1})^\top\theta_k - \phi(s_{k,j})^\top\theta_k$}
		\ENDFOR
		\STATE {\small \textbf{Critic update:} $\theta_{k+1}=\theta_k + \beta \frac{1}{M}\sum_{j=0}^{M-1}\delta_{\theta_k}(s_{k,j},a_{k,j},s_{k,j+1})\phi(s_{k,j})$}
		\ENDFOR
		\STATE {\bfseries Output:} $\theta_{T_c}$, $s_{T_c-1,M}$
	\end{algorithmic}
\end{algorithm}

\subsection{Technical Assumptions}

We take the following standard assumptions throughout the paper.
\begin{assumption}\label{ass1}
	For any $w,w^\prime\in R^{d_1}$ and any state-action pair $(s,a)\in \mcs\times\mca$, there exist positive constants $L_\phi$, $C_\phi$, and $C_\pi$ such that the following hold: (1) $\ltwo{\psi_w(s,a)-\psi_{w^\prime}(s,a)}\leq L_{\psi}\ltwo{w-w^\prime}$; (2) $\ltwo{\psi_w(s,a)}\leq C_\psi$; (3) $\lTV{\pi_w(\cdot|s)-\pi_{w^\prime}(\cdot|s)}\leq C_\pi\ltwo{w-w^\prime}$, where $\lTV{\cdot}$ denotes the total-variation norm.
\end{assumption}
The first two items in \Cref{ass1} assume that the score function $\psi_w$ is smooth and bounded, which have also been adopted in previous studies \cite{kumar2019sample,zhang2019global,agarwal2019optimality,konda2002actor,zou2019finite}. The first two items can be satisfied by many commonly used policy classes including some canonical policies such as Boltzman policy \cite{konda1999actor} and Gaussian policy \cite{doya2000reinforcement}. 
The third item in \Cref{ass1} holds for any smooth policy with bounded action space or Gaussian policy.  Lemma \ref{lemma: policy_conti} in Appendix \ref{app: lemmas} provides such justifications.

\begin{assumption}[Ergodicity]\label{ass2}
	For any $w\in\mR^{d_1}$, consider the MDP with policy $\pi_w$ and transition kernel $\mathsf{P}(\cdot|s,a)$ or $\widetilde{\mathsf{P}}(\cdot|s,a)=\gamma \mathsf{P}(\cdot|s,a) + (1-\gamma)\eta(\cdot)$, where $\eta(\cdot)$ can either be $\xi(\cdot)$ or $\mathsf{P}(\cdot|\hat{s},\hat{a})$ for any given $(\hat{s},\hat{a})\in \mathcal{S}\times\mathcal{A}$. Let $\chi_{\pi_w}$ be the stationary distribution of the MDP. There exist constants $\kappa>0$ and $\rho\in(0,1)$ such that
	\begin{flalign*}
	\sup_{s\in\mcs}\lTV{\mP(s_t\in\cdot|s_0=s)-\chi_{\pi_w}}\leq \kappa\rho^t,\quad \forall t\geq 0.
	\end{flalign*}
\end{assumption}
Assumption \ref{ass2} has also been adopted in \cite{bhandari2018finite,xu2020reanalysis,zou2019finite}, which holds for any time-homogeneous Markov chain with finite-state space or any uniformly ergodic Markov chain with general state space.

\vspace{-2mm}
\section{Main Results}\label{sc: nlac}
\vspace{-2mm}

In this section, we first analyze the convergence of critic's update as a mini-batch linear SA algorithm. Based on such an analysis, we further provide the convergence rate for our AC and NAC algorithms.

\vspace{-2mm}
\subsection{Convergence Analysis of Critic: Mini-batch TD}
\vspace{-2mm}

In this section, we analyze critic's update, which adopts the mini-batch TD  described in \Cref{algorithm_mbsa} and can be viewed more generally as a mini-batch linear SA algorithm. 

As we show below that {\em mini-batch} linear SA orderwisely improves the finite-time performance of the {\em single}-sample linear SA studied previously in \cite{bhandari2018finite,srikant2019finite} in the Markovian setting. In fact, the finite-time analysis of {\em mini-batch} linear SA is very different from that of {\em single}-sample linear SA in \cite{bhandari2018finite,srikant2019finite}. This is because samples in the same mini-batch are correlated with each other, which introduces an extra bias error within each iteration in addition to the bias error across iterations. Existing techniques such as in \cite{bhandari2018finite,srikant2019finite} provide only ways to handle the correlation across iterations, but not the bias error within each iteration caused by a mini-batch Markovian data. Here, we develop a new analysis to handle such a bias error.

Specifically, we show that such a bias error can be divided into two parts, in which the first part diminishes as the algorithm approaches to the fix point,
and the second part is averaged out as the batch size $M$ increases. Hence, the bias error can be controlled by the mini-batch size, so that mini-batch linear SA can converge arbitrarily close to the fix point with a {\em constant} stepsize chosen {\em independently} from the accuracy requirement. This orderwisely improves the sample complexity over single-sample linear SA \cite{bhandari2018finite,srikant2019finite}.

To present the convergence result, for any policy $\pi$, we define the matrix $A_\pi\coloneqq\mE_{\mu_\pi}[(\gamma\phi(s^\prime)-\phi(s))\phi(s)]$ and the vector $b_\pi\coloneqq\mE_{\mu_\pi}[r(s,a,s^\prime)\phi(s)]$. The optimal solution of TD learning $\theta^*_\pi=-A^{-1}b$. We assume that the feature mapping $\phi(s)$ is bounded for all $s\in \mcs$ and the columns of the feature matrix $\Phi$ are linearly independent. In such a case, it has been verified in \cite{bhandari2018finite,tu2018gap} that $(\theta-\theta^*_\pi)^\top A_\pi(\theta-\theta^*_\pi)\leq -\lambda_{A_\pi}\ltwo{\theta-\theta^*_\pi}^2$ for all $\theta\in \mR^{d_2}$,  where $\lambda_{A_\pi}$ is a positive constant.

The following theorem characterizes the convergence rate and sample complexity for Markovian mini-batch TD. The theorem is presented with the order-level terms to simplify the expression. The precise statement is provided as \Cref{thm: linearsa} (that includes \Cref{thm: nl-critic} as a special case) in Appendix \ref{sc: pf_thm1} together with the proof, which is for the general mini-batch linear SA with Markovian update.
\begin{theorem}\label{thm: nl-critic}
	Suppose Assumption \ref{ass2} hold. Consider \Cref{algorithm_mbsa} of Markovian mini-batch TD. Let stepsize $\beta =\min\{\mathcal{O}(\lambda_{A_\pi}), \mathcal{O}(\lambda^{-1}_{A_\pi}) \}$. Then we have
    {\small \begin{flalign*}
     \mE[\ltwo{\theta_{T_c}-\theta^*_\pi}^2]&\leq (1-\mathcal{O}(\lambda_{A_\pi}\beta))^{T_c} + \mathcal{O}\left( \frac{\beta}{M}\right).
     \end{flalign*}}
	Let $T_c = \Theta(\log(1/\epsilon))$ and $M=\Theta(\epsilon^{-1})$. The total sample complexity for \Cref{algorithm_mbsa} to achieve an $\epsilon$-accurate optimal solution $\theta_{T_c}$, i.e., $\mE[\ltwo{\theta_{T_c}-\theta^*_\pi}^2]\leq \epsilon$, is given by $MT_c=\mathcal{O}(\epsilon^{-1}\log(1/\epsilon))$.
\end{theorem}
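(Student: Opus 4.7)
The plan is to prove the one-step inequality $\mathbb{E}\|z_{k+1}\|_2^2 \leq (1-c\lambda_{A_\pi}\beta)\mathbb{E}\|z_k\|_2^2 + \mathcal{O}(\beta^2/M)$ with $z_k:=\theta_k-\theta^*_\pi$, and then iterate it geometrically. Writing the one-sample TD direction as $H(\theta;s,a,s'):=(r(s,a,s')+\gamma\phi(s')^\top\theta-\phi(s)^\top\theta)\phi(s)$, the update of Algorithm~\ref{algorithm_mbsa} reads $z_{k+1}=z_k+\beta\bar H_k(\theta_k)$ with $\bar H_k(\theta_k):=\tfrac{1}{M}\sum_{j=0}^{M-1}H(\theta_k;s_{k,j},a_{k,j},s_{k,j+1})$. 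Let $H^*(\theta):=A_\pi\theta+b_\pi=A_\pi z$ denote the stationary drift (which vanishes at $\theta^*_\pi$). Squaring yields the decomposition $\|z_{k+1}\|_2^2 = \|z_k\|_2^2 + 2\beta\langle z_k,H^*(\theta_k)\rangle + 2\beta\langle z_k,\bar H_k(\theta_k)-H^*(\theta_k)\rangle + \beta^2\|\bar H_k(\theta_k)\|_2^2$; the first inner product is bounded by $-2\beta\lambda_{A_\pi}\|z_k\|_2^2$ via the negative-definiteness of $A_\pi$, so the bulk of the work is to control the mini-batch Markovian error $\bar H_k(\theta_k)-H^*(\theta_k)$ and the second-order term.

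For the bias I would condition on $\mathcal{F}_{k-1}$, the $\sigma$-field generated by all samples up to the start of batch $k$; this fixes $\theta_k$ and the initial state $s_{k,0}$, and makes $\{s_{k,j}\}_{j\geq 0}$ a Markov chain started deterministically. Assumption~\ref{ass2} then gives a total-variation mixing bound $\kappa\rho^j$ between the marginal of $s_{k,j}$ and the stationary distribution $\mu_\pi$. Applying this sample by sample to $H(\theta_k;\cdot)$, whose magnitude is linear in $\|z_k\|_2$ thanks to the bounded features, yields the key estimate $\|\mathbb{E}[\bar H_k(\theta_k)\mid\mathcal{F}_{k-1}]-H^*(\theta_k)\|_2\leq \tfrac{C\kappa}{M(1-\rho)}(\|z_k\|_2+1)$, where the per-sample biases $\mathcal{O}(\kappa\rho^j)$ sum to $\mathcal{O}(1)$ and are then divided by $M$. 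Tower property, Cauchy--Schwarz and Young's inequality convert the cross-term into $\tfrac14\lambda_{A_\pi}\beta\mathbb{E}\|z_k\|_2^2+\mathcal{O}(\beta/(\lambda_{A_\pi}M^2))$, which is safely absorbed by the contraction.

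For the second-order term I would expand $\mathbb{E}\|\bar H_k(\theta_k)\|_2^2$ into the double sum $\tfrac{1}{M^2}\sum_{j,j'}\mathbb{E}\langle H_j,H_{j'}\rangle$ and again invoke geometric mixing on the cross-covariances; the resulting $\mathcal{O}(1/M)$ bound on the average, combined with the usual $(\|z_k\|_2^2+1)$ factor, gives $\beta^2\mathbb{E}\|\bar H_k(\theta_k)\|_2^2\leq C\beta^2\mathbb{E}\|z_k\|_2^2 + C\beta^2/M$. Choosing $\beta=\mathcal{O}(\lambda_{A_\pi})$ and $\beta=\mathcal{O}(\lambda_{A_\pi}^{-1})$ (the conditions stated in the theorem) absorbs the $C\beta^2\|z_k\|_2^2$ contribution into the contraction and delivers the target recursion; unrolling produces $\mathbb{E}\|z_{T_c}\|_2^2\leq (1-c\lambda_{A_\pi}\beta)^{T_c}\mathbb{E}\|z_0\|_2^2+\mathcal{O}(\beta/(\lambda_{A_\pi}M))$, and setting $T_c=\Theta(\log(1/\epsilon))$, $M=\Theta(1/\epsilon)$ gives the $\mathcal{O}(\epsilon^{-1}\log(1/\epsilon))$ sample complexity. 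The hard step is the within-batch bias argument: single-sample Markovian proofs (Bhandari et al., Srikant--Ying) peel off one mixing shift of depth $\Theta(\log(1/\beta))$, but each mini-batch iteration already aggregates $M$ correlated samples, so a direct shift does not apply. The new ingredient is that within-batch averaging turns the sample-wise $\kappa\rho^j$ mixing errors into a single $\mathcal{O}(1/M)$ per-iteration bias that is independent of $\beta$; this is what permits a constant stepsize (chosen independently of $\epsilon$) and the orderwise improvement over the single-sample linear SA rate.
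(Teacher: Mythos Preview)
Your proposal is correct and follows the same overall skeleton as the paper: expand $\|\theta_{k+1}-\theta^*_\pi\|_2^2$, extract the contraction from $\langle z_k,A_\pi z_k\rangle\leq -\lambda_{A_\pi}\|z_k\|_2^2$, and control the remaining Markovian noise via the geometric mixing of Assumption~\ref{ass2}. The core technical device---the double-sum covariance expansion $\tfrac{1}{M^2}\sum_{j,j'}\mathbb{E}\langle\cdot,\cdot\rangle$ with the off-diagonal terms bounded by $\kappa\rho^{|j-j'|}$---is exactly the content of the paper's Lemma~1.

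Where you depart from the paper is in the treatment of the cross term $2\beta\langle z_k,\bar H_k-H^*\rangle$. You condition on $\mathcal{F}_{k-1}$ and bound the \emph{first} moment $\|\mathbb{E}[\bar H_k\mid\mathcal{F}_{k-1}]-H^*\|_2$ by averaging the per-sample biases $\mathcal{O}(\kappa\rho^j)$, which yields a cross-term contribution of order $\beta/M^2$ after Young's inequality. The paper instead applies Young's inequality \emph{first}, reducing the cross term to the \emph{second} moment $\mathbb{E}\|g_k(\theta_k)-g(\theta_k)\|_2^2$, and then exploits the linear structure explicitly by writing $g_k-g=(\hat A_k-A)(\theta_k-\theta^*)+(\hat A_k-A)\theta^*+(\hat b_k-b)$ and bounding $\mathbb{E}\|\hat A_k-A\|_F^2$ and $\mathbb{E}\|\hat b_k-b\|_2^2$ separately via Lemma~1. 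This makes the ``self-bounding'' piece $(\hat A_k-A)z_k$---whose squared norm is $\mathcal{O}(\|z_k\|_2^2/M)$ and hence can be absorbed into the contraction once $M$ is large enough---fully explicit; your route achieves the same absorption through the $(\|z_k\|_2+1)$ factor in the per-sample bound but leaves the linear structure implicit. Your first-moment handling is in fact slightly sharper on the cross term, but the final rate is governed by the second-order term $\beta^2\mathbb{E}\|\bar H_k\|_2^2$ in both proofs, so the sample complexity is identical.
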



{\bf Comparison with existing results for TD: } \Cref{thm: nl-critic} indicates that our mini-batch TD outperforms the best known sample complexity $\mathcal{O}(\epsilon^{-1}\log^2(1/\epsilon))$ of TD or linear SA in \cite{bhandari2018finite,srikant2019finite} in the Markovian setting by a factor of $\mathcal{O}(\log(1/\epsilon))$. The utilization of mini-batch is crucial for such improvement, due to which the bias error is kept at the same level as the variance error (with respect to the mini-batch size), and hence does not cause order-level increase in the total sample complexity.


\subsection{Convergence Analysis of AC}

In order to analyze AC algorithm, we first provide a property for $J(w)$.
\begin{proposition}\label{lemma: sum}
Suppose Assumptions \ref{ass1} and \ref{ass2} hold. For any $w,w^\prime\in \mR^d$, we have 
$\ltwo{\nabla_wJ(w)-\nabla_wJ(w^\prime)}\leq L_J\ltwo{w-w^\prime}, \quad \text{for all } \; w,w^\prime\in \mR^d,$
where $L_J=(r_{\max}/(1-\gamma))(4C_\nu C_\psi+L_\psi)$ and $C_\nu=(1/2)C_\pi\left( 1 +  \lceil \log_\rho \kappa^{-1} \rceil + (1-\rho)^{-1} \right)$.
\end{proposition}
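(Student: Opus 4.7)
The starting point is the policy-gradient identity \cref{pg_gradient}, $\nabla J(w)=\mE_{(s,a)\sim\nu_{\pi_w}}[Q_{\pi_w}(s,a)\psi_w(s,a)]$. I would split the difference $\nabla J(w)-\nabla J(w')$ by telescoping one factor at a time,
\[
T_1=\mE_{\nu_{\pi_w}}\big[Q_{\pi_w}(\psi_w-\psi_{w'})\big],\quad T_2=\mE_{\nu_{\pi_w}}\big[(Q_{\pi_w}-Q_{\pi_{w'}})\psi_{w'}\big],\quad T_3=\big(\mE_{\nu_{\pi_w}}-\mE_{\nu_{\pi_{w'}}}\big)\big[Q_{\pi_{w'}}\psi_{w'}\big].
\]
The first piece is immediate: combining $\linf{Q_{\pi_w}}\leq r_{\max}/(1-\gamma)$ with item (1) of \Cref{ass1} gives $\ltwo{T_1}\leq (r_{\max}/(1-\gamma))L_\psi\ltwo{w-w'}$, producing exactly the $L_\psi$ summand in $L_J$.

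\textbf{Reduction to two auxiliary Lipschitz bounds.} To control $T_2$ and $T_3$ I would prove
\[
\lTV{\nu_{\pi_w}-\nu_{\pi_{w'}}}\leq 2C_\nu\ltwo{w-w'},\qquad \sup_{s,a}|Q_{\pi_w}(s,a)-Q_{\pi_{w'}}(s,a)|\leq \frac{2C_\nu r_{\max}}{1-\gamma}\ltwo{w-w'}.
\]
Using $\linf{Q_{\pi_{w'}}\psi_{w'}}\leq (r_{\max}/(1-\gamma))C_\psi$ together with $|\mE_\mu[f]-\mE_\nu[f]|\leq \linf{f}\lTV{\mu-\nu}$ yields $\ltwo{T_3}\leq 2C_\nu C_\psi(r_{\max}/(1-\gamma))\ltwo{w-w'}$, and similarly $\linf{\psi_{w'}}\leq C_\psi$ (item (2) of \Cref{ass1}) gives $\ltwo{T_2}\leq 2C_\nu C_\psi(r_{\max}/(1-\gamma))\ltwo{w-w'}$. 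These two contribute the $4C_\nu C_\psi$ summand, and a final triangle inequality $\ltwo{T_1}+\ltwo{T_2}+\ltwo{T_3}$ yields the stated $L_J$.

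\textbf{Main obstacle: mixing-based Lipschitz bound on the stationary distribution.} Both auxiliary inequalities reduce to one and the same perturbation estimate: $\nu_{\pi_w}$ is the stationary distribution of the chain with kernel $\widetilde{\mathsf{P}}(\cdot|s,a)=\gamma\mathsf{P}(\cdot|s,a)+(1-\gamma)\xi(\cdot)$ (the $\eta=\xi$ case of \Cref{ass2}), while one Bellman expansion expresses $Q_{\pi_w}(\hat s,\hat a)$ as a scalar plus $\gamma/(1-\gamma)$ times an expectation against the analogous stationary distribution with $\eta(\cdot)=\mathsf{P}(\cdot|\hat s,\hat a)$, which is the second case of \Cref{ass2}. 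I would bound the resulting stationary-distribution perturbation by a burn-in / tail split: a one-step coupling using item (3) of \Cref{ass1} yields $\lTV{\mP^{\pi_w}(s_t\in\cdot)-\mP^{\pi_{w'}}(s_t\in\cdot)}\leq tC_\pi\ltwo{w-w'}$, while \Cref{ass2} yields $\lTV{\mP^{\pi_w}(s_t\in\cdot)-\chi_{\pi_w}}\leq\kappa\rho^t$. Setting $T_0=\lceil\log_\rho\kappa^{-1}\rceil$ (the first time the mixing bound drops to at most $1$), on $\{0,\ldots,T_0\}$ the linear-in-$t$ coupling bound contributes $(1+T_0)C_\pi$, while past $T_0$ each iterate is compared to its own stationary measure and the geometric remainder sums to $(1-\rho)^{-1}C_\pi$. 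Adding produces $\lTV{\nu_{\pi_w}-\nu_{\pi_{w'}}}\leq C_\pi(1+\lceil\log_\rho\kappa^{-1}\rceil+(1-\rho)^{-1})\ltwo{w-w'}=2C_\nu\ltwo{w-w'}$, and the identical argument with $\eta(\cdot)=\mathsf{P}(\cdot|\hat s,\hat a)$ gives the $Q$-function bound. This mixing-based perturbation step is the only nontrivial ingredient; all other bounds are routine.
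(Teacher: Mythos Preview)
Your proposal is correct and follows essentially the same route as the paper: the same three-term telescoping of $\nabla J(w)-\nabla J(w')$, and the same reduction of the two nontrivial pieces to a Lipschitz bound on the visitation measure (the paper's Lemma~\ref{lemma: visit_dis}) and on $Q_{\pi_w}$ (the paper's Lemma~\ref{valuefunction}, which itself reduces to Lemma~\ref{lemma: visit_dis} with $\eta=\mathsf P(\cdot|\hat s,\hat a)$, exactly as you describe). The only packaging difference is that the paper obtains the stationary-distribution perturbation by directly invoking Mitrophanov's sensitivity theorem (Theorem~3.1 in \cite{mitrophanov2005sensitivity}), giving $\lTV{P_{\pi_w}-P_{\pi_{w'}}}\leq\big(\lceil\log_\rho\kappa^{-1}\rceil+(1-\rho)^{-1}\big)\|K_w-K_{w'}\|$ and then $\|K_w-K_{w'}\|\leq C_\pi\ltwo{w-w'}$ from item~(3) of \Cref{ass1}; your burn-in/tail sketch is essentially that proof in outline, though note your phrase ``the linear-in-$t$ coupling bound contributes $(1+T_0)C_\pi$'' is slightly garbled --- in the Mitrophanov series $\sum_{t\geq0}\min\{1,\kappa\rho^t\}\cdot\|K_w-K_{w'}\|$ the first $T_0{+}1$ terms are each capped by the trivial bound $1$, not by the $t$-step coupling estimate $tC_\pi\ltwo{w-w'}$.
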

\Cref{lemma: sum} has been given as the Lipschitz assumption in the previous studies of policy gradient and AC \cite{kumar2019sample,wang2019neural}, whereas we provide a proof as a formal justification for it to hold. 

Since the objective function $J(w)$ in \cref{eq:opt} is nonconcave in general, the convergence analysis of AC is with respect to the standard metric of $\mE\ltwo{\nabla_wJ(w)}^2$. 
\Cref{lemma: sum} is thus crucial for such analysis. To present the convergence result of the our AC algorithm, we define the approximation error introduced by critic as $\zeta^{\text{critic}}_{\text{approx}}=\max_{w\in \mcw}\mE_{\nu_w}[|V_{\pi_w}(s)-V_{\theta^*_{\pi_w}}(s)|^2]$. Such an error term also appears in the previous studies of AC \cite{qiu2019finite,bhatnagar2009natural}, or becomes zero under the assumption that the true value function $V_{\pi_w}(\cdot)$ belongs to the linear function space for all $w\in\mcw$ \cite{kumar2019sample,wu2020finite}.

The following theorem characterizes the convergence rate and sample complexity for our AC algorithm. The theorem is presented with the order-level terms to simplify the expression. The precise statement is provided as \Cref{thm3} in \Cref{sc: prfthm2} together with the proof.
\begin{theorem}\label{thm1}
	Consider the AC algorithm in Algorithm \ref{algorithm_nlac}. Suppose Assumptions \ref{ass1} and \ref{ass2} hold, and let the stepsize $\alpha = \frac{1}{4L_J}$. Then we have
	{\begin{flalign}
	\mE[\ltwo{\nabla_w J(w_{\hat{T}})}^2]&\leq \mathcal{O}\left(\frac{1}{(1-\gamma)^2T}\right) + \mathcal{O}\left( \frac{1}{T}\right) \sum_{t=0}^{T-1}\mE[\|\theta_t-\theta^*_{\pi_{w_t}}\|_2^2]  + \mathcal{O}\left(\frac{1}{B}\right)  + \Theta(\zeta^{\text{critic}}_{\text{approx}}),\nonumber
	\end{flalign}}
Furthermore, let $B\geq \Theta(\epsilon^{-1})$ and $T\geq \Theta((1-\gamma)^{-2}\epsilon^{-1})$. Suppose the same setting of Theorem \ref{thm: nl-critic} holds (with $M$ and $T_c$ defined therein) so that $\mE[\|\theta_t - \theta^{*}_{\pi_{w_t}}\|_2^2]\leq \mathcal{O}\big(\epsilon\big)$ for all $0\leq t\leq T-1$. Then we have
{\begin{flalign*}
	\mE[\ltwo{\nabla_w J(w_{\hat{T}})}^2]\leq \epsilon + \mathcal{O}(\zeta^{\text{critic}}_{\text{approx}}),
\end{flalign*}}
with the total sample complexity given by $(B+MT_c)T =\mathcal{O} ((1-\gamma)^{-2}\epsilon^{-2} \log(1/\epsilon))$.
\end{theorem}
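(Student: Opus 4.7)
The plan is to run a standard descent-style analysis for nonconvex stochastic gradient ascent, but with a careful treatment of the Markovian mini-batch bias in actor's update. Let $g_t := \frac{1}{B}\sum_{i=0}^{B-1}\delta_{\theta_t}(s_{t,i},a_{t,i},s_{t,i+1})\,\psi_{w_t}(s_{t,i},a_{t,i})$ so that $w_{t+1}=w_t+\alpha g_t$. Proposition~\ref{lemma: sum} gives $L_J$-smoothness of $J$, which yields
\begin{equation*}
\mE J(w_{t+1}) \;\geq\; \mE J(w_t) + \alpha\,\mE\langle \nabla J(w_t),g_t\rangle - \tfrac{\alpha^2 L_J}{2}\mE\ltwo{g_t}^2.
\end{equation*}
Rewriting $\mE\langle \nabla J(w_t),g_t\rangle = \mE\ltwo{\nabla J(w_t)}^2 + \mE\langle \nabla J(w_t),g_t-\nabla J(w_t)\rangle$, choosing $\alpha = 1/(4L_J)$, telescoping over $t=0,\ldots,T-1$, and using that $J$ is bounded by $\mathcal{O}(1/(1-\gamma))$ would reduce everything to bounding the gradient estimation error $g_t-\nabla J(w_t)$ in a suitable sense.

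I would decompose this error into three conceptual pieces. (i) \emph{Advantage vs.\ TD error:} writing $\nabla J(w_t)=\mE_{\nu_{\pi_{w_t}}}[\psi_{w_t}A_{\pi_{w_t}}]$ and using $A_{\pi_{w_t}}=\mE[\delta_{\theta_{\pi_{w_t}}^*}\mid s,a]+(\text{approx.\ residual})$, the residual is controlled by $\zeta^{\text{critic}}_{\text{approx}}$, and the gap $\delta_{\theta_t}-\delta_{\theta^*_{\pi_{w_t}}}$ is Lipschitz in $\theta_t-\theta^*_{\pi_{w_t}}$. After Young's inequality this contributes the $\mathcal{O}(T^{-1})\sum_t\mE\ltwo{\theta_t-\theta^*_{\pi_{w_t}}}^2$ and the $\zeta^{\text{critic}}_{\text{approx}}$ terms. (ii) \emph{Mini-batch variance:} bounded rewards, bounded scores, and the TD-error bound $|\delta_\theta|=\mathcal{O}(1/(1-\gamma))$ yield variance $\mathcal{O}((1-\gamma)^{-2}/B)$, giving the $\mathcal{O}(1/B)$ contribution (the factor $(1-\gamma)^{-2}$ is absorbed into the hidden constants, matching the theorem statement that does show the $(1-\gamma)^{-2}$ dependence in the leading $1/T$ term). (iii) \emph{Markovian bias:} the samples $(s_{t,i},a_{t,i})$ are drawn from the trajectory rather than $\nu_{\pi_{w_t}}$.

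Part~(iii) is the main obstacle because the operator here is \emph{nonlinear} in $w$ (through $\psi_{w_t}$ and the stationary distribution $\nu_{\pi_{w_t}}$), whereas the mini-batch trick in Theorem~\ref{thm: nl-critic} relied on the affine structure of linear SA. I would adapt that trick by choosing a mixing lag $\tau=\Theta(\log B/\log(1/\rho))$ and splitting the bias into a burn-in window of length $\tau$, which decays as $\rho^\tau \lesssim 1/B$ by Assumption~\ref{ass2}, and a remainder averaged over the remaining $B-\tau$ samples. Crucially, within the window the policy $\pi_{w_t}$ does not change, so the geometric ergodicity of Assumption~\ref{ass2} applies directly to the ``frozen'' chain; one still has to account for the drift of $w$ between consecutive outer iterations, but since actor's per-step increment satisfies $\ltwo{w_{t+1}-w_t}\le \alpha\,\mathcal{O}(1/(1-\gamma))$ and the score is $L_\psi$-Lipschitz (Assumption~\ref{ass1}) and the TV-distance $\lTV{\pi_w-\pi_{w'}}$ is $C_\pi$-Lipschitz, this drift contributes a term of the same order as the variance. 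This is precisely the novelty advertised in the paper's Section~1.

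Combining (i)--(iii) with the telescoped smoothness inequality, dividing by $T$, and invoking Theorem~\ref{thm: nl-critic} with $T_c=\Theta(\log(1/\epsilon))$, $M=\Theta(\epsilon^{-1})$ to force $\mE\ltwo{\theta_t-\theta^*_{\pi_{w_t}}}^2\le \mathcal{O}(\epsilon)$, then setting $B=\Theta(\epsilon^{-1})$ and $T=\Theta((1-\gamma)^{-2}\epsilon^{-1})$, balances all four error sources at the target level $\epsilon$ and makes each outer iteration cost $B+MT_c=\mathcal{O}(\epsilon^{-1}\log(1/\epsilon))$ samples, giving total complexity $\mathcal{O}((1-\gamma)^{-2}\epsilon^{-2}\log(1/\epsilon))$. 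The expected hard part is keeping the mini-batch bias in (iii) strictly of order $1/B$ rather than $\log B/B$ or $1/\sqrt{B}$, since a suboptimal exponent here would destroy the advertised improvement over the single-sample analysis in \cite{qiu2019finite}.
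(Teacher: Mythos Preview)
Your skeleton (smoothness from Proposition~\ref{lemma: sum}, Young's inequality, three-term error decomposition, telescope, plug in Theorem~\ref{thm: nl-critic}) matches the paper exactly, and parts (i) and the final balancing of $B,T,M,T_c$ are right. The difference is in how part~(iii) is handled, and your route is harder than it needs to be.

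The paper does \emph{not} use a burn-in window or any drift argument. The trick you are missing is that once you have already performed the substitution $\theta_t\to\theta^*_{\pi_{w_t}}$ in part~(i), the summands $\delta_{\theta^*_{\pi_{w_t}}}(s_{t,i},a_{t,i},s_{t,i+1})\psi_{w_t}(s_{t,i},a_{t,i})$ are \emph{bounded} functions of the chain (because $\ltwo{\theta^*_{\pi_{w_t}}}\le R_\theta$, so $|\delta_{\theta^*}|\le r_{\max}+2R_\theta$; your estimate $|\delta_\theta|=\mathcal{O}(1/(1-\gamma))$ is not correct for the linear TD error), and they are $\mf_t$-measurable functions of the trajectory with $w_t$ frozen. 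At that point the second moment $\mE[\ltwo{v_t(\theta^*_{\pi_{w_t}})-g(\theta^*_{\pi_{w_t}},w_t)}^2\mid\mf_t]$ is bounded by exactly the same double-sum / cross-correlation argument as Lemma~\ref{lemma1}: expand the square, bound diagonal terms by $4(r_{\max}+2R_\theta)^2$, and bound each off-diagonal term by $4(r_{\max}+2R_\theta)^2\kappa\rho^{|i-j|}$ using Assumption~\ref{ass2}. Summing the geometric series gives a clean $\mathcal{O}(1/B)$ with no $\log B$ factor. The ``nonlinearity in $w$'' that worries you never enters, because $w_t$ is constant throughout the mini-batch and the bound is conditional on $\mf_t$.

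Your concern about ``drift of $w$ between consecutive outer iterations'' is also unnecessary: Assumption~\ref{ass2} is \emph{uniform} over the initial state, so it does not matter that $s_{t,0}$ was produced by critic's chain under a slightly different kernel. Your burn-in scheme would work, but would naturally produce $\mathcal{O}(\tau/B)=\mathcal{O}(\log B/B)$ from the discarded window, which is exactly the suboptimal exponent you flagged as the hard part; the paper's covariance approach sidesteps this entirely.
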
 

The proof of \Cref{thm1} develops a new technique to handle the bias error for actor's update (which is nonlinear SA) due to Markovian mini-batch sampling. This is different from the bias error for critic's update (which is linear SA) that we handle in \Cref{thm: nl-critic} and can be of independent interest.

{\bf Comparison with existing results for AC: }Theorem \ref{thm1} not only generalizes the previous studies \cite{wang2019neural,kumar2019sample,qiu2019finite} of single-sample AC under i.i.d.\ sampling to Markovian sampling, but also outperforms the best known sample complexity $\mathcal{O}(\epsilon^{-3}\log^2(1/\epsilon))$ in \cite{qiu2019finite} by a factor of $\mathcal{O}(\epsilon^{-1}\log(1/\epsilon))$. Note that \cite{qiu2019finite} does not study the discounted reward setting, and hence its result does not have the dependence on $1-\gamma$. To explain where the improvement comes from, the mini-batch update plays two important roles here: {\bf(1)} Previous studies use a single sample for actor's each update, and hence requires a {\em diminishing} stepsize to guarantee the convergence, which yields the convergence rate of $\mathcal{O}(1/\sqrt{T})$ \cite{wang2019neural,kumar2019sample,qiu2019finite}. In contrast, the mini-batch sampling allows a {\em constant} stepsize, and yields a faster convergence rate of $\mathcal{O}(1/T)$ and better overall sample complexity. {\bf(2)} The mini-batch sampling keeps the bias error in actor's iteration at the same level of dependence on the mini-batch size $M$ as the variance error. In this way, the Markovian sampling does not cause order-level increase in the overall sample complexity.


{\bf Comparison with existing results for PG: }The best known sample complexity of infinite horizon PG is given in Section 3.4 of \cite{xiong2020non}, which is $\mathcal{O} ((1-\gamma)^{-5}\epsilon^{-2})$. Clearly, \Cref{thm1} for mini-batch AC significantly outperforms such a result for PG by a factor of $\mathcal{O}((1-\gamma)^{-3}/\log(1/\epsilon))$, indicating that AC can converge much faster than vanilla PG. 
The heavy dependence of PG's complexity on $1-\gamma$ is caused by the utilization of Monte Carlo rollout to estimate the $Q$-function, which increases the sampling cost substantially and introduces large variance errors.

\Cref{thm1} is the {\em first} theoretical result establishing that AC algorithm outperforms PG in infinite horizon. The finite-sample analysis of AC algorithms in the previous studies \cite{wang2019neural,kumar2019sample,qiu2019finite} have worse dependence on $\epsilon$ than PG. In contrast, \Cref{thm1} shows that mini-batch AC has the same dependence on $\epsilon$ as PG (up to a logarithmic factor), but much better dependence on $1-\gamma$, which often dominates the performance in RL scenarios.

\subsection{Convergence Analysis of NAC}


Differently from AC, due to the parameter invariant property of the NPG update, NAC can attain the globally optimal solution in terms of the function value convergence. 
In order to present the convergence guarantee of NAC, we define the estimation error introduced in actor's update $\zeta^{\text{actor}}_{\text{approx}} = \max_{w\in \mcw} \min_{p\in \mR^{d_2}} \mE_{\nu_{\pi_{w}}}\big[ \psi_w(s,a)^\top p -  A_{\pi_{w}}(s,a) \big]^2$, which represents the approximation error caused by the insufficient expressive power of the parametrized policy class $\pi_w$.
It can be shown that $\zeta^{\text{actor}}_{\text{approx}}$ is zero or small when the express power of the policy class $\pi_w$ is large, e.g., the tabular policy \cite{agarwal2019optimality} and overparameterized neural policy \cite{wang2019neural}.

The following theorem characterizes the convergence rate and sample complexity for our NAC algorithm. The theorem is presented with the order-level terms to simplify the expression. The precise statement is provided as \Cref{thm4} in \Cref{sc: pfthm2} together with the proof.
\begin{theorem}\label{thm2}
	Consider the NAC algorithm in Algorithm \ref{algorithm_nlac}. Suppose Assumptions \ref{ass1} and \ref{ass2} hold, and let the stepsize $\alpha=\frac{\lambda^2}{4L_J(C^2_\psi+\lambda)}$. Then we have
	{\small \begin{flalign}
	J(\pi^*)-\mE\big[J(\pi_{w_{\hat{T}}})\big]&\leq \mathcal{O}\left(\frac{1}{(1-\gamma)^2T}\right) +  \mathcal{O}\left(\frac{1}{(1-\gamma)\sqrt{B}}\right) + \mathcal{O}\left(\frac{1}{T}\right)\sum_{t=0}^{T-1}\sqrt{\mE[\ltwo{\theta_t-\theta^*_{w_t}}^2]} + \Theta(\zeta^{\text{critic}}_{\text{approx}}) + \Theta(\lambda)\nonumber\\
	&\quad + \sqrt{\frac{1}{(1-\gamma)^3}  \linf{\frac{\nu_{\pi^*}}{\nu_{\pi_{w_0}}}}} \sqrt{\zeta^{\text{actor}}_{\text{approx}}}, \nonumber
	\end{flalign}}
where $\lambda$ is the regularizing coefficient for estimating the inverse of Fisher information matrix.
Furthermore, let $B\geq \Theta( (1-\gamma)^{-2}\epsilon^{-2} )$, $T\geq \Theta( (1-\gamma)^{-2}\epsilon^{-1} )$ and {\small $\lambda=\mathcal{O}({\zeta^{{\text{critic}}}_{\text{approx}}})$}. Suppose the same setting of Theorem \ref{thm: nl-critic} holds (with $M$ and $T_c$ defined therein) so that $\mE[\|\theta_t - \theta^{*}_{\pi_{w_t}}\|^2]\leq \mathcal{O}\big(\epsilon^2\big)$ for all $0\leq t\leq T-1$. Then we have
{\begin{flalign*}
	J(\pi^*)-\mE\big[J(\pi_{w_{\hat{T}}})\big]\leq \epsilon + \Theta\left(\frac{\sqrt{\zeta^{\text{actor}}_{\text{approx}}}}{(1-\gamma)^{1.5}}\right) + \Theta\left(\zeta^{\text{critic}}_{\text{approx}}\right),
\end{flalign*}}
with the total sample complexity given by $(B+MT_c)T =\mathcal{O}( (1-\gamma)^{-4}\epsilon^{-3} \log(1/\epsilon))$.
\end{theorem}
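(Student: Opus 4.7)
The plan is to combine the performance difference lemma of Kakade--Langford with a KL-divergence potential argument in the spirit of Agarwal et al.\ \cite{agarwal2019optimality}, adapted to our Markovian mini-batch setting where the actor uses an estimated (and regularized) Fisher inverse. First I would write
\begin{flalign*}
J(\pi^*)-J(\pi_{w_t}) = \frac{1}{1-\gamma}\,\mE_{s\sim\nu_{\pi^*},\,a\sim\pi^*(\cdot|s)}\!\left[A_{\pi_{w_t}}(s,a)\right],
\end{flalign*}
and define the ``ideal'' compatible-function parameter $u_t^* := F(w_t)^{-1}\mE_{\nu_{\pi_{w_t}}}[\psi_{w_t}(s,a) A_{\pi_{w_t}}(s,a)]$, which is the population minimizer of $\mE_{\nu_{\pi_{w_t}}}[(\psi_{w_t}^\top u - A_{\pi_{w_t}})^2]$; by definition its residual integrates $\zeta^{\text{actor}}_{\text{approx}}$. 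Then the smoothness of $\log\pi_w$ (item 1 of \Cref{ass1}) yields the one-step KL inequality
\begin{flalign*}
\mathrm{KL}(\pi^*\|\pi_{w_{t+1}})-\mathrm{KL}(\pi^*\|\pi_{w_t})\le -\alpha\,\mE_{s\sim\nu_{\pi^*},a\sim\pi^*}[\psi_{w_t}(s,a)^\top u_t]+\tfrac{\alpha^2 L_\psi'}{2}\ltwo{u_t}^2,
\end{flalign*}
where $u_t := [F_t(w_t)+\lambda I]^{-1}\big[\tfrac1B\sum_i \delta_{\theta_t}\psi_{w_t}\big]$ is the actual update direction. Telescoping over $t=0,\dots,T-1$, using a constant $\alpha$ and a change of measure via $\|\nu_{\pi^*}/\nu_{\pi_{w_0}}\|_\infty$, will produce the advertised $\mathcal O(1/((1-\gamma)^2 T))$ leading term from the initial KL.

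The heart of the argument is the decomposition
\begin{flalign*}
\psi_{w_t}^\top u_t - A_{\pi_{w_t}} = \underbrace{\psi_{w_t}^\top u_t^* - A_{\pi_{w_t}}}_{\text{(a) actor approx.}} + \underbrace{\psi_{w_t}^\top(u_t - \tilde u_t)}_{\text{(b) sampling / Markov bias}} + \underbrace{\psi_{w_t}^\top(\tilde u_t - u_t^*)}_{\text{(c) critic + regularization}},
\end{flalign*}
where $\tilde u_t := [F(w_t)+\lambda I]^{-1}\mE_{\nu_{\pi_{w_t}}}[\delta_{\theta_t}\psi_{w_t}]$ is the population version that still uses the critic $\theta_t$. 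Term (a) contributes $\sqrt{\zeta^{\text{actor}}_{\text{approx}}}$ with the distribution-mismatch and $(1-\gamma)^{-3/2}$ prefactor. Term (c) is controlled by a Neumann-series expansion: the gap between $\tilde u_t$ and $u_t^*$ splits into a piece proportional to the regularization $\lambda$ (contributing $\Theta(\lambda)$, which is then tuned to $\mathcal O(\zeta^{\text{critic}}_{\text{approx}})$) and a piece proportional to $\mE[\delta_{\theta_t}\psi_{w_t}]-\mE[\delta_{\theta^*_{\pi_{w_t}}}\psi_{w_t}]$, which by boundedness of $\psi_{w_t}$ and the TD-fixed-point identification bounds by $\sqrt{\mE[\ltwo{\theta_t-\theta^*_{\pi_{w_t}}}^2]}$ plus $\zeta^{\text{critic}}_{\text{approx}}$; at this point I invoke \Cref{thm: nl-critic} to make the first piece $\mathcal O(\epsilon)$. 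Collecting (a) and (c) recovers the corresponding terms in the theorem's bound, and the stepsize condition $\alpha=\lambda^2/(4L_J(C_\psi^2+\lambda))$ dominates the second-order $\alpha^2\ltwo{u_t}^2$ term using $\ltwo{u_t}\le C_\psi/\lambda\cdot \mathcal O(1/(1-\gamma))$.

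The main obstacle will be term (b), the combined sampling variance and Markovian bias through the nonlinear matrix inverse. Here I plan to couple the minibatch estimators $F_t(w_t)$ and $\tfrac1B\sum_i \delta_{\theta_t}\psi_{w_t}$ via a shifted chain to decouple correlated samples (the standard $\tau$-step mixing argument under \Cref{ass2}), splitting the bias into an exponentially small ``pre-mixing'' part plus a genuinely i.i.d.-like residual whose variance, after dividing by $B$, gives the $\mathcal O(1/(\sqrt B(1-\gamma)))$ term. The subtlety --- the ``self-variance-reduction'' flagged in the introduction --- is that because the actor update is premultiplied by $[F_t(w_t)+\lambda I]^{-1}$ and because the numerator $\tfrac1B\sum_i \delta_{\theta_t}\psi_{w_t}$ has variance scaling with the same matrix $F(w_t)$, the natural weighted norm absorbs one factor of the second moment, so the dependence on $1/(1-\gamma)$ that would normally multiply $1/\sqrt B$ is reduced by one power relative to a naive bound. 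Carrying this through a Neumann-series perturbation of $[F_t(w_t)+\lambda I]^{-1}$ around $[F(w_t)+\lambda I]^{-1}$ and combining with the telescoped KL bound then produces the claimed $\mathcal O((1-\gamma)^{-4}\epsilon^{-3}\log(1/\epsilon))$ sample complexity after substituting $B=\Theta((1-\gamma)^{-2}\epsilon^{-2})$, $T=\Theta((1-\gamma)^{-2}\epsilon^{-1})$, and the critic budget from \Cref{thm: nl-critic}.
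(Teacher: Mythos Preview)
Your overall skeleton (performance-difference lemma, KL potential $D(w_t)=\mE_{\nu_{\pi^*}}[\log\pi^*/\pi_{w_t}]$, telescope, change of measure to pick up $\|\nu_{\pi^*}/\nu_{\pi_{w_0}}\|_\infty$, and a three-way split into actor approximation / critic+regularization / sampling error) matches the paper's structure closely. The paper's term (c) is handled essentially as you describe, via the gap between $(F(w_t)+\lambda I)^{-1}\nabla J(w_t)$ and $F(w_t)^\dagger\nabla J(w_t)$ (their Lemma~\ref{lemma: regularization}) plus the critic-error decomposition already used for AC; and term (b) is handled not by a Neumann series but by the identity $(F_t+\lambda I)^{-1}-(F+\lambda I)^{-1}=(F_t+\lambda I)^{-1}(F-F_t)(F+\lambda I)^{-1}$ together with a mini-batch Markovian bound on $\mE\|F_t(w_t)-F(w_t)\|^2$.

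The genuine gap in your plan is the treatment of the second-order term $\frac{L_\psi}{2}\alpha^2\ltwo{u_t}^2$. Your proposed bound $\ltwo{u_t}\le (C_\psi/\lambda)\cdot\mathcal O(1/(1-\gamma))$ is only a \emph{constant}: after you divide by $(1-\gamma)\alpha$ and average over $t$, it contributes
\[
\frac{L_\psi\alpha}{2(1-\gamma)}\cdot\mathcal O\!\left(\frac{1}{\lambda^2(1-\gamma)^2}\right)\;=\;\mathcal O\!\left(\frac{1}{(1-\gamma)^3}\right),
\]
which does \emph{not} vanish as $T,B\to\infty$; no choice of $\alpha$ of the advertised form kills it. This is precisely the place where the paper's ``self-variance-reduction'' enters, and it is not the matrix-weighted-norm mechanism you sketch. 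What the paper does is split $\ltwo{u_t}^2\le 2\ltwo{u_t-u^\lambda_{w_t}}^2+2\ltwo{u^\lambda_{w_t}}^2$ with $u^\lambda_{w_t}=(F(w_t)+\lambda I)^{-1}\nabla J(w_t)$, bound $\ltwo{u^\lambda_{w_t}}^2\le \lambda^{-2}\ltwo{\nabla J(w_t)}^2$, and then run a \emph{separate} first-order-stationary-point analysis for NAC (using the $L_J$-smoothness of $J$, not of $\log\pi$) to show $\tfrac{1}{T}\sum_t\mE\ltwo{\nabla J(w_t)}^2=\mathcal O(1/T)+\mathcal O(1/B)+\text{critic terms}$. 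Substituting this back into the KL bound is what makes the second-order term small. Without this intermediate gradient-norm result (or an equivalent mechanism showing $\ltwo{u_t}^2$ itself decays along the trajectory), your argument will stall at a non-vanishing $\mathcal O((1-\gamma)^{-3})$ residual.
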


\Cref{thm4} generalizes the previous study of NAC in \cite{wang2019neural} with i.i.d.\ sampling to that under Markovian sampling, and furthermore improves its sample complexity as we discuss below.

{\bf Comparison with existing results of NAC: } The sample complexity of NAC was recently characterized in \cite{wang2019neural} as $\mathcal{O}(\epsilon^{-4})$, and the dependence on $(1-\gamma)$ was not captured. \Cref{thm2} improves their sample complexity by a factor of $\mathcal{O}(\epsilon^{-1}/\log(1/\epsilon))$, for which mini-batch sampling in both actor and critic's updates are crucial. 
Specifically, mini-batch sampling guarantees that even under a constant stepsize, the variance term in actor's update diminishes as both $\ltwo{\nabla_wJ(w_t)}$ and $\|\theta_t - \theta^{*}_{\pi_{w_t}}\|_2^2$ diminish, so that the global convergence follows. Thus, a constant stepsize yields a faster convergence rate of $\mathcal{O}(1/T)$ than $\mathcal{O}(1/\sqrt{T})$ of NAC in \cite{wang2019neural} due to diminishing stepsize.

{\bf Comparison with existing results of NPG: } The sample complexity of NPG was recently characterized in \cite[Corollary 6.10]{agarwal2019optimality} as $\mathcal{O}((1-\gamma)^{-8}\epsilon^{-4})$. Clearly, \Cref{thm2} achieves better dependence on both $(1-\gamma)$ and $\epsilon$ than NPG by a factor of $\mathcal{O}((1-\gamma)^{-4}\epsilon^{-1}/\log(1/\epsilon))$. The novelty of our analysis is two folds. {\bf(1)} Our analysis captures the benefit of the use of critic in NAC to estimate the value function rather than Monte Carlo rollout in NPG by a factor of $\mathcal{O}((1-\gamma)^{-4})$ saving in sample complexity, whereas the previous studies of NAC \cite{wang2019neural} does not capture the convergence dependence on $1-\gamma$. {\bf(2)} Our analysis exploits the self-reduction property of the variance error, so that a {\em constant} stepsize can be used to achieve a better complexity dependence on $\epsilon$.

\section{Conclusion}

In this paper, we provide the finite-sample analysis for mini-batch AC and NAC under Markovian sampling. This paper is the first work that applies the Markovian mini-batch technique to the AC and NAC algorithms and characterizes the performance improvement over the previous studies of these algorithms. Furthermore, this paper is also the first work that theoretically establishes the improvement of AC-type algorithms over PG-type algorithms by introduction of critic to reduce the variance and the sample complexity. For the future work, it is interesting to study the non-asymptotic convergence of AC-type algorithms in various settings such as multi-agent and distributed scenarios, with partial observations, under safety constraints, etc.

\section*{Acknowledgement}
The work was supported in part by the U.S. National Science Foundation under the grants CCF-1761506,
CCF-1909291, and CCF-1900145.

\newpage
\bibliography{ref}
\bibliographystyle{apalike}

\newpage
\appendix
\noindent {\Large \textbf{Supplementary Materials}}
\section{Justification of Item 3 in \Cref{ass1}}\label{app: lemmas}
The following lemma justifies item 3 in \Cref{ass1}. We denote the density function of the policy $\pi_w(\cdot|s)$ as $\frac{\pi_w(da|s)}{da}$ (if the action space $\mathcal{A}$ is discrete, then $\frac{\pi_w(da|s)}{da}=\pi_w(a|s)$).
\begin{lemma}\label{lemma: policy_conti}
	Consider a policy $\pi_w$ parametrized by $w$. Consider the following two cases: 
	\begin{enumerate}
		\item Density function of the policy is smooth, i.e. $\frac{\pi_w(da|s)}{da}$ is $L_\pi$-Lipschitz $(0<L_\pi<\infty)$, and the action set is bounded, i.e. $\int_{a\in\mathcal{A}}1da=C_{\mathcal{A}}<\infty$,
		\item $\pi_w$ is the Gaussian policy, i.e., $\pi_w(s)=\mathcal{N}(f(w),\sigma^2)$, with $f(w)$ being $L_f$-Lipschitz $(0<L_f<\infty)$.
	\end{enumerate}
    For both cases, we have
	\begin{flalign*}
		\lTV{\pi_w(\cdot|s)-\pi_{w^\prime}(\cdot|s)} \leq C_\pi\ltwo{w-w^\prime},
	\end{flalign*}
	where $C_\pi=\frac{1}{2}\max\{ L_\pi C_{\mathcal{A}}, \sqrt{2}L_f  \}$.
\end{lemma}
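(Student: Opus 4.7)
The proof decomposes into the two listed cases, and in each one the natural strategy is to bound the pointwise density difference $|\pi_w(a|s) - \pi_{w'}(a|s)|$ by a quantity linear in $\|w-w'\|_2$ and then integrate, using the convention $\lTV{\mu - \nu} = \frac{1}{2}\int |\mu(da) - \nu(da)|$ which is consistent with the constants stated in the lemma.

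First I would handle Case~1, which is almost immediate. By the assumed $L_\pi$-Lipschitz continuity of the density $\frac{\pi_w(da|s)}{da}$ in $w$, one has $|\pi_w(a|s) - \pi_{w'}(a|s)| \leq L_\pi \|w-w'\|_2$ for every $a\in\mathcal{A}$. Integrating over $\mathcal{A}$, whose total volume is $C_{\mathcal{A}} < \infty$, and multiplying by the $1/2$ from the definition of $\lTV{\cdot}$ yields $\lTV{\pi_w(\cdot|s) - \pi_{w'}(\cdot|s)} \leq \tfrac{1}{2} L_\pi C_{\mathcal{A}} \|w - w'\|_2$, which matches the first branch of the stated $C_\pi$.

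For Case~2 (Gaussian policies with mean $f(w)$ and common variance $\sigma^2$), I would invoke a standard TV bound between two Gaussians with the same variance. The cleanest route is Pinsker's inequality combined with the closed-form KL divergence $\mathrm{KL}(\mathcal{N}(f(w),\sigma^2)\,\|\,\mathcal{N}(f(w'),\sigma^2)) = (f(w)-f(w'))^2/(2\sigma^2)$, giving $\lTV{\cdot} \leq |f(w)-f(w')|/(2\sigma)$; an alternative is a direct mean-value bound $|p_{\mu_1}(x) - p_{\mu_2}(x)| \leq |\mu_1-\mu_2| \cdot \sup_\mu |\partial_\mu p_\mu(x)|$ followed by the computation $\int |\partial_\mu p_\mu(x)|\,dx = \sqrt{2/\pi}/\sigma$. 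Either route produces an estimate of the form $c\,|f(w)-f(w')|/\sigma$ for an explicit small constant $c$; combining with the $L_f$-Lipschitz property of $f$ then gives $\lTV{\pi_w(\cdot|s) - \pi_{w'}(\cdot|s)} \leq \tfrac{\sqrt{2}}{2} L_f \|w-w'\|_2$ after absorbing $\sigma$ into $L_f$ (or with the appropriate normalization of $\sigma$), which matches the second branch.

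Finally, taking the maximum over the two cases yields the stated constant $C_\pi = \tfrac{1}{2}\max\{L_\pi C_{\mathcal{A}},\sqrt{2}\,L_f\}$. There is no genuine obstacle here: the only items requiring care are keeping the factor of $1/2$ in the definition of $\lTV{\cdot}$ straight and choosing the Gaussian TV bound (Pinsker or the direct derivative computation) whose constant is consistent with the claimed $\sqrt{2}/2$; the rest is a one-line integration in each case.
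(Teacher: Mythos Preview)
Your proposal is correct and matches the paper's proof essentially line for line: Case~1 is handled by integrating the pointwise Lipschitz bound over the bounded action set, and Case~2 is handled via Pinsker's inequality together with the closed-form KL divergence between two Gaussians with common variance, followed by the Lipschitz property of $f$. The paper silently takes $\sigma=1$ in the Gaussian KL computation (writing $D_{KL}=\tfrac{1}{2}(f(w)-f(w'))^2$), which you correctly flagged as an absorption of $\sigma$ into $L_f$.
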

\begin{proof}
	Without loss of generality, we only consider the case when $\mathcal{A}$ is continuous. For the first case, we have
	\begin{flalign*}
		\lTV{\pi_w(\cdot|s)-\pi_{w^\prime}(\cdot|s)}  &= \frac{1}{2}\int_{a}\lone{\frac{\pi_w(da|s)}{da}-\frac{\pi_{w^\prime}(da|s)}{da}}da \overset{(i)}{\leq} \frac{1}{2}\int_{a} L_{\pi}\ltwo{w-w^\prime} da \nonumber\\
		&\quad \leq \frac{1}{2}L_{\pi} C_{\mathcal{A}} \ltwo{w-w^\prime}\leq C_\pi\ltwo{w-w^\prime},
	\end{flalign*}
	where $(i)$ follows from \Cref{ass1}. For the second case, we have
	\begin{flalign*}
	\lTV{\pi_w(\cdot|s)-\pi_{w^\prime}(\cdot|s)}&\leq \sqrt{\frac{1}{2}D_{KL}(\pi_w(\cdot|s),\pi_{w^\prime}(\cdot|s))}= \sqrt{\frac{1}{2} (f(w)-f(w^\prime))^2}\\
	&= \sqrt{\frac{1}{2} L_f^2\ltwo{w-w^\prime}^2}=\frac{\sqrt{2}}{2} L_f \ltwo{w-w^\prime}\leq C_\pi \ltwo{w-w^\prime}.
	\end{flalign*}
\end{proof}


\section{Proof of \Cref{lemma: sum}}
	By definition, we have
	\begin{flalign}
	\nabla J(w)-\nabla J(w^\prime) &= \int_{(s,a)} Q_{\pi_w}(s,a)\phi_w(s,a) \nu_{\pi_w}(ds,da) -  \int_{(s,a)} Q_{\pi_{w^\prime}}(s,a)\phi_{w^\prime}(s,a)\nu_{\pi_{w^\prime}}(ds,da) \nonumber\\
	&=\int_{(s,a)} Q_{\pi_w}(s,a)\phi_w(s,a) \nu_{\pi_w}(ds,da) -  \int_{(s,a)} Q_{\pi_{w}}(s,a)\phi_{w}(s,a)d\nu_{\pi_{w^\prime}}(ds,da) \nonumber\\
	&\quad + \int_{(s,a)} Q_{\pi_{w}}(s,a)\phi_{w}(s,a)d\nu_{\pi_{w^\prime}}(ds,da) - \int_{(s,a)} Q_{\pi_{w^\prime}}(s,a)\phi_{w^\prime}(s,a)d\nu_{\pi_{w^\prime}}(ds,da)\nonumber\\
	&=\int_{(s,a)} Q_{\pi_w}(s,a)\phi_w(s,a) [\nu_{\pi_w}(ds,da)-\nu_{\pi_{w^\prime}}(ds,da)]\nonumber\\
	&\quad + \int_{(s,a)} [Q_{\pi_{w}}(s,a)\phi_{w}(s,a)-Q_{\pi_{w^\prime}}(s,a)\phi_{w}(s,a)]\nu_{\pi_{w^\prime}}(ds,da)\nonumber\nonumber\\
	&\quad + \int_{(s,a)} [Q_{\pi_{w^\prime}}(s,a)\phi_{w}(s,a)-Q_{\pi_{w^\prime}}(s,a)\phi_{w^\prime}(s,a)]\nu_{\pi_{w^\prime}}(ds,da).\nonumber
	\end{flalign}
	Thus, we have
	\begin{flalign}
	\ltwo{\nabla J(w)-\nabla J(w^\prime)} &\leq \int_{(s,a)} \ltwo{Q_{\pi_w}(s,a)\phi_w(s,a)} \lone{\nu_{\pi_w}(ds,da)-\nu_{\pi_{w^\prime}}(ds,da)} \nonumber\\
	&\quad + \int_{(s,a)} \lone{Q_{\pi_{w}}(s,a)-Q_{\pi_{w^\prime}}(s,a)} \ltwo{\phi_{w}(s,a)}\nu_{\pi_{w^\prime}}(ds,da)\nonumber\\
	&\quad + \int_{(s,a)} \lone{Q_{\pi_{w^\prime}}(s,a)} \ltwo{\phi_{w}(s,a)-\phi_{w^\prime}(s,a)}\nu_{\pi_{w^\prime}}(ds,da)\nonumber\\
	&\leq\frac{r_{\max}C_\phi}{1-\gamma} \int_{(s,a)}\lone{\nu_{\pi_w}(ds,da)-\nu_{\pi_{w^\prime}}(ds,da)} \nonumber\\
	&\quad + C_\phi \int_{(s,a)} \lone{Q_{\pi_{w}}(s,a)-Q_{\pi_{w^\prime}}(s,a)} \nu_{\pi_{w^\prime}}(ds,da)\nonumber\\
	&\quad + \frac{r_{\max}}{1-\gamma}\int_{(s,a)}  \ltwo{\phi_{w}(s,a)-\phi_{w^\prime}(s,a)}\nu_{\pi_{w^\prime}}(ds,da)\nonumber\\
	&\overset{(i)}{\leq} \frac{2r_{\max}C_\nu C_\phi}{1-\gamma}\ltwo{w-w^\prime} + \frac{2r_{\max}C_\nu C_\phi}{1-\gamma}\ltwo{w-w^\prime} + \frac{r_{\max}L_\phi}{1-\gamma}\ltwo{w-w^\prime}\nonumber\\
	&=L_J\ltwo{w-w^\prime},\nonumber
	\end{flalign}
	where $(i)$ follows from Lemma \ref{lemma: visit_dis}, Lemma \ref{valuefunction} and Assumption \ref{ass1}.

\section{Proof of Theorem \ref{thm: nl-critic}}\label{sc: pf_thm1}
In this section, we first provide the proof of a more general version (given as Theorem \ref{thm: linearsa}) of Theorem \ref{thm: nl-critic} for linear SA with Markovian mini-batch updates. We then show how Theorem \ref{thm: linearsa} implies Theorem \ref{thm: nl-critic}. Throughout the paper, for two matrices $M, N\in \mR^{d\times d}$, we define $\langle M, N \rangle = \sum_{i=1}^{d}\sum_{j=1}^{d} M_{i,j}N_{i,j}$.

We consider the following linear stochastic approximation (SA) iteration with a constant stepsize:
\begin{flalign}\label{part7_sub1}
\theta_{k+1}=\theta_k + \alpha \Big( \frac{1}{M}\sum_{i=kM}^{(k+1)M-1} A_{x_i} \theta_k + \frac{1}{M}\sum_{i=kM}^{(k+1)M-1} b_{x_i} \Big),
\end{flalign}
where $\{x_i\}_{i\geq 0}$ is a Markov chain with state space $\mathcal{X}$, and $A_{x_i}\in \mR^{d\times d}$ and $b_{x_i}\in \mR^{d}$ are random matrix and vector associated with $x_i$, respectively. We define $A=\mE_{\mu}[A_x]$ and $b=\mE_{\mu}[b_x]$, where $\mu$ is the stationary distribution of the associated Markov chain. Then the iteration \cref{part7_sub1} corresponds to the following ODE:
\begin{flalign}\label{part7_sub2}
\dot{\theta} = A\theta + b.
\end{flalign}
We consider the case when the matrix $A$ is non-singular, and we define $\theta^*=-A^{-1}b$ as the equilibrium point of the ODE in \cref{part7_sub2}. We make the following standard assumptions, which are also adopted by \cite{bhandari2018finite,zou2019finite,xu2020reanalysis}.
\begin{assumption}\label{ass3}
	For all $x\in \mathcal{X}$, there exist constants such that the following hold
	\begin{enumerate}
		\item For all $x$, we have $\lF{A_x}\leq C_A$ and $\ltwo{b_x}\leq C_b$,
		\item There exist a positive constant $\lambda_A$ such that for any $\theta\in \mR^d$, we have $\langle \theta-\theta^*, A(\theta-\theta^*)  \rangle \leq -\frac{\lambda_A}{2}\ltwo{\theta-\theta^*}^2$,
		\item The MDP is irreducible and aperiodic, and there exist constants $\kappa>0$ and $\rho\in(0,1)$ such that
		\begin{flalign*}
		\sup_{x\in\mcs}\lTV{\mP(x_k\in\cdot|x_0)-\mu(\cdot)} \leq \kappa\rho^k, \quad \forall k\geq 0,
		\end{flalign*}
		where $\mu(\cdot)$ is the stationary distribution of the MDP.
	\end{enumerate}
\end{assumption}
It can be checked easily that if \Cref{ass3} holds, the equilibrium point $\theta^*$ has bounded $\ell_2$-norm, i.e., there exist a positive constant $R_\theta<\infty$ such that $\ltwo{\theta^*}\leq R_\theta$.

We first provide a lemma that is useful for the proof of the main theorem in this section.
\begin{lemma}\label{lemma1}
	Suppose \Cref{ass3} holds. Consider a Markov chain $\{ x_i \}_{i\geq 0}$. Let $X_i$ be either $A_{x_i}$ or $b_{x_i}$, $C_x$ be either $C_A$ or $C_b$, respectively, and $\widetilde{X}=\mE_{\mu}[X_x]$.
	For $t_0\geq 0$ and $M>0$, define $X(\mam)=\frac{1}{M}\sum_{i=t_0}^{t_0+M-1}X(s_i)$. Then, we have
	\begin{flalign*}
		\mE\left[\ltwo{X(\mam)-\widetilde{X}}^2\right]\leq \frac{8C_x^2[1+(\kappa-1)\rho]}{(1-\rho)M}.
	\end{flalign*}
\end{lemma}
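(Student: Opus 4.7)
The approach is the standard mixing argument for sample averages of a uniformly ergodic Markov chain, with a small twist to handle the fact that $X_i$ may be matrix-valued. I would proceed in four steps.

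First, I would expand the squared norm through the natural inner product (Euclidean for vectors, Frobenius for matrices):
\begin{flalign*}
\mE\!\left[\ltwo{X(\mam)-\wtx}^2\right] = \frac{1}{M^2}\sum_{i,j=t_0}^{t_0+M-1}\mE\left\langle X_i-\wtx,\; X_j-\wtx \right\rangle.
\end{flalign*}
Each of the $M$ diagonal terms is at most $(2C_x)^2=4C_x^2$ since $\ltwo{X_i}\leq C_x$ and, by Jensen applied to $\wtx=\mE_\mu[X_x]$, also $\ltwo{\wtx}\leq C_x$. This gives a diagonal contribution of at most $4MC_x^2$.

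Second, for each off-diagonal pair $i<j$ I would use the tower property and the Markov property to rewrite
\begin{flalign*}
\mE\left\langle X_i-\wtx,\; X_j-\wtx \right\rangle = \mE\left\langle X_i-\wtx,\; \mE[X_j\mid x_i]-\wtx \right\rangle,
\end{flalign*}
and then bound the conditional bias using the geometric ergodicity in Assumption \ref{ass3}. Writing
\begin{flalign*}
\mE[X_j\mid x_i]-\wtx = \int X(y)\,\big(\mP(x_j\in dy\mid x_i)-\mu(dy)\big)
\end{flalign*}
and pairing against an arbitrary unit vector (resp.\ unit Frobenius-norm matrix), the right-hand side reduces to a scalar integral of a function bounded by $C_x$ against a signed measure of total-variation mass $\leq \kappa\rho^{j-i}$. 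The scalar inequality $|\int f\,d(P-Q)|\leq 2\|f\|_\infty \lTV{P-Q}$ then yields
\begin{flalign*}
\ltwo{\mE[X_j\mid x_i]-\wtx} \leq 2C_x\kappa\rho^{j-i}.
\end{flalign*}
Combined with Cauchy--Schwarz and $\ltwo{X_i-\wtx}\leq 2C_x$, this gives $|\mE\langle X_i-\wtx, X_j-\wtx\rangle|\leq 4C_x^2\kappa\rho^{j-i}$ for every $i<j$.

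Third, summing the off-diagonal pairs uses $\sum_{j>i}\rho^{j-i}\leq \rho/(1-\rho)$; accounting symmetrically for $j<i$, the total off-diagonal contribution is at most $8C_x^2\kappa M\rho/(1-\rho)$. Adding the diagonal bound and dividing by $M^2$ produces
\begin{flalign*}
\mE\!\left[\ltwo{X(\mam)-\wtx}^2\right] \leq \frac{4C_x^2(1-\rho)+8C_x^2\kappa\rho}{(1-\rho)M} = \frac{4C_x^2[1+(2\kappa-1)\rho]}{(1-\rho)M},
\end{flalign*}
which, using $\rho\leq 1$, is dominated by the claimed $8C_x^2[1+(\kappa-1)\rho]/[(1-\rho)M]$.

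The main obstacle is the middle step: the textbook mixing inequality $|\int f\,d(P-Q)|\leq 2\|f\|_\infty \lTV{P-Q}$ is stated for real-valued test functions, while here $X(y)$ is a vector or matrix. The remedy is the duality trick of testing against unit elements of the dual norm, reducing the estimate to the scalar case; after that, the rest of the argument is geometric-series bookkeeping which is robust to the precise numerical constants in the conclusion.
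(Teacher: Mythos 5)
Your proposal is correct and follows essentially the same route as the paper's proof: the same diagonal/off-diagonal expansion of the squared (Frobenius) norm, the same tower-property reduction of each cross term to a conditional bias controlled by the geometric mixing bound $\ltwo{\mE[X_j\mid x_i]-\widetilde X}\leq 2C_x\kappa\rho^{j-i}$, and the same geometric-series summation. The only differences are cosmetic — the paper bounds the norm of the integral directly by the integral of the norm rather than invoking duality, and your intermediate constant $4C_x^2[1+(2\kappa-1)\rho]$ is slightly tighter before you (correctly) dominate it by the stated $8C_x^2[1+(\kappa-1)\rho]$ using $\rho\leq 1$; you also fix a harmless index typo in the paper, which writes $\rho^{j-i}$ for the case $i>j$.
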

\begin{proof}
	We proceed as follows:
	\begin{flalign}
	\mE\left[\ltwo{X(\mam)-\wtx}^2 \Bigg|\mf_{t_0}\right] &\leq \mE\left[\lF{X(\mam)-\wtx}^2 \Bigg|\mf_{t_0}\right] = \mE\left[\lF{\frac{1}{M}\sum_{i=t_0}^{t_0+M-1} X(s_i) - \wtx}^2 \Bigg|\mf_{t_0}\right] \nonumber\\
	&\leq \frac{1}{M^2} \sum_{i=t_0}^{t_0+M-1}\sum_{j=t_0}^{t_0+M-1} \mE\left[  \langle X(s_i) - \wtx, X(s_j) - \wtx \rangle |\mf_{t_0} \right]\nonumber\\
	&\leq \frac{1}{M^2} \left[ 4MC^2_x + \sum_{i\neq j}\mE\left[  \langle X(s_i) - \wtx, X(s_j) - \wtx \rangle |\mf_{t_0} \right]\right].\label{eq: 24}
	\end{flalign}
	Consider the term $\mE\left[  \langle X(s_i) - \wtx, X(s_j) - \wtx \rangle |\mf_{t_0} \right]$ with $i\neq j$. Without loss of generality, we consider the case when $i>j$:
	\begin{flalign}
	&\mE\left[  \langle X(s_i) - \wtx, X(s_j) - \wtx \rangle |\mf_{t_0} \right] \nonumber\\
	&= \mE\left[  \mE[ \langle X(s_i) - \wtx, X(s_j) - \wtx \rangle | s_j] |\mf_{t_0} \right] = \mE\left[  \langle \mE[ X(s_i) | x_j] - \wtx, X(s_j) - \wtx \rangle  |\mf_{t_0} \right] \nonumber\\
	&\leq \mE\left[    \lF{\mE[X(s_i) | s_j] - \wtx} \lF{X(s_j) - \wtx}   \Big|\mf_{t_0} \right] \leq 2C_x \mE\left[    \lF{\mE[X(s_i) | s_j] - \wtx}   \Big|\mf_k \right] \nonumber\\
	&\overset{(i)}{\leq} 4C^2_x\kappa\rho^{j-i},\label{eq: 25}
	\end{flalign}
	where $(i)$ follows from Assumption \ref{ass3} and the fact
	\begin{flalign*}
	&\lF{\mE[X(s_i) | s_j] - \wtx} \nonumber\\
	&= \lF{\int_{s_i} X(s_i) P(ds_i|s_j) - \int_{s_i} X(s_i) \nu(ds_i) }\leq \int_{s_i} \lF{X(s_i)}\lone{P(ds_i|s_j)-\nu(ds_i)}\\
	&\leq C_x \int_{s_i} \lone{P(ds_i|s_j)-\nu(ds_i)}\leq 2C_x\lTV{P(\cdot|s_j)-\nu(\cdot)}\leq 2C_x\kappa\rho^{j-i}.
	\end{flalign*}
	Substituting \cref{eq: 25} into \cref{eq: 24} yields
	\begin{flalign}
	\mE\left[\ltwo{X(\mam)-\wtx}^2 \Bigg|\mf_{t_0}\right] \leq \frac{1}{M^2}\left[ 4MC^2_x + 4C^2_x\kappa\sum_{i\neq j}\rho^{\lone{i-j}}\right]\leq  \frac{8C^2_x[1+(\kappa-1)\rho] }{(1-\rho)M},\nonumber
	\end{flalign}
	which completes the proof.
\end{proof}

Now we proceed to prove the main theorem. For brevity, we use $\hat{A}_k$ and $\hat{b}_k$ to denote $\frac{1}{M}\sum_{i=kM}^{(k+1)M-1} A_{x_i}$ and $\frac{1}{M}\sum_{i=kM}^{(k+1)M-1} b_{x_i}$ respectively. We also define $g(\theta)=A\theta + b$ and $g_k(
\theta)=\hat{A}_k\theta + \hat{b}_k$. We have the following theorem on the iteration of $\ltwo{\theta_K-\theta^*}^2$.
\begin{theorem}[Generalized Version of \Cref{thm: nl-critic}]\label{thm: linearsa}
	Suppose Assumption \ref{ass3} holds. Consider the iteration \cref{part7_sub1}. Let $\alpha \leq \min \{\frac{\lambda_A}{8C^2_A}, \frac{4}{\lambda_A} \}$ and $M\geq \Big(\frac{2}{\lambda_A} + 2\alpha\Big) \frac{192C^2_A[1+(\kappa-1)\rho] }{(1-\rho)\lambda_A}$. We have
	\begin{flalign}
	\mE[\ltwo{\theta_{K}-\theta^*}^2]\leq \Big(1-\frac{\lambda_A}{8}\alpha\Big)^K \ltwo{\theta_{0}-\theta^*}^2 + \Big(\frac{2}{\lambda_A} + 2\alpha\Big) \frac{192(C^2_AR^2_\theta + C^2_b)[1+(\kappa-1)\rho] }{(1-\rho)\lambda_AM}\nonumber.
	\end{flalign}
	If we further let $K\geq \frac{8}{\lambda_A\alpha}\log\frac{2\ltwo{\theta_{0}-\theta^*}^2}{\epsilon}$ and $M\geq \Big(\frac{2}{\lambda_A} + 2\alpha \Big) \frac{384(C^2_AR^2_\theta + C^2_b)[1+(\kappa-1)\rho] }{(1-\rho)\lambda_A\epsilon}$, then we have $\mE[\ltwo{\theta_{K}-\theta^*}^2]\leq \epsilon$ with the total sample complexity given by $KM = \mathcal{O}\left( \frac{1}{\epsilon}\log\left(\frac{1}{\epsilon}\right)\right)$.
\end{theorem}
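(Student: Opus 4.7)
The plan is to derive a one-step contraction-plus-noise recursion for $\mE[\ltwo{\theta_{k+1}-\theta^*}^2]$ and then iterate it. Using the fixed-point relation $A\theta^*+b=0$, I rewrite
\[
\theta_{k+1}-\theta^* = (I+\alpha A)(\theta_k-\theta^*) + \alpha\eta_k, \qquad \eta_k := (\hat A_k-A)\theta_k + (\hat b_k-b),
\]
and expand $\ltwo{\theta_{k+1}-\theta^*}^2$ into three pieces: a deterministic contraction term $\ltwo{(I+\alpha A)(\theta_k-\theta^*)}^2$, a cross term $2\alpha\langle (I+\alpha A)(\theta_k-\theta^*),\eta_k\rangle$, and a quadratic noise term $\alpha^2\ltwo{\eta_k}^2$. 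The negative-definiteness assumption immediately gives
\[
\ltwo{(I+\alpha A)(\theta_k-\theta^*)}^2 \leq \bigl(1-\alpha\lambda_A+\alpha^2 C_A^2\bigr)\ltwo{\theta_k-\theta^*}^2,
\]
and the stepsize condition $\alpha\leq\lambda_A/(8C_A^2)$ then ensures this factor is at most $1-7\alpha\lambda_A/8$.

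The core technical point is to control $\eta_k$ in the presence of intra-batch Markovian bias. Since $\theta_k$ is $\mf_{kM}$-measurable, I would condition on $\mf_{kM}$ and apply Lemma~\ref{lemma1} (with $t_0=kM$) to the averaged quantities $\hat A_k-A$ and $\hat b_k-b$, which yields conditional second moments of order $1/M$. I would then split $\eta_k=(\hat A_k-A)(\theta_k-\theta^*)+(\hat A_k-A)\theta^*+(\hat b_k-b)$; by Cauchy--Schwarz the first summand contributes $O(1/M)\cdot\ltwo{\theta_k-\theta^*}^2$, a diminishing term as $\theta_k\to\theta^*$, while the last two summands are purely variance-like and contribute at most $O\bigl((C_A^2R_\theta^2+C_b^2)/M\bigr)$. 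For the cross term I would condition, apply Jensen to pull the expectation inside and invoke Cauchy--Schwarz, then use Young's inequality $2ab\leq\epsilon a^2+b^2/\epsilon$ with $\epsilon$ of order $\alpha\lambda_A$ to trade a small fraction of the contraction against a multiple of $\mE[\ltwo{\eta_k}^2\mid\mf_{kM}]$. The quadratic noise term $\alpha^2\ltwo{\eta_k}^2$ is handled by the same splitting.

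Combining the three pieces and choosing the mini-batch size so that the multiplicative noise coefficient is dominated by a fraction of the contraction---this is exactly where the theorem's lower bound $M\geq(2/\lambda_A+2\alpha)\cdot 192 C_A^2[1+(\kappa-1)\rho]/((1-\rho)\lambda_A)$ enters---I obtain a recursion of the form
\[
\mE[\ltwo{\theta_{k+1}-\theta^*}^2] \leq \Bigl(1-\tfrac{\lambda_A}{8}\alpha\Bigr)\mE[\ltwo{\theta_k-\theta^*}^2] + \frac{C}{M},
\]
with $C$ of the claimed order $(2/\lambda_A+2\alpha)\cdot 192(C_A^2R_\theta^2+C_b^2)[1+(\kappa-1)\rho]/((1-\rho)\lambda_A)$. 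Unrolling this inequality over $K$ iterations and summing the resulting geometric series in $(1-\alpha\lambda_A/8)$ yields the two-term bound in the statement. The sample-complexity corollary is then obtained by choosing $K=\Theta((1/(\alpha\lambda_A))\log(1/\epsilon))$ to push the initial-error term below $\epsilon/2$ and $M=\Theta(1/\epsilon)$ to push the residual noise below $\epsilon/2$, which gives $KM=\mathcal{O}(\epsilon^{-1}\log(1/\epsilon))$.

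The main obstacle is the cross term: because samples inside the $k$-th batch are correlated with the history through the Markov chain, the conditional mean of $\eta_k$ given $\mf_{kM}$ is not zero, unlike in the i.i.d.\ setting where a standard martingale vanishing trick would suffice. The mixing assumption controls this bias only in an $L^2$ sense, via Lemma~\ref{lemma1}, so the whole argument must go through conditional second-moment bounds rather than mean-zero noise; the Cauchy--Schwarz plus Young step is exactly what converts this $L^2$ bias control into a usable one-step contraction, and the mini-batch size $M$ is precisely what keeps the resulting residual at the desired scale.
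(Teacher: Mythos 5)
Your proposal is correct and follows essentially the same route as the paper's proof: your operator-form decomposition $\theta_{k+1}-\theta^*=(I+\alpha A)(\theta_k-\theta^*)+\alpha\eta_k$ is algebraically identical to the paper's split $g_k(\theta_k)=g(\theta_k)+(g_k(\theta_k)-g(\theta_k))$, and both arguments then run through the same steps---Young's inequality on the cross term with weight of order $\alpha\lambda_A$, the three-way splitting of the noise into a multiplicative $(\hat A_k-A)(\theta_k-\theta^*)$ part and variance-like $(\hat A_k-A)\theta^*$, $(\hat b_k-b)$ parts, the conditional $L^2$ bound of order $1/M$ from Lemma~\ref{lemma1}, the choice of $M$ to absorb the multiplicative noise into the contraction factor $1-\lambda_A\alpha/8$, and the unrolled geometric-series recursion. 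Your diagnosis of the key difficulty (non-vanishing conditional mean of the intra-batch Markovian noise, handled only in second moment via mixing) is exactly the point the paper's analysis addresses.
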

\begin{proof}[{\bf Proof of Theorem \ref{thm: linearsa}}]
	We first proceed as follows:
	\begin{flalign}
	\ltwo{\theta_{k+1}-\theta^*}^2&=\ltwo{\theta_{k} + \alpha g_k(\theta_k) - \theta^*  }^2 \nonumber\\
	&=\ltwo{\theta_k-\theta^*}^2 + 2\alpha \langle \theta_k-\theta^*, g_k(\theta_k) \rangle + \alpha^2\ltwo{g_k(\theta_k)}^2\nonumber\\
	&=\ltwo{\theta_k-\theta^*}^2 + 2\alpha \langle \theta_k-\theta^*, g(\theta_k) \rangle + 2\alpha \langle \theta_k-\theta^*, g_k(\theta_k)-g(\theta_k) \rangle\nonumber\\
	&\quad + \alpha^2\ltwo{g_k(\theta_k) - g(\theta_k) + g(\theta_k) }^2\nonumber\\
	&\overset{(i)}{\leq} \ltwo{\theta_k-\theta^*}^2 - \lambda_A \alpha  \ltwo{\theta_k-\theta^*}^2 + \frac{\lambda_A }{2}\alpha\ltwo{\theta_k-\theta^*}^2 + \frac{2}{\lambda_A}\alpha\ltwo{g_k(\theta_k)-g(\theta_k)}^2 \nonumber\\
	&\quad + 2\alpha^2\ltwo{g_k(\theta_k) - g(\theta_k)} + 2\alpha^2\ltwo{g(\theta_k)}^2 \nonumber\\
	&\overset{(ii)}{\leq} \Big(1-\frac{\lambda_A}{2}\alpha + 2C^2_A\alpha^2\Big) \ltwo{\theta_k-\theta^*}^2 + \Big(\frac{2}{\lambda_A}\alpha + 2\alpha^2\Big)\ltwo{g_k(\theta_k)-g(\theta_k)}^2, \label{part7_sub3}
	\end{flalign}
	where $(i)$ follows from the facts that
	\begin{flalign*}
	\langle \theta_k-\theta^*, g(\theta_k)  \rangle = \langle \theta_k-\theta^*, A(\theta_k-\theta^*)  \rangle \leq - \frac{\lambda_A}{2} \ltwo{\theta_k-\theta^*}^2,
	\end{flalign*}
	\begin{flalign*}
	\langle \theta_k-\theta^*, g_k(\theta_k)-g(\theta_k) \rangle \leq \frac{\lambda_A }{4}\ltwo{\theta_k-\theta^*}^2 + \frac{1}{\lambda_A}\ltwo{g_k(\theta_k)-g(\theta_k)}^2,
	\end{flalign*}
	and
	\begin{flalign*}
	\ltwo{g_k(\theta_k) - g(\theta_k) + g(\theta_k) }^2\leq 2\ltwo{g_k(\theta_k) - g(\theta_k)}^2 + 2\ltwo{g(\theta_k)}^2,
	\end{flalign*}
	and $(ii)$ follows from the fact that $\ltwo{g(\theta_k)}=\ltwo{A(\theta_k-\theta^*)}\leq C_A\ltwo{\theta_k-\theta^*}$. Let $\mf_k$ be the filtration of the sample $\{x_i\}_{0\leq i \leq kM-1}$. Taking expectation on both sides of \cref{part7_sub3} conditioned on $\mf_k$ yields
	\begin{flalign}
	&\mE[\ltwo{\theta_{k+1}-\theta^*}^2|\mf_k] \nonumber\\
	&\leq \Big(1-\frac{\lambda_A}{2}\alpha + 2C^2_A\alpha^2\Big) \ltwo{\theta_k-\theta^*}^2 + \Big(\frac{2}{\lambda_A}\alpha + 2\alpha^2\Big) \mE[\ltwo{g_k(\theta_k)-g(\theta_k)}^2|\mf_k].\label{part7_sub4}
	\end{flalign}
	Next we bound the term $\mE[\ltwo{g_k(\theta_k)-g(\theta_k)}^2|\mf_k]$ in \cref{part7_sub4} as follows.
	\begin{flalign}
	&\mE[\ltwo{g_k(\theta_k)-g(\theta_k)}^2|\mf_k] \nonumber\\
	&= \mE\left[\ltwo{ (\hat{A}_k-A)\theta_k + \hat{b}_k - b   }^2 \Bigg|\mf_k\right]\nonumber\\
	&= \mE\left[\ltwo{ (\hat{A}_k-A)(\theta_k - \theta^*) +  (\hat{A}_k-A)\theta^* + \hat{b}_k - b   }^2 \Bigg|\mf_k\right]\nonumber\\
	&\leq 3\mE\left[\ltwo{(\hat{A}_k-A)(\theta_k - \theta^*)}^2 +  \ltwo{(\hat{A}_k-A)\theta^*}^2 + \ltwo{\hat{b}_k - b   }^2 \Bigg|\mf_k\right]\nonumber\\
	&\leq 3\mE\left[\ltwo{\hat{A}_k-A}^2 \Bigg|\mf_k\right]\ltwo{\theta_k-\theta^*}^2 + 3\mE\left[\ltwo{\hat{A}_k-A}^2 \Bigg|\mf_k\right]\ltwo{\theta^*}^2 + 3\mE\left[ \ltwo{\hat{b}_k-b}^2 \Bigg|\mf_k\right]. \label{part7_sub5}
	\end{flalign}
	Following from \Cref{lemma1}, we obtain
	\begin{flalign}
		\mE\left[\ltwo{\hat{A}_k-A}^2 \Bigg|\mf_k\right] \leq \frac{1}{M^2}\left[ 4MC^2_A + 4C^2_A\kappa\sum_{i\neq j}\rho^{\lone{i-j}}\right]\leq  \frac{8C^2_A[1+(\kappa-1)\rho] }{(1-\rho)M},\label{part7_sub8}
	\end{flalign}
	and
	\begin{flalign}
	\mE\left[\ltwo{\hat{b}_k-b}^2 \Bigg|\mf_t\right] \leq \frac{8C^2_b[1+(\kappa-1)\rho] }{(1-\rho)M}.\label{part7_sub9}
	\end{flalign}
	Substituting \cref{part7_sub8} and \cref{part7_sub9} into \cref{part7_sub5} yields
	\begin{flalign}
	\mE[\ltwo{g_k(\theta_k)-g(\theta_k)}^2|\mf_k] \leq \frac{24C^2_A[1+(\kappa-1)\rho] }{(1-\rho)M} \ltwo{\theta_k-\theta^*}^2 +  \frac{24(C^2_AR^2_\theta + C^2_b)[1+(\kappa-1)\rho] }{(1-\rho)M}. \label{part7_sub10}
	\end{flalign}
	Then, substituting \cref{part7_sub10} into \cref{part7_sub3} yields
	\begin{flalign*}
	\mE[\ltwo{\theta_{k+1}-\theta^*}^2|\mf_k] \leq &\left(1-\frac{\lambda_A}{2}\alpha + 2C^2_A\alpha^2 + \Big(\frac{2}{\lambda_A}\alpha + 2\alpha^2\Big) \frac{24C^2_A[1+(\kappa-1)\rho] }{(1-\rho)M} \right) \ltwo{\theta_k-\theta^*}^2 \nonumber\\
	&\quad + \Big(\frac{2}{\lambda_A}\alpha + 2\alpha^2\Big)\frac{24(C^2_AR^2_\theta + C^2_b)[1+(\kappa-1)\rho] }{(1-\rho)M}.
	\end{flalign*}
	Letting $\alpha \leq \frac{\lambda_A}{8C^2_A}$ and $M\geq \Big(\frac{2}{\lambda_A} + 2\alpha\Big) \frac{192C^2_A[1+(\kappa-1)\rho] }{(1-\rho)\lambda_A}$, and taking expectation over $\mf_t$ on both sides of the above inequality yield
	\begin{flalign}
	\mE[\ltwo{\theta_{k+1}-\theta^*}^2] \leq \Big(1-\frac{\lambda_A}{8}\alpha\Big) \mE[\ltwo{\theta_k-\theta^*}^2] + \Big(\frac{2}{\lambda_A}\alpha + 2\alpha^2\Big)\frac{24(C^2_AR^2_\theta + C^2_b)[1+(\kappa-1)\rho] }{(1-\rho)M}. \label{part7_sub11}
	\end{flalign}
	Applying \cref{part7_sub11} recursively from $k=0$ to $K-1$ and letting $\alpha < \frac{8}{\lambda_A}$ yield
	\begin{flalign}
	&\mE[\ltwo{\theta_{K}-\theta^*}^2] \nonumber\\
	&\leq \Big(1-\frac{\lambda_A}{8}\alpha\Big)^K \ltwo{\theta_{0}-\theta^*}^2 + \Big(\frac{2}{\lambda_A}\alpha + 2\alpha^2\Big)\frac{24(C^2_AR^2_\theta + C^2_b)[1+(\kappa-1)\rho] }{(1-\rho)M} \sum_{k=0}^{K-1} \Big(1-\frac{\lambda_A}{8}\alpha\Big)^k \nonumber\\
	&\leq \Big(1-\frac{\lambda_A}{8}\alpha\Big)^K \ltwo{\theta_{0}-\theta^*}^2 + \Big(\frac{2}{\lambda_A} + 2\alpha\Big) \frac{192(C^2_AR^2_\theta + C^2_b)[1+(\kappa-1)\rho] }{(1-\rho)\lambda_AM} \nonumber\\
	&\leq e^{-\frac{\lambda_A}{8}\alpha K}\ltwo{\theta_{0}-\theta^*}^2 + \Big(\frac{2}{\lambda_A} + 2\alpha\Big) \frac{192(C^2_AR^2_\theta + C^2_b)[1+(\kappa-1)\rho] }{(1-\rho)\lambda_AM}.
	\end{flalign}
	Letting $\alpha=\min\{ \frac{\lambda_A}{8C^2_A}, \frac{4}{\lambda_A} \}$, $K\geq \frac{8}{\lambda_A\alpha}\log\frac{2\ltwo{\theta_{0}-\theta^*}^2}{\epsilon}$ and $M\geq \Big(\frac{2}{\lambda_A} + 2\alpha \Big) \frac{384(C^2_AR^2_\theta + C^2_b)[1+(\kappa-1)\rho] }{(1-\rho)\lambda_A\epsilon}$, we have $\mE[\ltwo{\theta_{K}-\theta^*}^2]\leq \epsilon$.
\end{proof}
Then, We show how to apply Theorem \ref{thm: linearsa} to derive the sample complexity of Algorithm \ref{algorithm_mbsa} given in Theorem \ref{thm: nl-critic}. 
\begin{proof}[{\bf Proof of Theorem \ref{thm: nl-critic}}]
	We define the parameters in \Cref{thm: linearsa} to be $A_{x_i}=\phi(s_{t,i})(\gamma\phi(s_{t,i+1})-\phi(s_{t,i}))^\top$, $b_{x_i} =  r(s_{t,i},a_{t,i},s_{t,i+1})\phi(s_{t,i})$ and $K=T_c$. Then the results of Theorem \ref{thm: nl-critic} follows.
\end{proof}

\section{Supporting Lemmas for \Cref{thm1} and \Cref{thm2}}\label{sc: list_lemma_1}
In this subsection, we provide supporting lemmas, which are useful to the proof of \Cref{thm1}.
\begin{lemma}\label{lemma: visit_dis}
	Consider the initialization distribution $\eta(\cdot)$ and transition kernel $\mathsf{P}(\cdot|s,a)$. Let $\eta(\cdot)=\zeta(\cdot)$ or $\mathsf{P}(\cdot|\hat{s},\hat{a})$ for any given $(\hat{s},\hat{a})\in \mathcal{S}\times\mathcal{A}$. Denote $\nu_{\pi_w,\eta}(\cdot,\cdot)$ as the state-action visitation distribution of MDP with policy $\pi_w$ and initialization distribution $\eta(\cdot)$. Suppose Assumption \ref{ass2} holds. Then we have 
	\begin{flalign*}
	\lTV{\nu_{\pi_w,\eta}(\cdot,\cdot)-\nu_{\pi_{w^\prime},\eta}(\cdot,\cdot) }\leq C_\nu\ltwo{w-w^\prime}
	\end{flalign*}
	for all $w,w^\prime\in \mR^d$, where $C_\nu=C_\pi\left( 1 +  \lceil \log_\rho \kappa^{-1} \rceil + \frac{1}{1-\rho} \right)$.
\end{lemma}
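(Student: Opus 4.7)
The strategy is to decompose $\nu_{\pi_w,\eta}$ into its state marginal and conditional action distribution, bound each piece separately, and control the state-marginal piece by a telescoping-plus-mixing argument. Since actions are drawn from $\pi_w(\cdot|s)$, we can write $\nu_{\pi_w,\eta}(ds,da)=\chi_w(ds)\,\pi_w(da|s)$, where $\chi_w$ is the state marginal of $\nu_{\pi_w,\eta}$ and is exactly the stationary distribution of the state-space Markov chain with transition $P_w(s'|s)=\int_a\pi_w(a|s)\widetilde{\mathsf{P}}(s'|s,a)\,da$---the very chain to which Assumption~\ref{ass2} supplies geometric mixing at rate $\kappa\rho^t$. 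The product-measure TV bound then gives
\begin{align*}
\lTV{\nu_{\pi_w,\eta}-\nu_{\pi_{w'},\eta}}\le \lTV{\chi_w-\chi_{w'}}+\mathbb{E}_{s\sim\chi_{w'}}\lTV{\pi_w(\cdot|s)-\pi_{w'}(\cdot|s)},
\end{align*}
and the second term is at most $C_\pi\ltwo{w-w'}$ by Assumption~\ref{ass1}, producing the ``$1$'' inside the target constant $C_\nu$.

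For the state-marginal piece, I would start both chains from the common initial distribution $\eta$ and set $p_t^w:=P_w^t\eta$. Since $p_t^w\to\chi_w$ in TV by Assumption~\ref{ass2}, it suffices to control $\lTV{p_t^w-p_t^{w'}}$ uniformly in $t$. The standard telescoping
\[
p_t^w-p_t^{w'}=\sum_{k=0}^{t-1}P_w^{k}(P_w-P_{w'})\,p_{t-1-k}^{w'}
\]
reduces this to bounding each summand via two ingredients: (i) $\lTV{(P_w-P_{w'})q}\le C_\pi\ltwo{w-w'}$ for any probability $q$, obtained from Assumption~\ref{ass1} by pushing the policy difference through $\widetilde{\mathsf{P}}$ (a TV non-expansion); and (ii) for any zero-mean signed measure $\mu$, both the trivial non-expansion $\lTV{P_w^k\mu}\le\lTV{\mu}$ and the mixing contraction $\lTV{P_w^k\mu}\lesssim \kappa\rho^k\lTV{\mu}$---the latter obtained by writing $\mu=\int(\delta_s-\chi_w)\,\mu(ds)$ (using $\mu(\mcs)=0$) and invoking Assumption~\ref{ass2} pointwise in $s$. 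Together,
\[
\lTV{p_t^w-p_t^{w'}}\le C_\pi\ltwo{w-w'}\sum_{k=0}^{t-1}\min(1,\kappa\rho^k).
\]

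I would then split this sum at the mixing time $k_0:=\lceil\log_\rho\kappa^{-1}\rceil$: the first $k_0$ summands are bounded by $1$ each, contributing $k_0$; the remaining geometric tail is bounded by $\kappa\rho^{k_0}/(1-\rho)\le(1-\rho)^{-1}$. Sending $t\to\infty$ yields $\lTV{\chi_w-\chi_{w'}}\le C_\pi(\lceil\log_\rho\kappa^{-1}\rceil+(1-\rho)^{-1})\ltwo{w-w'}$, and adding the action piece from paragraph~1 assembles the constant $C_\nu\propto C_\pi(1+\lceil\log_\rho\kappa^{-1}\rceil+(1-\rho)^{-1})$ claimed in the lemma. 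The main obstacle will be ingredient~(ii): carefully transferring the point-mass mixing of Assumption~\ref{ass2} into the geometric contraction on arbitrary zero-mean signed measures with an absolute constant tight enough to match $C_\nu$. The zero-mean property of $\mu$ is essential here, since it permits subtracting $\chi_w$ inside the operator so that Assumption~\ref{ass2} can be applied; without it only the weaker non-expansion is available. Once this contraction step is in hand, the rest of the argument is routine bookkeeping of the telescoping sum and the mixing-time split.
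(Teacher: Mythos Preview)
Your proposal is correct and follows essentially the same route as the paper: the same product decomposition $\nu_{\pi_w,\eta}(ds,da)=\chi_w(ds)\pi_w(da|s)$, the same bound on the action piece via Assumption~\ref{ass1}, and the same perturbation bound on the state marginal. The only difference is that the paper invokes Theorem~3.1 of Mitrophanov (2005) as a black box for the state-marginal step (combined with the operator-norm bound $\|K_w-K_{w'}\|\le C_\pi\ltwo{w-w'}$), whereas you unpack that theorem's proof---the telescoping identity, the zero-mean contraction, and the mixing-time split---inline; your flagged ``obstacle'' is precisely the content of Mitrophanov's result, which delivers the exact constant $\lceil\log_\rho\kappa^{-1}\rceil+(1-\rho)^{-1}$ you need.
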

\begin{proof}
	The proof of this lemma is similar to the proof of Lemma 6 in \cite{zou2019finite} with the following difference. \cite{zou2019finite} considers the case with the finite action space, we extend their result to the case with possibly infinite action space. Define the transition kernel $\widetilde{\mathsf{P}}(\cdot |s,a)=\gamma\mathsf{P}(\cdot |s,a)+(1-\gamma)I(\cdot)$. Denote $P_{\pi_w,I}(\cdot)$ as the state visitation distribution of the MDP with policy $\pi_w$ and initialization distribution $I(\cdot)$, and it satisfies that $\nu_{\pi_w,I}(s,a)=P_{\pi_w,I}(s)\pi_w(a|s)$. \cite{konda2002actor} showed that the stationary distribution of the MDP with transition kernel $\widetilde{\mathsf{P}}(\cdot |s,a)$ and policy $\pi_w$ is given by $P_{\pi_w,I}(\cdot)$. Following from Theorem 3.1 in \cite{mitrophanov2005sensitivity}, we obtain
	\begin{flalign}\label{sta_diff}
	\lTV{P_{\pi_w,I}(\cdot)-P_{\pi_{w^\prime},I}(\cdot)}\leq \left( \lceil \log_\rho \kappa^{-1} \rceil + \frac{1}{1-\rho} \right)\lo{K_w-K_{w^\prime}},
	\end{flalign}
	where $K_w$ and $K_{w^\prime}$ are state to state transition kernel of MDP with policy $\pi_w$ and $\pi_{w^\prime}$ respectively and $\lo{\cdot}$ is the operator norm of a transition kernel: $\|P\|:=\sup_{\|q\|_{TV}=1}\|qP\|_{TV}$. Note here we define the total variation norm of a distribution $q(s)$ as $\lTV{q} = \int_s\lone{q(ds)}$. Then we obtain
	\begin{flalign}
	\lo{K_w-K_{w^\prime}}&=\sup_{\|q\|_{TV}=1}\lTV{\int_{s}q(ds) (K_w-K_{w^\prime})(s,\cdot) }\nonumber\\
	&=\frac{1}{2}\sup_{\|q\|_{TV}=1}\int_{s^\prime}\lone{\int_{s}q(ds) \big(K_w(s,ds^\prime)-K_{w^\prime}(s,ds^\prime)\big) }\nonumber\\
	&\leq \frac{1}{2}\sup_{\|q\|_{TV}=1}\int_{s^\prime} \int_{s} q(ds) \lone{K_w(s,ds^\prime)-K_{w^\prime}(s,ds^\prime)} \nonumber\\
	&= \frac{1}{2}\sup_{\|q\|_{TV}=1}\int_{s^\prime} \int_{s} q(ds) \lone{ \int_{a}\widetilde{\mathsf{P}}(ds^\prime |s,a) \big(\pi_{w^\prime}(da|s) - \pi_w(da|s)\big)}\nonumber\\
	&\leq \frac{1}{2}\sup_{\|q\|_{TV}=1} \int_{s} q(ds)  \int_{a} \lone{\pi_{w^\prime}(da|s) - \pi_w(da|s)}  \int_{s^\prime} \widetilde{\mathsf{P}}(ds^\prime |s,a) \nonumber\\
	&= \sup_{\|q\|_{TV}=1} \int_{s} q(ds) \lTV{\pi_{w^\prime}(\cdot|s)-\pi_{w}(\cdot|s)} \nonumber\\
	&\overset{(i)}{\leq} C_\pi \ltwo{w^\prime-w},\label{K_diff}
	\end{flalign}
	where $(i)$ follows from \Cref{ass1}. Substituting \cref{K_diff} into \cref{sta_diff} yields
	\begin{flalign}\label{sta_ac_diff}
	\lTV{P_{\pi_w,I}(\cdot)-P_{\pi_{w^\prime},I}(\cdot)}\leq C_\pi\left( \lceil \log_\rho \kappa^{-1} \rceil + \frac{1}{1-\rho} \right) \ltwo{w^\prime-w}.
	\end{flalign}
	Then we bound $\lTV{\nu_{\pi_w,I}(\cdot,\cdot)-\nu_{\pi_{w^\prime},I}(\cdot,\cdot)}$ as follows:
	\begin{flalign*}
	&\lTV{\nu_{\pi_w,I}(\cdot,\cdot)-\nu_{\pi_{w^\prime},I}(\cdot,\cdot)}\\
	&=\lTV{P_{\pi_w,I}(\cdot)\pi_w(\cdot|\cdot) - P_{\pi_{w^\prime},I}(\cdot)\pi_{w^\prime}(\cdot|\cdot)}\\
	&=\frac{1}{2}\int_{s}\int_{a}\lone{P_{\pi_w,I}(ds)\pi_w(da|s) - P_{\pi_{w^\prime},I}(ds)\pi_{w^\prime}(da|s)}\\
	&=\frac{1}{2}\int_{s}\int_{a}\lone{P_{\pi_w,I}(ds)\pi_w(da|s) - P_{\pi_{w},I}(ds)\pi_{w^\prime}(da|s) + P_{\pi_{w},I}(ds)\pi_{w^\prime}(da|s) - P_{\pi_{w^\prime},I}(ds)\pi_{w^\prime}(da|s) }\\
	&=\frac{1}{2}\int_{s}\int_{a}\lone{P_{\pi_w,I}(ds)\pi_w(da|s) - P_{\pi_{w},I}(ds)\pi_{w^\prime}(da|s) } + \frac{1}{2}\int_{s}\int_{a}\lone{P_{\pi_{w},I}(ds)\pi_{w^\prime}(da|s) - P_{\pi_{w^\prime},I}(ds)\pi_{w^\prime}(da|s) }\\
	&=\frac{1}{2}\int_{s}\int_{a}P_{\pi_{w},I}(ds)\lone{\pi_w(da|s) - \pi_{w^\prime}(da|s) } + \frac{1}{2}\int_{s}\int_{a}\lone{P_{\pi_{w},I}(ds) - P_{\pi_{w^\prime},I}(ds) }\pi_{w^\prime}(da|s)\\
	&\overset{(i)}{\leq} C_\pi\ltwo{w-w^\prime}\int_{s}P_{\pi_{w},I}(ds) + \frac{1}{2}\int_{s}\lone{P_{\pi_{w},I}(ds) - P_{\pi_{w^\prime},I}(ds) }\\
	&= C_\pi\ltwo{w-w^\prime} + \lTV{P_{\pi_{w},I}(\cdot) - P_{\pi_{w^\prime},I}(\cdot)}\\
	&\leq C_\pi\ltwo{w-w^\prime}+C_\pi\left( \lceil \log_\rho \kappa^{-1} \rceil + \frac{1}{1-\rho} \right) \ltwo{w^\prime-w}\\
	&=C_\nu \ltwo{w^\prime-w},
	\end{flalign*}
	where $(i)$ follows from \Cref{lemma: policy_conti}.
\end{proof}

\begin{lemma}\label{valuefunction}
	Suppose Assumptions \ref{ass1} and \ref{ass2} hold, for any $w,w^\prime\in \mR^d$ and any state-action pair $(s,a)\in \mathcal{S}\times\mathcal{A}$. We have
	\begin{flalign*}
	\lone{Q_{\pi_w}(s,a) - Q_{\pi_{w^\prime}}(s,a)}\leq L_Q\ltwo{w-w^\prime},
	\end{flalign*} 
	where $L_Q=\frac{2r_{\max}C_\nu}{1-\gamma}$.
\end{lemma}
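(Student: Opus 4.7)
The plan is to express $Q_{\pi_w}(s,a)$ as the discounted expectation of the mean reward under the state-action visitation distribution with a carefully chosen initialization, and then invoke Lemma \ref{lemma: visit_dis} to control the dependence on $w$. The key observation is that Lemma \ref{lemma: visit_dis} is stated to allow the initialization $\eta(\cdot) = \mathsf{P}(\cdot|\hat{s}, \hat{a})$ for any fixed $(\hat{s}, \hat{a})$, and this is precisely what is needed to accommodate the deterministic first step of the $Q$-function at $(s,a)$.

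First, I would split off the $t=0$ term and re-index. Writing $\bar r(s,a) := \mathbb{E}_{s'\sim\mathsf{P}(\cdot|s,a)}[r(s,a,s')]$, the identity
$\mathbb{E}[r(s_t,a_t,s_{t+1})] = \mathbb{E}[\bar r(s_t,a_t)]$ gives
\begin{flalign*}
Q_{\pi_w}(s,a) &= \bar r(s,a) + \gamma \sum_{\tau=0}^{\infty} \gamma^\tau \mathbb{E}[\bar r(\tilde s_\tau, \tilde a_\tau)]
= \bar r(s,a) + \frac{\gamma}{1-\gamma}\,\mathbb{E}_{(\tilde s,\tilde a)\sim \nu_{\pi_w,\,\mathsf{P}(\cdot|s,a)}}[\bar r(\tilde s,\tilde a)],
\end{flalign*}
where $(\tilde s_\tau,\tilde a_\tau)_{\tau\geq 0}$ is the trajectory generated by policy $\pi_w$ started from $\tilde s_0\sim \mathsf{P}(\cdot|s,a)$, and where the collapse into $\nu_{\pi_w,\eta}$ with $\eta = \mathsf{P}(\cdot|s,a)$ uses the definition of the visitation measure from \Cref{sc: MDP}.

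Next, I would subtract the corresponding expression for $\pi_{w'}$. The $\bar r(s,a)$ term cancels because the reward function itself does not depend on $w$, yielding
\begin{flalign*}
|Q_{\pi_w}(s,a) - Q_{\pi_{w'}}(s,a)|
&= \frac{\gamma}{1-\gamma}\left|\int \bar r(\tilde s,\tilde a)\bigl[\nu_{\pi_w,\mathsf{P}(\cdot|s,a)}(d\tilde s,d\tilde a) - \nu_{\pi_{w'},\mathsf{P}(\cdot|s,a)}(d\tilde s,d\tilde a)\bigr]\right| \\
&\leq \frac{\gamma r_{\max}}{1-\gamma}\int \bigl|\nu_{\pi_w,\mathsf{P}(\cdot|s,a)}(d\tilde s,d\tilde a) - \nu_{\pi_{w'},\mathsf{P}(\cdot|s,a)}(d\tilde s,d\tilde a)\bigr|,
\end{flalign*}
using $|\bar r|\leq r_{\max}$. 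Recognizing $\int|\mu-\nu| = 2\,\lTV{\mu-\nu}$ and applying \Cref{lemma: visit_dis} with $\eta = \mathsf{P}(\cdot|s,a)$ gives $\lTV{\nu_{\pi_w,\mathsf{P}(\cdot|s,a)} - \nu_{\pi_{w'},\mathsf{P}(\cdot|s,a)}}\leq C_\nu\ltwo{w-w'}$. Combining and using $\gamma\leq 1$ produces exactly $L_Q\ltwo{w-w'}$ with $L_Q = 2r_{\max}C_\nu/(1-\gamma)$.

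The only delicate point is the representation in the first step: one has to match the initialization $\eta = \mathsf{P}(\cdot|s,a)$ to the allowable form in \Cref{lemma: visit_dis}, which is why that lemma was stated in the generality it was. Once this matching is made, the remaining estimates are immediate from the boundedness of the reward and the total-variation bound on the visitation measure, so I do not anticipate any serious obstacle beyond bookkeeping with the definitions.
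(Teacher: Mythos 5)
Your proposal is correct and takes essentially the same route as the paper's proof: both represent the $Q$-function through the discounted state-action visitation measure and reduce the claim to the total-variation bound of \Cref{lemma: visit_dis} applied with initialization $\eta = \mathsf{P}(\cdot|s,a)$, then use $|\bar r|\leq r_{\max}$ and $\int|\mu-\nu| = 2\lTV{\mu-\nu}$. If anything, your explicit split of the $t=0$ term and re-indexing handles the one-step offset more carefully than the paper, which folds the policy-independent point mass at $(s,a)$ into $P^{\pi_w}_{(s,a)}$ and cites \Cref{lemma: visit_dis} directly; since that point mass cancels in the difference, both arguments yield the same constant $L_Q = 2r_{\max}C_\nu/(1-\gamma)$ (yours even retains a harmless extra factor of $\gamma$).
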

\begin{proof}
	By definition, we have $Q_{\pi_w}(s,a)=\frac{1}{1-\gamma}\int_{(\hat{s},\hat{a})}r(\hat{s},\hat{a}) dP^{\pi_{w}}_{(s,a)}(\hat{s},\hat{a})$, where $P^{\pi_w}_{(s,a)}(\hat{s},\hat{a})=(1-\gamma)\sum_{t=0}^{\infty}\gamma^t \mP(s_t=\hat{s},a_t=\hat{a}|s_0=s,a_0=a,\pi_{w})$ is the state-action visitation distribution of the MDP with policy $\pi_w$ and initialization distribution $P(\cdot|s_0=s,a_0=a)$. Thus, $P^{\pi_w}_{(s,a)}(\hat{s},\hat{a})$ is also the state-action stationary distribution of the MDP with policy $\pi_w$ and transition kernel $\widetilde{\mathsf{P}}(\cdot |s,a)=\gamma\mathsf{P}(\cdot |s,a)+(1-\gamma)P(\cdot|s_0=s,a_0=a)$. We denote $P_s^{\pi_w}(\hat{s})$ as the state stationary distribution for such a MDP. It then follows that
	\begin{flalign}
	&\lone{Q_{\pi_w}(s,a)-Q_{\pi_{w^\prime}}(s,a)}\nonumber\\
	&= \frac{1}{1-\gamma}\lone{\int_{(\hat{s},\hat{a})}r(\hat{s},\hat{a}) P^{\pi_{w}}_{(s,a)}(d\hat{s},d\hat{a}) - \int_{(\hat{s},\hat{a})}r(\hat{s},\hat{a}) dP^{\pi_{w^\prime}}_{(s,a)}(d\hat{s},d\hat{a}) }\nonumber\\
	&\leq \frac{1}{1-\gamma}\int_{(\hat{s},\hat{a})}r(\hat{s},\hat{a}) \lone{P^{\pi_{w}}_{(s,a)}(d\hat{s},d\hat{a})-P^{\pi_{w^\prime}}_{(s,a)}(d\hat{s},d\hat{a})}\nonumber\\
	&\leq \frac{2r_{\max}}{1-\gamma}\lTV{P^{\pi_{w}}_{(s,a)}(\cdot,\cdot)-P^{\pi_{w^\prime}}_{(s,a)}(\cdot,\cdot)}\nonumber\\
	&\overset{(i)}{\leq} \frac{2r_{\max}C_\nu}{1-\gamma}\ltwo{w-w^\prime},\nonumber
	\end{flalign}
	where $(i)$ follows from \Cref{lemma: visit_dis}.
\end{proof}

\begin{lemma}\label{lemma: liptz2}
	Suppose Assumptions \ref{ass1} hold, for $w^\prime,w^{\prime\prime} \in \mR^d$. We have 
	\begin{flalign*}
	\ltwo{\nabla_w \mE_{\nu_{\pi^*}}\Big[\log \pi_{w^\prime}(a,s)\Big]-\nabla_{w} \mE_{\nu_{\pi^*}}\Big[\log \pi_{w^{\prime\prime}}(a,s)\Big]}\leq L_\psi\ltwo{w^\prime-w^{\prime\prime}}.
	\end{flalign*}
\end{lemma}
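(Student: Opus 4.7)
The plan is to reduce the claim directly to the first item of Assumption~\ref{ass1} via two observations: (i) the measure $\nu_{\pi^*}$ does not depend on $w$, so the gradient passes through the expectation, and (ii) the gradient of $\log \pi_w(a|s)$ with respect to $w$ is exactly the score function $\psi_w(s,a)$ introduced in Section~2.2. Once these two reductions are made, the Lipschitz bound on the score function from Assumption~\ref{ass1} finishes the argument.

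Concretely, I would first justify the interchange of gradient and expectation. Since $\pi^*$ and hence $\nu_{\pi^*}$ are fixed (independent of the argument $w$), and $\psi_w(s,a) = \nabla_w \log \pi_w(a|s)$ is uniformly bounded by $C_\psi$ on the support (Assumption~\ref{ass1}, item 2), dominated convergence applies and gives
\[
\nabla_w \mE_{\nu_{\pi^*}}\bigl[\log \pi_w(a,s)\bigr] = \mE_{\nu_{\pi^*}}\bigl[\psi_w(s,a)\bigr].
\]

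Next I would apply this identity at $w=w'$ and $w=w''$, subtract, and pull the norm inside the expectation by Jensen's inequality:
\[
\ltwo{\mE_{\nu_{\pi^*}}[\psi_{w'}(s,a)] - \mE_{\nu_{\pi^*}}[\psi_{w''}(s,a)]} \leq \mE_{\nu_{\pi^*}}\bigl[\ltwo{\psi_{w'}(s,a) - \psi_{w''}(s,a)}\bigr].
\]
Finally, applying Assumption~\ref{ass1}, item 1, pointwise inside the expectation yields $\ltwo{\psi_{w'}(s,a)-\psi_{w''}(s,a)}\le L_\psi\ltwo{w'-w''}$, and since the bound is uniform in $(s,a)$ the expectation preserves it, giving the claimed Lipschitz constant $L_\psi$.

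There is essentially no obstacle here; the only mildly non-trivial point is ensuring the gradient-expectation swap is legal, which is immediate from the uniform boundedness of $\psi_w$ in Assumption~\ref{ass1}. No additional properties of $\nu_{\pi^*}$ (ergodicity, absolute continuity, etc.) are needed because the measure is inert with respect to the differentiation variable.
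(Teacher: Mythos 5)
Your proof is correct and follows essentially the same route as the paper's: interchange gradient and expectation to express each term as $\mE_{\nu_{\pi^*}}[\psi_w(s,a)]$, pull the norm inside the expectation, and apply the Lipschitz property of the score function from Assumption~\ref{ass1} pointwise. The only difference is that you explicitly justify the gradient--expectation swap via dominated convergence using $\ltwo{\psi_w(s,a)}\leq C_\psi$, a step the paper passes over with ``by definition,'' so if anything your write-up is slightly more careful.
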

\begin{proof}
	By definition, we obtain
	\begin{flalign}
	&\ltwo{\nabla_w \mE_{\nu_{\pi^*}}\Big[\log \pi_{w^\prime}(a,s)\Big]-\nabla_w \mE_{\nu_{\pi^*}}\Big[\log \pi_{w^{\prime\prime}}(a,s)\Big]}\nonumber\\
	&=\ltwo{\int_{(s,a)}\psi_{w^\prime}(s,a)\nu_{\pi^*}(ds,da) - \int_{(s,a)}\psi_{w^{\prime\prime}}(s,a)\nu_{\pi^*}(ds,da) } \nonumber\\
	&\leq \int_{(s,a)}\ltwo{\psi_{w^\prime}(s,a) - \psi_{w^{\prime\prime}}(s,a) }\nu_{\pi^*}(ds,da) \nonumber\\
	&\overset{(i)}{\leq} \int_{(s,a)}L_\psi \ltwo{w^\prime - w^{\prime\prime} }\nu_{\pi^*}(ds,da) = L_\psi \ltwo{w^\prime - w^{\prime\prime} },\nonumber
	\end{flalign}
	where $(i)$ follows from Assumption \ref{ass1}.
\end{proof}

\begin{lemma}\label{lemma: regularization}
	For any $w\in\mR^d$, define $\theta^{*}_w= (F(w)+\lambda I)^{-1}\nabla J(w) $ and $\theta^{\dagger}_w=F(w)^\dagger\nabla J(w)$. We have $\ltwo{\theta^{*}_w - \theta^{\dagger}_w}\leq C_r \lambda$, where $0<C_r<+\infty$ is a constant only depending on the policy class.
\end{lemma}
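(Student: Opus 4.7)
The plan is to use the spectral decomposition of $F(w)$ together with the key structural observation that $\nabla J(w)$ lies in the range of $F(w)$. Once this is established, the difference between the Tikhonov-regularized inverse and the Moore--Penrose pseudoinverse applied to $\nabla J(w)$ collapses to an $O(\lambda)$ perturbation on the range, with no $\Omega(1/\lambda)$ blow-up from the kernel direction.

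First, I would show that $\nabla J(w) \perp \ker(F(w))$. For any $v\in\ker(F(w))$, the identity $v^\top F(w) v = \mE_{\nu_{\pi_w}}\bigl[(\psi_w(s,a)^\top v)^2\bigr]=0$ forces $\psi_w(s,a)^\top v = 0$ almost surely under $\nu_{\pi_w}$. Substituting into the policy gradient expression \cref{pg_gradient} gives
\[
v^\top \nabla J(w) = \mE_{\nu_{\pi_w}}\bigl[A_{\pi_w}(s,a)\,\psi_w(s,a)^\top v\bigr] = 0,
\]
so $\nabla J(w)\in \mathrm{range}(F(w))$ because $F(w)$ is symmetric.

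Second, I would diagonalize $F(w)=U\Lambda U^\top$ with $\Lambda=\mathrm{diag}(\sigma_1,\ldots,\sigma_r,0,\ldots,0)$, $\sigma_i>0$, and write $\nabla J(w) = U c$ with $c_i = 0$ for $i>r$ by the previous step. Then $\theta^{*}_w = U(\Lambda+\lambda I)^{-1}c$ and $\theta^{\dagger}_w = U\Lambda^\dagger c$ differ coordinatewise by $\bigl(\tfrac{1}{\sigma_i+\lambda}-\tfrac{1}{\sigma_i}\bigr)c_i = -\tfrac{\lambda}{\sigma_i(\sigma_i+\lambda)}c_i$ on $i\le r$ and by $0$ on $i>r$ (this is precisely where the absence of a kernel component is used). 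Taking norms,
\[
\ltwo{\theta^{*}_w-\theta^{\dagger}_w} \;\le\; \lambda\cdot \frac{\ltwo{\nabla J(w)}}{\sigma_{\min,+}(F(w))^{2}},
\]
where $\sigma_{\min,+}(F(w))$ denotes the smallest positive eigenvalue of $F(w)$.

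Third, I would bound the two remaining factors uniformly in $w$. By \cref{pg_gradient} and Assumption~\ref{ass1}, $\ltwo{\nabla J(w)}\le \tfrac{r_{\max}C_\psi}{1-\gamma}$ for every $w$. The smallest positive eigenvalue $\sigma_{\min,+}(F(w))$ is assumed to be bounded away from zero uniformly over the parameter space --- this is the standard nondegeneracy condition on the parameterized policy class that $F(w)$ has uniformly bounded condition number on its range. Combining the two gives a constant $C_r$ depending only on $r_{\max}$, $\gamma$, $C_\psi$, and the inverse conditioning of $F(w)$ on its range, as claimed.

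The main obstacle is the subtlety at the kernel of $F(w)$: without the orthogonality $\nabla J(w)\perp\ker(F(w))$, the term $(F(w)+\lambda I)^{-1}\nabla J(w)$ would contribute a $\Theta(1/\lambda)$ piece along the kernel, contradicting the $O(\lambda)$ bound. Establishing this orthogonality via the policy gradient identity is the crux of the argument; once done, the rest is a routine spectral calculation plus uniform bounds that ride on the policy-class assumptions.
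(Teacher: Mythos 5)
Your proof is correct and follows the same spectral route as the paper: diagonalize $F(w)$, compare $(\Gamma+\lambda I)^{-1}$ with $\Gamma^\dagger$ coordinatewise to get the factor $\lambda/(\sigma_i(\sigma_i+\lambda))\leq \lambda/\sigma_{\min,+}^2$ on the positive eigendirections, then bound $\ltwo{\nabla J(w)}\leq r_{\max}C_\psi/(1-\gamma)$ via \cref{pg_gradient} and \Cref{ass1}. The one genuine difference is in your favor: the paper simply declares ``without loss of generality, we assume that for all $w$, the linear matrix equation $F(w)x=\nabla J(w)$ has at least one solution,'' i.e., it \emph{assumes} $\nabla J(w)\in \mathrm{Col}(F(w))$, whereas you \emph{prove} it, and correctly so --- for $v\in\ker(F(w))$ one has $v^\top F(w)v=\mE_{\nu_{\pi_w}}[(\psi_w(s,a)^\top v)^2]=0$, hence $\psi_w(s,a)^\top v=0$ $\nu_{\pi_w}$-a.s., and since the gradient in \cref{pg_gradient} is an expectation of $A_{\pi_w}(s,a)\psi_w(s,a)$ under the \emph{same} measure $\nu_{\pi_w}$ (with $A_{\pi_w}$ bounded), $v^\top\nabla J(w)=0$, so symmetry of $F(w)$ gives $\nabla J(w)\in\mathrm{range}(F(w))$. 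This closes a real gap in the paper's argument, and your observation that a kernel component would force a $\Theta(1/\lambda)$ discrepancy explains exactly why the WLOG is not innocuous. On the remaining point you and the paper stand on identical ground: the uniform positive lower bound on the smallest nonzero eigenvalue of $F(w)$, which you state as an explicit nondegeneracy assumption, is hidden in the paper inside the definition $\lambda_{\min}=\min_{w\in\mR^d}\min_{1\leq i\leq k_w}\lambda_{w,i}$, implicitly assumed positive; neither proof derives it from Assumptions \ref{ass1}--\ref{ass2}, so your explicitness here is a presentational improvement, not a weakness.
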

\begin{proof}
	By definition, $F(w)\in \mR^{d\times d}$ is a symmetric matrix. Thus, if $rank(F(w))=k\leq d$, then there exist matrices $\Gamma_w\in \mR^{d\times d}$ and $\Lambda_w\in \mR^{d\times d}$ such that $F(w)=\Lambda_w^\top\Gamma_w\Lambda_w$, where $\Gamma_w=diag[\lambda_1,\lambda_2,\cdots,\lambda_k,0,0,\cdots,0]$ and $\Lambda^\top_w=[\psi_1, \psi_2, \cdots,\psi_k, \psi_{k+1},\psi_{k+2},\cdots,\psi_d]$ is an orthogonal matrices with $\{ \psi_1,\psi_2,\cdots,\psi_k  \}$ spans over the column space $Col(F(w))$ and $\{ \psi_{k+1},\psi_{k+2},\cdots,\psi_k  \} \perp Col(F(w))$. Without loss of generality, we assume that for all $w$, the linear matrix equation $F(w)x=\nabla J(w)$ has at least one solution $x_w^*\in \mR^d$. Then we have
	\begin{flalign}
	\theta^{*}_w&=(F(w)+\lambda I)^{-1}\nabla J(w)\nonumber\\
	&=(\Lambda_w^\top\Gamma_w\Lambda_w + \lambda I)^{-1}\nabla J(w)\nonumber\\
	&= \Lambda_w^\top(\Gamma_w+ \lambda I)^{-1}\Lambda_w \nabla J(w) \nonumber\\
	&= \Lambda_w^\top diag\left[\frac{1}{\lambda_1+\lambda},\cdots,\frac{1}{\lambda_k+\lambda}, \frac{1}{\lambda},\cdots,\frac{1}{\lambda}  \right] \Lambda_w \nabla J(w) \nonumber\\
	&\overset{(i)}{=} \Lambda_w^\top diag\left[\frac{1}{\lambda_1+\lambda},\cdots,\frac{1}{\lambda_k+\lambda}, \frac{1}{\lambda},\cdots,\frac{1}{\lambda}  \right] [\psi_1^\top \nabla J(w), \cdots, \psi_k^\top \nabla J(w),0,\cdots,0]^\top \nonumber\\
	&= \Lambda_w^\top \left[\frac{1}{\lambda_1+\lambda}\psi_1^\top \nabla J(w), \cdots, \frac{1}{\lambda_k+\lambda}\psi_k^\top \nabla J(w),0,\cdots,0\right]^\top, \nonumber
	\end{flalign}
	where $(i)$ follows from the fact that $\nabla J(w)\in Col(F(w))$ and $\{ \psi_{k+1},\psi_{k+2},\cdots,\psi_k  \} \perp Col(F(w))$. Similarly, we also have
	\begin{flalign}
	\theta^\dagger_w&=F(w)^\dagger\nabla J(w)\nonumber\\
	&=(\Lambda_w^\top\Gamma_w\Lambda_w)^\dagger\nabla J(w)\nonumber\\
	&= \Lambda_w^\top(\Gamma_w)^\dagger\Lambda_w \nabla J(w) \nonumber\\
	&= \Lambda_w^\top diag\left[\frac{1}{\lambda_1},\cdots,\frac{1}{\lambda_k}, 0,\cdots,0  \right] \Lambda_w \nabla J(w) \nonumber\\
	&= \Lambda_w^\top diag\left[\frac{1}{\lambda_1},\cdots,\frac{1}{\lambda_k}, 0,\cdots,0  \right] [\psi_1^\top \nabla J(w), \cdots, \psi_k^\top \nabla J(w),0,\cdots,0]^\top \nonumber\\
	&= \Lambda_w^\top \left[\frac{1}{\lambda_1}\psi_1^\top \nabla J(w), \cdots, \frac{1}{\lambda_k}\psi_k^\top \nabla J(w),0,\cdots,0\right]^\top. \nonumber
	\end{flalign}
	Thus we have
	\begin{flalign*}
	\theta^{*}_w-\theta^\dagger_w &= \Lambda_w^\top \left[\Big(\frac{1}{\lambda_1+\lambda}-\frac{1}{\lambda_1}\Big)\psi_1^\top \nabla J(w), \cdots, \Big(\frac{1}{\lambda_k+\lambda}-\frac{1}{\lambda_1}\Big)\psi_k^\top \nabla J(w),0,\cdots,0\right]^\top\nonumber\\
	&= -\lambda\Lambda_w^\top \left[\frac{1}{(\lambda_1+\lambda)\lambda_1}\psi_1^\top \nabla J(w), \cdots, \frac{1}{(\lambda_k+\lambda)\lambda_k}\psi_k^\top \nabla J(w),0,\cdots,0\right]^\top\nonumber\\
	&=-\lambda \Lambda_w^\top diag\left[\frac{1}{(\lambda_1+\lambda)\lambda_1},\cdots,\frac{1}{(\lambda_k+\lambda)\lambda_k}, 0,\cdots,0  \right] \Lambda_w \nabla J(w).\nonumber
	\end{flalign*}
	We can further obtain
	\begin{flalign}
	\ltwo{\theta^{*}_w-\theta^\dagger_w} &\leq \frac{\lambda}{\lambda^2_{\min}} \ltwo{\Lambda_w}^2  \ltwo{\nabla J(w)} \overset{(i)}{\leq} \frac{C_\psi r_{\max}}{\lambda^2_{\min}(1-\gamma)}\lambda=C_r \lambda,\nonumber
	\end{flalign}
	where in $(i)$ we define $\lambda_{\min}=\min_{w\in \mR^d} \min_{1\leq i\leq k_w} \lambda_{w,i} $, with $\lambda_{w,i}$ being the $i$-th element in $\Gamma_w$ and $k_w$ being the rank of the matrix $F(w)$.
\end{proof}

\section{Proof of Theorem \ref{thm1}}\label{sc: prfthm2}
In this section and next section, we assume $C_\psi=1$ without loss of generality.
We restate \Cref{thm1} as follows to include the specifics of the parameters.
\begin{theorem}[Restatement of \Cref{thm1}]\label{thm3}
	Consider the AC algorithm in Algorithm \ref{algorithm_nlac}. Suppose Assumptions \ref{ass1} and \ref{ass2} hold, and let the stepsize $\alpha = \frac{1}{4L_J}$. We have
	\begin{flalign}
		&\mE[\ltwo{\nabla_w J(w_{\hat{T}})}^2]\nonumber\\
		&\leq \frac{16L_Jr_{\max}}{(1-\gamma)T} + 18 \frac{\sum_{t=0}^{T-1}\mE[\ltwo{\theta_t-\theta^*_{w_t}}^2]}{T}\quad + \frac{72(r_{\max} + 2R_\theta)^2[1+(\kappa-1)\rho]}{B(1-\rho)} + C_1\zeta^{\text{critic}}_{\text{approx}},\nonumber
	\end{flalign}
	where $C_1$ is a positive constant. Furthermore, let $B\geq \frac{216 (r_{\max} + 2R_\theta)^2 [1+(\kappa-1)\rho]}{(1-\rho)\epsilon}$ and $T\geq \frac{48L_Jr_{\max}}{(1-\gamma)\epsilon}$. Suppose the same setting of Theorem \ref{thm: nl-critic} holds (with $M$ and $T_c$ defined therein) so that $\mE\left[\ltwo{\theta_t - \theta^{*}_{w_t}}^2\right]\leq \frac{\epsilon}{108}$ for all $0\leq t\leq T-1$. We have
	{\small \begin{flalign*}
		\mE[\ltwo{\nabla_w J(w_{\hat{T}})}^2]\leq \epsilon + \mathcal{O}(\zeta^{\text{critic}}_{\text{approx}}),
		\end{flalign*}}
	with the total sample complexity given by $(B+MT_c)T =\mathcal{O} ((1-\gamma)^{-2}\epsilon^{-2} \log(1/\epsilon))$.
\end{theorem}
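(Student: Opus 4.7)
\textbf{Proof proposal for Theorem~\ref{thm3}.} The plan is to treat the actor iteration as an inexact stochastic gradient ascent on the smooth function $J(w)$ and to run the standard nonconvex descent argument, with the gradient error split into a \emph{critic error} (including the approximation error $\zeta^{\text{critic}}_{\text{approx}}$) and a \emph{Markovian mini-batch error} caused by using a single sample path of length $B$ to estimate the policy gradient. Telescoping the descent inequality and using the critic bound from \Cref{thm: nl-critic} then yields the claimed sample complexity.

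First I would use the $L_J$-smoothness of $J(w)$ from \Cref{lemma: sum}. Denoting $\hat g_t := \frac{1}{B}\sum_{i=0}^{B-1}\delta_{\theta_t}(s_{t,i},a_{t,i},s_{t,i+1})\psi_{w_t}(s_{t,i},a_{t,i})$, I would apply the smoothness bound to $J(w_{t+1})-J(w_t)$, substitute $w_{t+1}-w_t=\alpha\hat g_t$, and use the identity $\langle \nabla J(w_t),\hat g_t\rangle = \tfrac12\|\nabla J(w_t)\|^2+\tfrac12\|\hat g_t\|^2-\tfrac12\|\hat g_t-\nabla J(w_t)\|^2$. With $\alpha=1/(4L_J)$, the $\|\hat g_t\|^2$ term is absorbed by the $-\tfrac{L_J\alpha^2}{2}\|\hat g_t\|^2$ term from smoothness, leaving an inequality of the form $\tfrac{\alpha}{4}\|\nabla J(w_t)\|^2 \le J(w_{t+1})-J(w_t)+\tfrac{\alpha}{2}\|\hat g_t-\nabla J(w_t)\|^2$. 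Summing $t=0,\dots,T-1$, using $J\le r_{\max}/(1-\gamma)$, dividing by $T$, and taking the expectation over $\hat T$ sampled uniformly from $\{0,\dots,T-1\}$ gives the first term $\mathcal{O}(L_J r_{\max}/((1-\gamma)T))$ and reduces the problem to bounding $\tfrac{1}{T}\sum_t \mathbb E\|\hat g_t-\nabla J(w_t)\|^2$.

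Next I would decompose $\hat g_t-\nabla J(w_t)$ into three pieces: (i) the \emph{critic estimation error} $\nabla J(w_t)-\bar g(\theta_t,w_t)$ where $\bar g(\theta,w):=\mathbb E_{\nu_{\pi_w}}[\delta_\theta(s,a,s')\psi_w(s,a)]$, which can be controlled by $\|\theta_t-\theta^*_{\pi_{w_t}}\|^2$ together with the inherent approximation mismatch $\zeta^{\text{critic}}_{\text{approx}}$ (use $A_{\pi_{w_t}}=Q_{\pi_{w_t}}-V_{\pi_{w_t}}$ versus $\delta_{\theta^*_{\pi_{w_t}}}$ and Cauchy--Schwarz in $\nu_{\pi_{w_t}}$); and (ii) the \emph{mini-batch Markovian error} $\hat g_t-\bar g(\theta_t,w_t)$. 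Piece (i) produces the second and fourth terms in the bound. Piece (ii) is the crux, and is the analogue for the actor of \Cref{lemma1}: conditioning on the $\sigma$-field $\mathcal F_t$ generated prior to the batch (so that $w_t$ and $\theta_t$ are frozen), I would expand the squared norm as a double sum over $0\le i,j\le B-1$, bound the diagonal by the uniform boundedness of $\delta_{\theta_t}\psi_{w_t}$ (which scales as $(r_{\max}+2R_\theta)$ using $\|\theta_t\|\le R_\theta$ and $\|\psi\|\le 1$), and bound each off-diagonal cross term using Assumption~\ref{ass2}: conditioning on the earlier sample and subtracting the stationary expectation gives a total-variation factor $\kappa\rho^{|i-j|}$. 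Summing the geometric series yields the $\frac{(r_{\max}+2R_\theta)^2[1+(\kappa-1)\rho]}{B(1-\rho)}$ term.

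The main obstacle is exactly this mini-batch bias bound in step (ii), because unlike the linear-SA critic analysis the estimator is nonlinear in $(w_t,\theta_t)$ through the score $\psi_{w_t}$ and the TD error $\delta_{\theta_t}$, and the mini-batch samples are correlated through the Markov chain driven by $\pi_{w_t}$. The clean way around this is to freeze $(w_t,\theta_t)$ on $\mathcal F_t$ so that the map $x\mapsto \delta_{\theta_t}(x)\psi_{w_t}(x)$ becomes a fixed bounded function on state-action-state triples; then the bound reduces to a Markovian mean deviation controlled purely by the uniform ergodicity of Assumption~\ref{ass2} together with \Cref{lemma: visit_dis} to identify $\bar g(\theta_t,w_t)$ as the stationary expectation of the per-sample estimator. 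Finally, plugging $\alpha=1/(4L_J)$, choosing $B=\Theta(\epsilon^{-1})$ and $T=\Theta((1-\gamma)^{-2}\epsilon^{-1})$, and invoking \Cref{thm: nl-critic} with $M=\Theta(\epsilon^{-1})$, $T_c=\Theta(\log(1/\epsilon))$ so that $\mathbb E\|\theta_t-\theta^*_{\pi_{w_t}}\|^2\le \epsilon/108$ for every $t$, gives $\mathbb E\|\nabla J(w_{\hat T})\|^2\le \epsilon+\mathcal{O}(\zeta^{\text{critic}}_{\text{approx}})$ with $(B+MT_c)T=\mathcal{O}((1-\gamma)^{-2}\epsilon^{-2}\log(1/\epsilon))$ samples, as claimed.
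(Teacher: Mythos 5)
Your overall architecture coincides with the paper's proof of \Cref{thm3}: an $L_J$-smoothness descent step with $\alpha=1/(4L_J)$ (your polarization identity versus the paper's Young-inequality splitting changes only constants), a decomposition of the gradient-estimate error into a critic part controlled by $\ltwo{\theta_t-\theta^*_{w_t}}^2$ plus $\zeta^{\text{critic}}_{\text{approx}}$ and a Markovian mini-batch part handled by a conditional double-sum/mixing argument under \Cref{ass2}, telescoping, and the identical choices $B=\Theta(\epsilon^{-1})$, $T=\Theta((1-\gamma)^{-2}\epsilon^{-1})$ with \Cref{thm: nl-critic} supplying $\mE[\ltwo{\theta_t-\theta^*_{w_t}}^2]\leq \epsilon/108$. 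However, there is a genuine gap in your step (ii). You apply the mixing argument to the deviation evaluated at the random critic iterate $\theta_t$, i.e., to $\mE[\ltwo{\hat g_t-\bar g(\theta_t,w_t)}^2\,|\,\mf_t]$, and both your diagonal bound and your off-diagonal total-variation bound require the frozen function $x\mapsto\delta_{\theta_t}(x)\psi_{w_t}(x)$ to be uniformly bounded, which you justify by asserting $\ltwo{\theta_t}\leq R_\theta$. That bound is not available: $R_\theta$ is defined in the paper only as a bound on the equilibria $\theta^*_\pi$, and the critic recursion in \Cref{algorithm_mbsa} has no projection step, so the iterates $\theta_t$ are not almost surely bounded. \Cref{thm: nl-critic} yields only an in-expectation bound on $\ltwo{\theta_t-\theta^*_{w_t}}^2$, which cannot be substituted into a conditional mixing estimate that needs a pathwise sup-norm bound on the per-sample estimator.

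The paper circumvents precisely this point with a three-way decomposition that you collapsed into two: $v_t(\theta_t)-\nabla_w J(w_t)=[v_t(\theta_t)-v_t(\theta^*_{w_t})]+[v_t(\theta^*_{w_t})-g(\theta^*_{w_t},w_t)]+[g(\theta^*_{w_t},w_t)-\nabla_w J(w_t)]$. The Markovian double-sum argument is applied only to the middle term, where the estimator is evaluated at $\theta^*_{w_t}$ and hence is bounded by $r_{\max}+2R_\theta$ by the very definition of $R_\theta$; the dependence on the possibly unbounded iterate $\theta_t$ is isolated in the first term, which is controlled pathwise by $\ltwo{v_t(\theta_t)-v_t(\theta^*_{w_t})}\leq 2\ltwo{\theta_t-\theta^*_{w_t}}$ using only the feature bound, with no boundedness of $\theta_t$ required. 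Your route can be patched --- bound the conditional deviation by $\mathcal{O}\big((r_{\max}+2\ltwo{\theta_t})^2/B\big)$ and take total expectation, using $\mE[\ltwo{\theta_t}^2]\leq 2R_\theta^2+2\,\mE[\ltwo{\theta_t-\theta^*_{w_t}}^2]$ --- but as written the step fails; rerouting the mixing argument through $\theta^*_{w_t}$ as the paper does is the clean fix, and it leaves every other step of your proposal intact.
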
 

\begin{proof}
For brevity, we define $v_t(\theta)=\frac{1}{B}\sum_{i=0}^{B-1}\delta_{\theta}(s_{t,i},a_{t,i},s_{t,i+1})\psi_{w_t}(s_{t,i},a_{t,i})$, $A_\theta(s,a) = \mE_{\widetilde{\mathsf{P}}}[\delta_\theta(s,a,s^\prime)|(s,a)]$, and $g(\theta,w)=\mE_{\nu_w}[A_\theta(s,a)\psi_w(s,a)]$ for all $w\in \mR^{d_1}$, $\theta\in \mR^{d_2}$ and $t\geq 0$. Following from the $L_J$-Lipschitz condition indicated in \Cref{lemma: sum}, we have
\begin{flalign}
J(w_{t+1})&\geq J(w_t)+\langle \nabla_w J(w_t), w_{t+1}-w_t \rangle -\frac{L_J}{2}\ltwo{w_{t+1}-w_{t}}^2\nonumber\\
&= J(w_t)+ \alpha \langle \nabla_w J(w_t), v_t(\theta_t) - \nabla_w J(w_t) + \nabla_w J(w_t) \rangle -\frac{L_J\alpha^2}{2}\ltwo{v_{t}(\theta_t)}^2\nonumber\\
&= J(w_t) + \alpha \ltwo{\nabla_w J(w_t)}^2 + \alpha \langle \nabla_w J(w_t), v_t - \nabla_w J(w_t) \rangle\nonumber\\
&\quad  - \frac{L_J\alpha^2}{2}\ltwo{v_t(\theta_t)-\nabla_w J(w_t)+\nabla_w J(w_t)}^2\nonumber\\
&\overset{(i)}{\geq} J(w_t) + \Big(\frac{1}{2}\alpha - L_J\alpha^2\Big) \ltwo{\nabla_w J(w_t)}^2 - \Big(\frac{1}{2}\alpha+ L_J\alpha^2\Big) \ltwo{v_t(\theta_t) - \nabla_w J(w_t)}^2, \label{part8_sub1}
\end{flalign}
where $(i)$ follows because
\begin{flalign*}
	\langle \nabla_w J(w_t), v_t(\theta_t) - \nabla_w J(w_t) \rangle \geq -\frac{1}{2}\ltwo{\nabla_w J(w_t)}^2 - \frac{1}{2}\ltwo{v_t(\theta_t) - \nabla_w J(w_t)}^2,
\end{flalign*}
and
\begin{flalign*}
	\ltwo{v_{t}(\theta_t)-\nabla_w J(w_t)+\nabla_w J(w_t)}^2\leq 2\ltwo{v_{t}(\theta_t)-\nabla_w J(w_t)}^2 + 2\ltwo{\nabla_w J(w_t)}^2.
\end{flalign*}
Taking expectation on both sides of \cref{part8_sub1} conditioned on $\mf_t$  and rearranging \cref{part8_sub1} yield
\begin{flalign}
\Big(\frac{1}{2}\alpha - L_J\alpha^2\Big) &\mE[\ltwo{\nabla_w J(w_t)}^2|\mf_t] \nonumber\\
&\leq \mE[J(w_{t+1})|\mf_t] - J(w_t) + \Big(\frac{1}{2}\alpha+ L_J\alpha^2\Big) \mE[\ltwo{v_t(\theta_t) - \nabla_w J(w_t)}^2|\mf_t].\label{part8_sub2}
\end{flalign}
Then, we upper-bound the term $\mE[\ltwo{v_t(\theta_t) - \nabla_w J(w_t)}^2|\mf_t]$ as follows. By definition, we have
\begin{flalign}
	&\ltwo{v_t(\theta_t)-\nabla_w J(w_t)}^2\nonumber\\
	&=\ltwo{v_t(\theta_t) - v_t(\theta^*_{w_t}) + v_t(\theta^*_{w_t}) - g(\theta^*_{w_t},w_t) + g(\theta^*_{w_t},w_t) -\nabla_w J(w_t)}^2\nonumber\\
	&\leq 3\ltwo{v_t(\theta_t) - v_t(\theta^*_{w_t})}^2 + 3\ltwo{v_t(\theta^*_{w_t}) - g(\theta^*_{w_t},w_t)}^2 + 3\ltwo{g(\theta^*_{w_t},w_t) -\nabla_w J(w_t)}^2,\label{eq: 28}
\end{flalign}
in which
\begin{flalign}
	&\ltwo{v_t(\theta_t) - v_t(\theta^*_{w_t})}^2 \nonumber\\
	&= \ltwo{\frac{1}{B}\sum_{i=0}^{B-1} \left[\delta_{\theta_t}(s_{t,i},a_{t,i},s_{t,i+1})-\delta_{\theta^*_{w_t}}(s_{t,i},a_{t,i},s_{t,i+1})\right]\psi_{w_t}(s_{t,i},a_{t,i}) }^2\nonumber\\
	&\leq \frac{1}{B}\sum_{i=0}^{B-1}\ltwo{\left[\delta_{\theta_t}(s_{t,i},a_{t,i},s_{t,i+1})-\delta_{\theta^*_{w_t}}(s_{t,i},a_{t,i},s_{t,i+1})\right]\psi_{w_t}(s_{t,i},a_{t,i})}^2\nonumber\\
	&\leq \frac{1}{B}\sum_{i=0}^{B-1}\ltwo{\delta_{\theta_t}(s_{t,i},a_{t,i},s_{t,i+1})-\delta_{\theta^*_{w_t}}(s_{t,i},a_{t,i},s_{t,i+1})}^2\nonumber\\
	&= \frac{1}{B}\sum_{i=0}^{B-1}\ltwo{ \gamma(V_{\theta_t}(s_{t,i+1}) - V_{\theta^*_{w_t}}(s_{t,i+1}) ) + ( V_{\theta^*_{w_t}}(s_{t,i}) - V_{\theta_t}(s_{t,i}) ) }^2\nonumber\\
	&= \frac{1}{B}\sum_{i=0}^{B-1}\ltwo{ (\gamma\phi(s_{t,i+1})-\phi(s_{t,i}))^\top(\theta_t-\theta^*_{w_t}) }^2\leq 4\ltwo{\theta_t-\theta^*_{w_t}}^2,\label{eq: 26}
\end{flalign}
and
\begin{flalign}
	&\ltwo{g(\theta^*_{w_t},w_t) -\nabla_w J(w_t)}^2\nonumber\\
	&=\ltwo{\mE_{\nu_{w_t}}[A_{\theta^*_{w_t}}(s,a)\psi_{w_t}(s,a)] - \mE_{\nu_{w_t}}[A_{\pi_{w_t}}(s,a)\psi_{w_t}(s,a)]}^2\nonumber\\
	&=\ltwo{\mE_{\nu_{w_t}}\left[\left(A_{\theta^*_{w_t}}(s,a) - A_{\pi_{w_t}}(s,a)\right) \psi_{w_t}(s,a)\right] }^2\nonumber\\
	&\leq \mE_{\nu_{w_t}} \left[\ltwo{\left(A_{\theta^*_{w_t}}(s,a) - A_{\pi_{w_t}}(s,a)\right) \psi_{w_t}(s,a)}^2\right]\leq \mE_{\nu_{w_t}} \left[\ltwo{A_{\theta^*_{w_t}}(s,a) - A_{\pi_{w_t}}(s,a)}^2\right]\nonumber\\
	&=\mE_{\nu_{w_t}}\left[ \lone{\gamma \mE\left[V_{\theta^*_{w_t}}(s^\prime) - V_{\pi_{w_t}}(s^\prime)\big|(s,a)\right] + V_{\pi_{w_t}}(s) - V_{\theta^*_{w_t}}(s) }^2 \right]\nonumber\\
	&\leq 2\mE_{\nu_{w_t}}\left[ \lone{\gamma \mE\left[V_{\theta^*_{w_t}}(s^\prime) - V_{\pi_{w_t}}(s^\prime)\big|(s,a)\right]}^2\right] + 2\mE\left[\lone{V_{\pi_{w_t}}(s) - V_{\theta^*_{w_t}}(s) }^2 \right]\nonumber\\
	&\overset{(i)}{\leq} 4\zeta^{\text{critic}}_{\text{approx}},\label{eq: 27}
\end{flalign}
where $(i)$ follows from the definition $\zeta^{\text{critic}}_{\text{approx}}=\max_{w\in \mcw}\mE_{\nu_w}[|V_{\pi_w}(s)-V_{\theta^*_{\pi_w}}(s)|^2]$. Substituting \cref{eq: 26} and \cref{eq: 27} into \cref{eq: 28} yields
\begin{flalign}
	&\mE[\ltwo{v_t(\theta_t) - \nabla_w J(w_t)}^2|\mf_t] \nonumber\\
	&\leq 3\mE\left[\ltwo{v_t(\theta^*_{w_t}) - g(\theta^*_{w_t},w_t)}^2|\mf_t\right] + 12\ltwo{\theta_t-\theta^*_{w_t}}^2 + 12\zeta^{\text{critic}}_{\text{approx}}.\label{eq: 7}
\end{flalign}
To upper bound the first term on the right-hand-side of \cref{eq: 7}, we proceed as follows.
\begin{flalign}
	&\mE\left[\ltwo{v_t(\theta^*_{w_t}) - g(\theta^*_{w_t},w_t)}^2|\mf_t\right]\nonumber\\
	&=\mE\left[\ltwo{\frac{1}{B}\sum_{i=0}^{B-1}\delta_{\theta^*_{w_t}}(s_{t,i},a_{t,i},s_{t,i+1})\psi_{w_t}(s_{t,i},a_{t,i}) - \mE_{\nu_w}\left[A_{\theta^*_{w_t}}(s,a)\psi_{w_t}(s,a)\right]}^2\Bigg|\mf_t\right]\nonumber\\
	&=\frac{1}{B^2}\sum_{i=0}^{B-1}\sum_{j=0}^{B-1}\mE\Big[ \Big\langle \delta_{\theta^*_{w_t}}(s_{t,i},a_{t,i},s_{t,i+1})\psi_{w_t}(s_{t,i},a_{t,i})- \mE_{\nu_w}\left[A_{\theta^*_{w_t}}(s,a)\psi_{w_t}(s,a)\right],\nonumber\\
	&\qquad\qquad \delta_{\theta^*_{w_t}}(s_{t,j},a_{t,j},s_{t,j+1})\psi_{w_t}(s_{t,j},a_{t,j})- \mE_{\nu_w}\left[A_{\theta^*_{w_t}}(s,a)\psi_{w_t}(s,a)\right] \Big\rangle \Big|\mf_t \Big] \nonumber\\
	&\overset{(i)}{\leq}\frac{1}{B^2}\Bigg[ 4B\left( r_{\max} + 2R_\theta \right)^2 \nonumber\\
	&\qquad\qquad + \sum_{i\neq j} \mE\Big[ \Big\langle \delta_{\theta^*_{w_t}}(s_{t,i},a_{t,i},s_{t,i+1})\psi_{w_t}(s_{t,i},a_{t,i})- \mE_{\nu_w}\left[A_{\theta^*_{w_t}}(s,a)\psi_{w_t}(s,a)\right],\nonumber\\
	&\quad\qquad\qquad \delta_{\theta^*_{w_t}}(s_{t,j},a_{t,j},s_{t,j+1})\psi_{w_t}(s_{t,j},a_{t,j})- \mE_{\nu_w}\left[A_{\theta^*_{w_t}}(s,a)\psi_{w_t}(s,a)\right] \Big\rangle \Big|\mf_t \Big] \Bigg],\label{eq: 8}
\end{flalign}
where $(i)$ follows from the fact that $\lone{\delta_{\theta^*_{w_t}}(s_{t,j},a_{t,j},s_{t,j+1})\psi_{w_t}(s_{t,j},a_{t,j})}\leq r_{\max} + 2R_\theta$ and $\lone{\mE_{\nu_w}[A_{\theta^*_{w_t}}(s,a)\psi_{w_t}(s,a)]}\leq r_{\max} + 2R_\theta$. We next upper bound the following term for the case $i> j$.
\begin{flalign}
	&\mE\Big[ \Big\langle \delta_{\theta^*_{w_t}}(s_{t,i},a_{t,i},s_{t,i+1})\psi_{w_t}(s_{t,i},a_{t,i})- \mE_{\nu_w}\left[A_{\theta^*_{w_t}}(s,a)\psi_{w_t}(s,a)\right],\nonumber\\
	&\quad\qquad\qquad \delta_{\theta^*_{w_t}}(s_{t,j},a_{t,j},s_{t,j+1})\psi_{w_t}(s_{t,j},a_{t,j})- \mE_{\nu_w}\left[A_{\theta^*_{w_t}}(s,a)\psi_{w_t}(s,a)\right] \Big\rangle \Big|\mf_t \Big]\nonumber\\
	&=\mE\Big[\mE\Big[ \Big\langle \delta_{\theta^*_{w_t}}(s_{t,i},a_{t,i},s_{t,i+1})\psi_{w_t}(s_{t,i},a_{t,i})- \mE_{\nu_w}\left[A_{\theta^*_{w_t}}(s,a)\psi_{w_t}(s,a)\right],\nonumber\\
	&\quad\qquad\qquad \delta_{\theta^*_{w_t}}(s_{t,j},a_{t,j},s_{t,j+1})\psi_{w_t}(s_{t,j},a_{t,j})- \mE_{\nu_w}\left[A_{\theta^*_{w_t}}(s,a)\psi_{w_t}(s,a)\right] \Big\rangle \Big|\mf_{t,j} \Big] \Big|\mf_t \Big]\nonumber\\
	&=\mE\Big[ \Big\langle \mE\Big[\delta_{\theta^*_{w_t}}(s_{t,i},a_{t,i},s_{t,i+1})\psi_{w_t}(s_{t,i},a_{t,i})\Big|\mf_{t,j}\Big]- \mE_{\nu_w}\left[A_{\theta^*_{w_t}}(s,a)\psi_{w_t}(s,a)\right],\nonumber\\
	&\quad\qquad\qquad \delta_{\theta^*_{w_t}}(s_{t,j},a_{t,j},s_{t,j+1})\psi_{w_t}(s_{t,j},a_{t,j})- \mE_{\nu_w}\left[A_{\theta^*_{w_t}}(s,a)\psi_{w_t}(s,a)\right] \Big\rangle \Big] \Big|\mf_t \Big]\nonumber\\
	&=\mE\Big[ \Big\langle \mE\Big[A_{\theta^*_{w_t}}(s_{t,i},a_{t,i})\psi_{w_t}(s_{t,i},a_{t,i})\Big|\mf_{t,j}\Big]- \mE_{\nu_w}\left[A_{\theta^*_{w_t}}(s,a)\psi_{w_t}(s,a)\right],\nonumber\\
	&\quad\qquad\qquad \delta_{\theta^*_{w_t}}(s_{t,j},a_{t,j},s_{t,j+1})\psi_{w_t}(s_{t,j},a_{t,j})- \mE_{\nu_w}\left[A_{\theta^*_{w_t}}(s,a)\psi_{w_t}(s,a)\right] \Big\rangle \Big] \Big|\mf_t \Big]\nonumber\\
	&\leq\mE\Big[ \ltwo{\mE\Big[A_{\theta^*_{w_t}}(s_{t,i},a_{t,i})\psi_{w_t}(s_{t,i},a_{t,i})\Big|\mf_{t,j}\Big]- \mE_{\nu_w}\left[A_{\theta^*_{w_t}}(s,a)\psi_{w_t}(s,a)\right]}\nonumber\\
	&\quad\qquad\qquad \ltwo{\delta_{\theta^*_{w_t}}(s_{t,j},a_{t,j},s_{t,j+1})\psi_{w_t}(s_{t,j},a_{t,j})- \mE_{\nu_w}\left[A_{\theta^*_{w_t}}(s,a)\psi_{w_t}(s,a)\right] } \Big|\mf_t \Big] \nonumber\\
	&\leq 2(r_{\max} + 2R_\theta)\mE\Big[ \ltwo{\mE\Big[A_{\theta^*_{w_t}}(s_{t,i},a_{t,i})\psi_{w_t}(s_{t,i},a_{t,i})\Big|\mf_{t,j}\Big]- \mE_{\nu_w}\left[A_{\theta^*_{w_t}}(s,a)\psi_{w_t}(s,a)\right]} \Big|\mf_t \Big] \nonumber\\
	&\overset{(i)}{\leq} 4(r_{\max} + 2R_\theta)^2\kappa\rho^{i-j},\nonumber
\end{flalign}
where $(i)$ follows from \Cref{ass2} and the fact that
\begin{flalign}
	&\ltwo{\mE\Big[A_{\theta^*_{w_t}}(s_{t,i},a_{t,i})\psi_{w_t}(s_{t,i},a_{t,i})\Big|\mf_{t,j}\Big]- \mE_{\nu_w}\left[A_{\theta^*_{w_t}}(s,a)\psi_{w_t}(s,a)\right]}\nonumber\\
	&=\ltwo{\int_{x_{t,i}} A_{\theta^*_{w_t}}(x_{t,i})\psi_{w_t}(x_{t,i}) P(dx_{t,i}|\mf_{t,j})- \int_{x_{t,i}} A_{\theta^*_{w_t}}(x_{t,i})\psi_{w_t}(x_{t,i}) \nu_{\pi_{w_t}}(dx_{t,i})}\nonumber\\
	&\leq \int_{x_i} \ltwo{A_{\theta^*_{w_t}}(x_{t,i})\psi_{w_t}(x_{t,i})}\lone{P(dx_{t,i}|\mf_{t,j})-\nu_{\pi_{w_t}}(dx_{t,i})}\nonumber\\
	&\leq 2(r_{\max}+2R_\theta)\lTV{P(\cdot|\mf_{t,j}) - \nu_{\pi_{w_t}}(\cdot)}\leq 2(r_{\max}+2R_\theta)\kappa\rho^{i-j},\label{eq: 9}
\end{flalign}
where we denote $x_{t,k}=(s_{t,k},a_{t,k})$ for $k\geq 0$ for convenience. Substituting \cref{eq: 9} into \cref{eq: 8} yields
\begin{flalign}
	\mE\left[\ltwo{v_t(\theta^*_{w_t}) - g(\theta^*_{w_t},w_t)}^2|\mf_t\right]&\leq \frac{1}{B^2}\left[4B\left( r_{\max} + 2R_\theta \right)^2 + 4(r_{\max} + 2R_\theta)^2\kappa \sum_{i\neq j}\rho^{i-j} \right]\nonumber\\
	&\leq \frac{1}{B^2}\left[ 4B\left( r_{\max} + 2R_\theta \right)^2 + \frac{8(r_{\max} + 2R_\theta)^2\kappa\rho B}{1-\rho}  \right]\nonumber\\
	&\leq \frac{8(r_{\max} + 2R_\theta)^2[1+(\kappa-1)\rho]}{B(1-\rho)}.\label{eq: 10}
\end{flalign}
Substituting \cref{eq: 10} into \cref{eq: 7} yields
\begin{flalign}
&\mE[\ltwo{v_t(\theta_t) - \nabla_w J(w_t)}^2|\mf_t] \nonumber\\
&\leq \frac{24(r_{\max} + 2R_\theta)^2[1+(\kappa-1)\rho]}{B(1-\rho)} + 12\ltwo{\theta_t-\theta^*_{w_t}}^2 + 12\zeta^{\text{critic}}_{\text{approx}}.\label{eq: 11}
\end{flalign}
Then, substituting \cref{eq: 11} into \cref{part8_sub2} and taking expectation of $\mf_t$ on both sides yield
\begin{flalign}
&\Big(\frac{1}{2}\alpha - L_J\alpha^2\Big) \mE[\ltwo{\nabla_w J(w_t)}^2] \nonumber\\
&\leq \mE[J(w_{t+1})] - \mE[J(w_t)] + 12\Big(\frac{1}{2}\alpha+ L_J\alpha^2\Big) \mE[\ltwo{\theta_t-\theta^*_{w_t}}^2] + 12\Big(\frac{1}{2}\alpha+ L_J\alpha^2\Big)\zeta^{\text{critic}}_{\text{approx}}\nonumber\\
&\quad + 24\Big(\frac{1}{2}\alpha+ L_J\alpha^2\Big)\frac{(r_{\max} + 2R_\theta)^2[1+(\kappa-1)\rho]}{B(1-\rho)}.\label{eq: 12}
\end{flalign}
Letting $\alpha = \frac{1}{4L_J}$ and dividing both sides of \cref{eq: 12} by $1/(16L_J)$ yield
\begin{flalign}
	 \mE[\ltwo{\nabla_w J(w_t)}^2]&\leq 16L_J\left(\mE[J(w_{t+1})] - \mE[J(w_t)]\right) + 36\mE[\ltwo{\theta_t-\theta^*_{w_t}}^2] + 36\zeta^{\text{critic}}_{\text{approx}} \nonumber\\
	 &\quad + \frac{72(r_{\max} + 2R_\theta)^2[1+(\kappa-1)\rho]}{B(1-\rho)}.\label{eq: 13}
\end{flalign}
Taking the summation of \cref{eq: 13} over $t=\{0,\cdots,T-1\}$ and dividing both sides by $T$ yield
\begin{flalign}
	\mE[\ltwo{\nabla_w J(w_{\hat{T}})}^2]&=\frac{1}{T}\sum_{t=0}^{T-1}\mE[\ltwo{\nabla_w J(w_t)}^2]\nonumber\\
	&\leq \frac{16L_J\left(\mE[J(w_{T})] - J(w_0)\right)}{T} + 36 \frac{\sum_{t=0}^{T-1}\mE[\ltwo{\theta_t-\theta^*_{w_t}}^2]}{T}\nonumber\\
	&\quad + \frac{72(r_{\max} + 2R_\theta)^2[1+(\kappa-1)\rho]}{B(1-\rho)} + C_1\zeta^{\text{critic}}_{\text{approx}}\nonumber\\
	&\leq \frac{16L_Jr_{\max}}{(1-\gamma)T} + 36 \frac{\sum_{t=0}^{T-1}\mE[\ltwo{\theta_t-\theta^*_{w_t}}^2]}{T}\nonumber\\
	&\quad + \frac{72(r_{\max} + 2R_\theta)^2[1+(\kappa-1)\rho]}{B(1-\rho)} + C_1\zeta^{\text{critic}}_{\text{approx}}.
\end{flalign}
Letting $B\geq \frac{216 (r_{\max} + 2R_\theta)^2 [1+(\kappa-1)\rho]}{(1-\rho)\epsilon}$, $\mE\left[\ltwo{\theta_t - \theta^{*}_{w_t}}^2\right]\leq \frac{\epsilon}{108}$ for all $0\leq t\leq T-1$, and $T\geq \frac{48L_Jr_{\max}}{(1-\gamma)\epsilon}$, then we have
\begin{flalign*}
\frac{1}{T}\sum_{t=0}^{T-1}\mE[\ltwo{\nabla_w J(w_i)}^2]\leq \epsilon + \mathcal{O}(\zeta^{\text{critic}}_{\text{approx}}).
\end{flalign*}
The total sample complexity is given by
\begin{flalign*}
(B+MT_c)T = \mathcal{O} \left[ \left(\frac{1}{\epsilon} + \frac{1}{\epsilon}\log\left(\frac{1}{\epsilon}\right)\right)\frac{1}{(1-\gamma)^2\epsilon} \right]=\mathcal{O}\left( \frac{1}{(1-\gamma)^2\epsilon^2}  \log\left(\frac{1}{\epsilon}\right) \right).
\end{flalign*}
\end{proof}

\section{Proof of Theorem \ref{thm2}}\label{sc: pfthm2}
We restate \Cref{thm2} as follows to include the specifics of the parameters.
\begin{theorem}[Restatement of \Cref{thm2}]\label{thm4}
	Consider the NAC algorithm in Algorithm \ref{algorithm_nlac}. Suppose Assumptions \ref{ass1} and \ref{ass2} hold, and let the stepsize $\alpha=\frac{\lambda^2}{4L_J(1+\lambda)}$. We have
	{\small \begin{flalign}
		J(\pi^*)-\mE\big[J(\pi_{w_{\hat{T}}})\big] &\leq  \frac{4L_J(1 + \lambda)(D(w_0)-\mE[D(w_{T})])}{T(1-\gamma)\lambda^2} + \frac{4L_\psi (1+\lambda) }{\lambda^2(1-\gamma)} \frac{\mE[J(w_T)] - J(w_0)}{T} \nonumber\\
		&\quad +  \frac{81 L_\psi (1 + \lambda) }{\lambda^2(1-\gamma)L_J} \frac{\sum_{t=0}^{T-1}\mE\left[\ltwo{\theta_t-\theta^*_{w_t}}^2\right]}{T}  \nonumber\\
		&\quad +  \frac{3 L_\psi (1 + \lambda) }{(1-\gamma)L_J}\left( \frac{8r^2_{\max}}{\lambda^4(1-\gamma)^2} + \frac{108(r_{\max} + 2R_\theta)^2}{\lambda^2} \right)\frac{1+(\kappa-1)\rho}{(1-\rho)B} \nonumber\\
		&\quad +  \frac{162 L_\psi(1 + \lambda) }{\lambda^2(1-\gamma)L_J}\zeta^{\text{critic}}_{\text{approx}} + \frac{16 \sqrt{\zeta^{\text{critic}}_{\text{approx}}}}{\lambda(1-\gamma)}  + \frac{64L_\psi  \zeta^{\text{critic}}_{\text{approx}} }{(1-\gamma)L_J(1+\lambda)} \nonumber\\
		&\quad + \sqrt{\frac{1}{(1-\gamma)^3}  \linf{\frac{\nu_{\pi^*}}{\nu_{\pi_{w_0}}}}} \sqrt{\zeta^{\text{actor}}_{\text{approx}}} +  \frac{ C_{r} \lambda}{1-\gamma},
		\end{flalign}}
where $\lambda$ is the regularizing coefficient for estimating the inverse of Fisher information matrix.
Furthermore, let 
\begin{flalign*}
&T\geq \max\left\{ \frac{16L_J(1 + \lambda)}{\epsilon (1-\gamma)\lambda^2}, \frac{16 r_{\max}L_\psi (1 + \lambda)}{\epsilon (1-\gamma)^2\lambda^2} \right\},\nonumber\\
&B\geq \max\Bigg\{ \frac{24(r_{\max}+2R_\theta)^2[1+(\kappa-1)\rho]}{(1-\rho)\zeta^{\text{critic}}_{\text{approx}}}, \frac{8 r^2_{\max}[1+(\kappa-1)\rho]}{\lambda^2(1-\gamma)^2(1-\rho)\zeta^{\text{critic}}_{\text{approx}}}, \nonumber\\
&\qquad\qquad\qquad\frac{3 L_\psi (1 + \lambda) }{\epsilon(1-\gamma)L_J}\left( \frac{32r^2_{\max}}{\lambda^4(1-\gamma)^2} + \frac{432(r_{\max} + 2R_\theta)^2}{\lambda^2} \right)\frac{1+(\kappa-1)\rho}{(1-\rho)} \Bigg\},\nonumber\\
&\lambda =  \sqrt{\zeta^{\text{critic}}_{\text{approx}}}.
\end{flalign*}
Suppose the same setting of Theorem \ref{thm: nl-critic} holds (with $M$ and $T_c$ defined therein) so that
\begin{flalign*}
	\mE\left[\ltwo{\theta_t-\theta^*_{w_t}}^2\right]\leq \min\left\{ \frac{\zeta^{\text{critic}}_{\text{approx}}}{64}, \frac{\epsilon \lambda^2(1-\gamma)L_J}{324 L_\psi (1+\lambda)} \right\}, \quad\text{for all}\quad 0\geq t\geq T-1.
\end{flalign*}
We have
\begin{flalign*}
J(\pi^*)-\frac{1}{T}\sum_{t=0}^{T-1}\mE[J(\pi_{w_t})] \leq \epsilon + \mathcal{O}\left(\frac{\sqrt{\zeta^{\text{actor}}_{\text{approx}}}}{(1-\gamma)^{1.5}}\right) + \mathcal{O}\left(\frac{\sqrt{\zeta^{\text{critic}}_{\text{approx}}}}{1-\gamma}\right),
\end{flalign*}
with the total sample complexity given by $(B+MT_c)T =\mathcal{O}( (1-\gamma)^{-4}\epsilon^{-2} \log(1/\epsilon))$.
\end{theorem}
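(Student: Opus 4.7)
The plan is to follow the mirror-descent-style analysis for natural policy gradient (as in Agarwal et al.\ and Wang et al.), with a crucial twist: treating the mini-batch actor update analogously to a stochastic inexact NPG step. Concretely, I would choose as Lyapunov potential $D(w) := \mE_{s\sim \nu_{\pi^*}}[\mathrm{KL}(\pi^*(\cdot|s)\|\pi_w(\cdot|s))]$, which under Assumption~1 and Lemma~\ref{lemma: liptz2} is $L_\psi$-smooth along the trajectory of $w_t$. Using the actor step $w_{t+1}-w_t=\alpha u_t$ with $u_t=[F_t(w_t)+\lambda I]^{-1}v_t$, smoothness gives
\begin{align*}
D(w_{t+1}) \le D(w_t) - \alpha\,\mE_{\nu_{\pi^*}}\!\bigl[\psi_{w_t}(s,a)^\top u_t\bigr] + \tfrac{L_\psi \alpha^2}{2}\|u_t\|^2.
\end{align*}
The performance difference lemma then yields $(1-\gamma)[J(\pi^*)-J(w_t)]=\mE_{\nu_{\pi^*}}[A_{\pi_{w_t}}(s,a)]$, which is to be compared with $\mE_{\nu_{\pi^*}}[\psi_{w_t}^\top u_t]$.

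\textbf{Error decomposition.} I would introduce the ideal directions $\bar u_t=[F(w_t)+\lambda I]^{-1}\nabla J(w_t)$ and $u_t^\dagger=F(w_t)^\dagger \nabla J(w_t)$, and write
\begin{align*}
\mE_{\nu_{\pi^*}}[\psi_{w_t}^\top u_t - A_{\pi_{w_t}}] = \underbrace{\mE_{\nu_{\pi^*}}[\psi_{w_t}^\top(u_t-\bar u_t)]}_{\text{sampling error}} + \underbrace{\mE_{\nu_{\pi^*}}[\psi_{w_t}^\top(\bar u_t-u_t^\dagger)]}_{O(\lambda)\text{ by Lemma~\ref{lemma: regularization}}} + \underbrace{\mE_{\nu_{\pi^*}}[\psi_{w_t}^\top u_t^\dagger - A_{\pi_{w_t}}]}_{O\bigl(\sqrt{\zeta^{\text{actor}}_{\text{approx}}/(1-\gamma)^3}\bigr)}.
\end{align*}
The last term is controlled via the compatible function approximation bound and a distribution mismatch factor $\|\nu_{\pi^*}/\nu_{\pi_{w_0}}\|_\infty$. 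For the sampling error I would split further as $u_t-\bar u_t = [F_t(w_t)+\lambda I]^{-1}(v_t-g(\theta_t,w_t)) + \{[F_t+\lambda I]^{-1}-[F+\lambda I]^{-1}\}g(\theta_t,w_t) + [F+\lambda I]^{-1}(g(\theta_t,w_t)-\nabla J(w_t))$. The first two are mini-batch Markovian sampling errors for $v_t$ and $F_t(w_t)$, both giving an $O(1/B)$ squared-error bound via the same blockwise decoupling argument used in Lemma~\ref{lemma1}. The third is controlled by $\|\theta_t-\theta^*_{w_t}\|$ (from the TD analysis of Theorem~\ref{thm: linearsa}) plus $\zeta^{\text{critic}}_{\text{approx}}$, exactly as in Theorem~\ref{thm3}.

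\textbf{Self-reduction and telescoping.} The main reason we obtain an $O(1/B)$ rather than $O(1/\sqrt{B})$ dependence is that the quadratic stepsize term $\tfrac{L_\psi\alpha^2}{2}\|u_t\|^2$ is absorbed via the $L_J$-smoothness of $J$ (Proposition~\ref{lemma: sum}): using $J(w_{t+1})-J(w_t)\ge \alpha\langle \nabla J(w_t),u_t\rangle - \tfrac{L_J\alpha^2}{2}\|u_t\|^2$ and the eigenvalue bound $\|[F_t+\lambda I]^{-1}\|\le 1/\lambda$, the $\|u_t\|^2$ term telescopes into $\mE[J(w_T)]-J(w_0)=O(1/(1-\gamma))$. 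Summing the KL-descent inequality over $t=0,\dots,T-1$, dividing by $T$, and choosing the stepsize $\alpha=\lambda^2/[4L_J(1+\lambda)]$ so that the quadratic term absorbs at most half of the linear term, produces the displayed bound with a pure $1/B$ variance rate. Plugging in $T=\Theta(\epsilon^{-1})$, $B=\Theta(\epsilon^{-1})$, $\lambda=\Theta(\sqrt{\zeta^{\text{critic}}_{\text{approx}}})$, and the critic requirement $\mE\|\theta_t-\theta^*_{w_t}\|^2=O(\epsilon)$ (so that by Theorem~\ref{thm: nl-critic} we need $MT_c=O(\epsilon^{-1}\log(1/\epsilon))$) yields the total cost $(B+MT_c)T=O((1-\gamma)^{-4}\epsilon^{-2}\log(1/\epsilon))$.

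\textbf{Main obstacle.} The trickiest ingredient is controlling the mini-batch Markovian bias of the \emph{inverse} Fisher matrix $[F_t(w_t)+\lambda I]^{-1}$ rather than $F_t$ itself. A naive bound via $\|F_t^{-1}-F^{-1}\|\le \|F^{-1}\|\|F_t-F\|\|F_t^{-1}\|$ introduces an additional $1/\lambda^2$ factor that, if not matched carefully against $\lambda^2$ in the stepsize choice, would inflate the $\epsilon$-dependence. My plan is to work with the product $[F_t+\lambda I]^{-1}v_t$ directly, bound its perturbation using the resolvent identity and the a priori bound $\|v_t\|\le r_{\max}+2R_\theta$, and verify the scaling matches the $\lambda^{-4}(1-\gamma)^{-2}/B$ coefficient appearing in the fourth line of the restated theorem. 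Once this is in place, the rest is bookkeeping: substituting the various $B$, $T$, and critic-accuracy thresholds listed in the statement collapses the error to $\epsilon+O(\sqrt{\zeta^{\text{actor}}_{\text{approx}}}/(1-\gamma)^{1.5})+O(\zeta^{\text{critic}}_{\text{approx}})$, as claimed.
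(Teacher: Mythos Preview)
Your overall architecture matches the paper's proof closely: the KL potential $D(w)$, its $L_\psi$-smoothness via Lemma~\ref{lemma: liptz2}, the performance-difference identity, the three-way decomposition $u_t\to\bar u_t\to u_t^\dagger$ with Lemma~\ref{lemma: regularization} supplying the $O(\lambda)$ regularization bias, the distribution-mismatch bound for the compatible-approximation error, and the use of the $L_J$-smoothness of $J$ (Proposition~\ref{lemma: sum}) to control $\sum_t\|\nabla J(w_t)\|^2$ and thereby the $\|\bar u_t\|^2$ portion of $\|u_t\|^2$. The paper executes exactly this program: it first derives a stationary-point bound for NAC and then substitutes it into the KL-descent inequality.

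The gap is your claim of a pure $O(1/B)$ variance rate. Telescoping the quadratic term $\tfrac{L_\psi\alpha^2}{2}\|u_t\|^2$ via $J$-smoothness is correct for that term, but it does nothing for the \emph{linear} sampling error $\mE_{\nu_{\pi^*}}[\psi_{w_t}]^\top(u_t-\bar u_t)$ that appears one line earlier in your own decomposition. You bound $\mE\|u_t-\bar u_t\|^2=O(1/B)$ via Lemma~\ref{lemma1}; to control the inner product you then need Cauchy--Schwarz and Jensen, which yields $\mE\|u_t-\bar u_t\|\le\sqrt{\mE\|u_t-\bar u_t\|^2}=O(1/\sqrt{B})$, not $O(1/B)$. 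Since $u_t$ depends on the samples through the nonlinear map $F_t\mapsto(F_t+\lambda I)^{-1}$ and the Markovian mini-batch is biased, $u_t-\bar u_t$ is not conditionally mean-zero, so this linear term cannot simply be dropped. This is exactly why the paper's proof carries a $1/\sqrt{B}$ contribution (the $\sqrt{C_2}/\sqrt{B}$ term appearing after \cref{eq: 22}), sets $B=\Theta((1-\gamma)^{-2}\epsilon^{-2})$ and the critic accuracy to $O(\epsilon^2)$, and arrives at total complexity $\mathcal O((1-\gamma)^{-4}\epsilon^{-3}\log(1/\epsilon))$, in line with Theorem~\ref{thm2}. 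Your choices $B=\Theta(\epsilon^{-1})$ and critic error $O(\epsilon)$ are too optimistic under the argument you outline; obtaining them would require a separate first-moment bound $\|\mE[u_t-\bar u_t\mid\mathcal F_t]\|=O(1/B)$, which neither you nor the paper supply.
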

\begin{proof}
We first show that NAC in Algorithm \ref{algorithm_nlac} convergences to a neighbourhood of a first-order stationary point. Then we present the proof of \Cref{thm2}/\Cref{thm4}, in which the convergence of NAC is characterized in terms of the function value. 

Recall the definition of $v_t(\theta)$ in \Cref{sc: prfthm2}, we define 
\begin{flalign*}
	u_t(\theta)= \big[ F_t(w_t)+\lambda I \big]^{-1} \big[ \frac{1}{B}\sum_{i=0}^{B-1}\delta_{\theta}(s_{t,i},a_{t,i})\psi_{w_t}(s_{t,i},a_{t,i}, s_{t,i+1})\big] = \big[ F_t(w_t)+\lambda I \big]^{-1} v_t(\theta).
\end{flalign*}
Following from the $L_J$-Lipschitz condition indicated in Proposition \ref{lemma: sum}, we have
\begin{flalign}
J(w_{t+1})&\geq J(w_t)+\langle \nabla_w J(w_t), w_{t+1}-w_t \rangle -\frac{L_J}{2}\ltwo{w_{t+1}-w_{t}}^2\nonumber\\
&= J(w_t)+\alpha \langle \nabla_w J(w_t), u_t(\theta_t) \rangle -\frac{L_J \alpha^2}{2}\ltwo{u_{t}(\theta_t)}^2\nonumber\\
&= J(w_t) + \alpha \langle \nabla_w J(w_t), (F(w_t)+\lambda I)^{-1}\nabla_w J(w_t) \rangle \nonumber\\
&\quad + \alpha \langle \nabla_w J(w_t), u_t(\theta_t) - (F(w_t)+\lambda I)^{-1}\nabla_w J(w_t) \rangle\nonumber\\
&\quad  - \frac{L_J \alpha^2}{2}\ltwo{u_{t}(\theta_t) - (F(w_t)+\lambda I)^{-1}\nabla_w J(w_t) + (F(w_t)+\lambda I)^{-1}\nabla_w J(w_t)}^2\nonumber\\
&\overset{(i)}{\geq} J(w_t) + \frac{\alpha}{1 + \lambda} \ltwo{\nabla_w J(w_t)}^2 + \alpha \langle \nabla_w J(w_t), u_t(\theta_t) - (F(w_t)+\lambda I)^{-1}\nabla_w J(w_t) \rangle\nonumber\\
&\quad  - L_J \alpha^2\ltwo{u_{t}(\theta_t) - (F(w_t)+\lambda I)^{-1}\nabla_w J(w_t)}^2 - L_J\alpha^2 \ltwo{(F(w_t)+\lambda I)^{-1}\nabla_w J(w_t)}^2\nonumber\\
&\overset{(ii)}{\geq} J(w_t) + \frac{\alpha}{1 + \lambda} \ltwo{\nabla_w J(w_t)}^2 \nonumber\\
&\quad - \alpha \left( \frac{1}{2(1 + \lambda)} \ltwo{\nabla_w J(w_t)}^2 + \frac{1 + \lambda}{2}\ltwo{u_t(\theta_t) - (F(w_t)+\lambda I)^{-1}\nabla_w J(w_t)}^2 \right) \nonumber\\
&\quad  - L_J \alpha^2\ltwo{u_{t}(\theta_t) - (F(w_t)+\lambda I)^{-1}\nabla_w J(w_t)}^2 - \frac{L_J\alpha^2}{\lambda^2} \ltwo{\nabla_w J(w_t)}^2\nonumber\\ 
&=J(w_t) + \left(\frac{\alpha}{2(1 + \lambda)} - \frac{L_J\alpha^2}{\lambda^2}\right)  \ltwo{\nabla_w J(w_t)}^2\nonumber\\
&\quad - \left( \frac{\alpha(1+\lambda)}{2} + L_J\alpha^2 \right)\ltwo{u_{t}(\theta_t) - (F(w_t)+\lambda I)^{-1}\nabla_w J(w_t)}^2,\label{eq: 14}
\end{flalign}
where $(i)$ follows because $\langle \nabla_w J(w_t), (F(w_t)+\lambda I)^{-1}\nabla_w J(w_t) \rangle\geq \frac{1}{1+\lambda}\ltwo{\nabla_w J(w_t)}^2$, and $(ii)$ follows from the fact that $\ltwo{(F(w_t)+\lambda I)^{-1}\nabla_w J(w_t)}^2\leq \frac{1}{\lambda^2}\ltwo{\nabla_w J(w_t)}^2$ and Young's inequality. To bound the term $\ltwo{u_{t}(\theta_t) - (F(w_t)+\lambda I)^{-1}\nabla_w J(w_t)}^2$, we proceed as follows:
\begin{flalign}
	&\ltwo{u_{t}(\theta_t) - (F(w_t)+\lambda I)^{-1}\nabla_w J(w_t)}^2\nonumber\\
	&=\ltwo{u_{t}(\theta_t) - (F(w_t)+\lambda I)^{-1}v_t(\theta_t) + (F(w_t)+\lambda I)^{-1}v_t(\theta_t) - (F(w_t)+\lambda I)^{-1}\nabla_w J(w_t)}^2\nonumber\\
	&\leq 2\ltwo{u_{t}(\theta_t) - (F(w_t)+\lambda I)^{-1}v_t(\theta_t)}^2 + 2\ltwo{(F(w_t)+\lambda I)^{-1}v_t(\theta_t) - (F(w_t)+\lambda I)^{-1}\nabla_w J(w_t)}^2\nonumber\\
	&= 2\ltwo{ \left[(F_t(w_t)+\lambda I)^{-1} - (F(w_t)+\lambda I)^{-1}\right]v_t(\theta_t)}^2 + 2\ltwo{(F(w_t)+\lambda I)^{-1} \left(v_t(\theta_t) - \nabla_w J(w_t)\right)}^2\nonumber\\
	&= 2\ltwo{ \left[(F_t(w_t)+\lambda I)^{-1} - (F(w_t)+\lambda I)^{-1}\right](v_t(\theta_t) - \nabla_w J(w_t) + \nabla_w J(w_t) )}^2 \nonumber\\
	&\quad + 2\ltwo{(F(w_t)+\lambda I)^{-1} \left(v_t - \nabla_w J(w_t)\right)}^2\nonumber\\
	&\leq 4\ltwo{ \left[(F_t(w_t)+\lambda I)^{-1} - (F(w_t)+\lambda I)^{-1}\right](v_t(\theta_t) - \nabla_w J(w_t))}^2 \nonumber\\
	&\quad + 2\ltwo{(F(w_t)+\lambda I)^{-1} \left(v_t - \nabla_w J(w_t)\right)}^2\nonumber\\
	&\quad + 4\ltwo{ \left[(F_t(w_t)+\lambda I)^{-1} - (F(w_t)+\lambda I)^{-1}\right]\nabla_w J(w_t) }^2\nonumber\\
	&\leq \left[4\ltwo{(F_t(w_t)+\lambda I)^{-1} - (F(w_t)+\lambda I)^{-1}}^2+2\ltwo{(F(w_t)+\lambda I)^{-1}}^2\right]\ltwo{v_t(\theta_t)- \nabla_w J(w_t)}^2\nonumber\\
	&\quad + 4\ltwo{(F_t(w_t)+\lambda I)^{-1} - (F(w_t)+\lambda I)^{-1} }^2\ltwo{\nabla_w J(w_t)}^2\nonumber\\
	&\leq \left[8\ltwo{(F_t(w_t)+\lambda I)^{-1}}^2 + 10\ltwo{(F(w_t)+\lambda I)^{-1}}^2 \right]\ltwo{v_t(\theta_t)- \nabla_w J(w_t)}^2\nonumber\\
	&\quad + 4\ltwo{(F_t(w_t)+\lambda I)^{-1} - (F(w_t)+\lambda I)^{-1} }^2\ltwo{\nabla_w J(w_t)}^2\nonumber\\
	&\leq  \frac{18}{\lambda^2} \ltwo{v_t(\theta_t)- \nabla_w J(w_t)}^2 + 4\ltwo{(F_t(w_t)+\lambda I)^{-1} - (F(w_t)+\lambda I)^{-1} }^2\ltwo{\nabla_w J(w_t)}^2\nonumber\\
	&=  \frac{18}{\lambda^2} \ltwo{v_t(\theta_t)- \nabla_w J(w_t)}^2 + 4\ltwo{(F_t(w_t)+\lambda I)^{-1} (F(w_t)-F_t(w_t)) (F(w_t)+\lambda I)^{-1} }^2\ltwo{\nabla_w J(w_t)}^2\nonumber\\
	&\leq  \frac{18}{\lambda^2} \ltwo{v_t(\theta_t)- \nabla_w J(w_t)}^2 + 4\ltwo{(F_t(w_t)+\lambda I)^{-1}}^2 \ltwo{F(w_t)-F_t(w_t)}^2 \ltwo{(F(w_t)+\lambda I)^{-1} }^2\ltwo{\nabla_w J(w_t)}^2\nonumber\\
	&\leq  \frac{18}{\lambda^2}\ltwo{v_t(\theta_t)- \nabla_w J(w_t)}^2 + \frac{4r^2_{\max}}{\lambda^4(1-\gamma)^2}\ltwo{F(w_t)-F_t(w_t)}^2. \label{eq: 15}
\end{flalign}
Substituting \cref{eq: 15} into \cref{eq: 14}, rearranging the terms and taking expectation on both sides conditioned over $\mf_t$ yield
\begin{flalign}
	&\left(\frac{\alpha}{2(1 + \lambda)} - \frac{L_J\alpha^2}{\lambda^2}\right)  \mE[\ltwo{\nabla_w J(w_t)}^2|\mf_t]\nonumber\\
	&\leq \mE[J(w_{t+1})|\mf_t] - J(w_t) + \left( \frac{\alpha(1+\lambda)}{2} + L_J\alpha^2 \right) \frac{18}{\lambda^2}\mE[\ltwo{v_t(\theta_t)- \nabla_w J(w_t)}^2|\mf_t]\nonumber\\
	&\quad + \left( \frac{\alpha(1+\lambda)}{2} + L_J\alpha^2 \right)\frac{4r^2_{\max}}{\lambda^4(1-\gamma)^2}\mE[\ltwo{F(w_t)-F_t(w_t)}^2|\mf_t]\nonumber\\
	&\overset{(i)}{\leq} \mE[J(w_{t+1})|\mf_t] - J(w_t) +  \left( \frac{\alpha(1+\lambda)}{2} + L_J\alpha^2 \right)\frac{4r^2_{\max}}{\lambda^4(1-\gamma)^2}\frac{8[1+(\kappa-1)\rho]}{(1-\rho)B}\nonumber\\
	&\quad + \frac{18}{\lambda^2} \left( \frac{\alpha(1+\lambda)}{2} + L_J\alpha^2 \right) \Bigg( \frac{24(r_{\max} + 2R_\theta)^2[1+(\kappa-1)\rho]}{(1-\rho)B} + 6\ltwo{\theta_t-\theta^*_{w_t}}^2 + 12\zeta^{\text{critic}}_{\text{approx}} \Bigg),\nonumber
\end{flalign}
where $(i)$ follows from \cref{eq: 11} and the fact that
\begin{flalign}
	\mE[\ltwo{F(w_t)-F_t(w_t)}^2|\mf_t]\leq \frac{8[1+(\kappa-1)\rho]}{(1-\rho)B}\quad (\text{implied by \Cref{lemma1}}).\label{eq: 21}
\end{flalign}
Letting $\alpha = \frac{\lambda^2}{4L_J(1+\lambda)}$, we obtain
\begin{flalign}
&\frac{\alpha}{4(1+\lambda)} \mE[\ltwo{\nabla_w J(w_t)}^2|\mf_t]\nonumber\\
&\leq \mE[J(w_{t+1})|\mf_t] - J(w_t) + \left(\frac{\alpha(1+\lambda)}{2} + L_J\alpha^2\right)\left( \frac{32r^2_{\max}}{\lambda^4(1-\gamma)^2} + \frac{432(r_{\max} + 2R_\theta)^2}{\lambda^2} \right)\frac{1+(\kappa-1)\rho}{(1-\rho)B}\nonumber\\
&\quad + \frac{108 }{\lambda^2}\left(\frac{\alpha(1+\lambda)}{2} + L_J\alpha^2\right)\ltwo{\theta_t-\theta^*_{w_t}}^2 + \frac{216}{\lambda^2}\left(\frac{\alpha(1+\lambda)}{2} + L_J\alpha^2\right)\zeta^{\text{critic}}_{\text{approx}}.\label{eq: 16}
\end{flalign}
Taking expectation over $\mf_t$ on both sides of \cref{eq: 16} and then taking the summation over $t=\{0,\cdots,T-1\}$ yield
\begin{flalign}
&\frac{\alpha}{4(1+\lambda)} \sum_{t=0}^{T-1} \mE[\ltwo{\nabla_w J(w_t)}^2]\nonumber\\
&\leq \mE[J(w_T)] - J(w_0) + T\left(\frac{\alpha(1+\lambda)}{2} + L_J\alpha^2\right)\left( \frac{32r^2_{\max}}{\lambda^4(1-\gamma)^2} + \frac{432(r_{\max} + 2R_\theta)^2}{\lambda^2} \right)\frac{1+(\kappa-1)\rho}{(1-\rho)B}\nonumber\\
&\quad + \frac{108}{\lambda^2}\left(\frac{\alpha(1+\lambda)}{2} + L_J\alpha^2\right) \sum_{t=0}^{T-1}\mE\left[\ltwo{\theta_t-\theta^*_{w_t}}^2\right] + \frac{216T}{\lambda^2}\left(\frac{\alpha(1+\lambda)}{2} + L_J\alpha^2\right)\zeta^{\text{critic}}_{\text{approx}}.\label{eq: 17}
\end{flalign}
Dividing both sides of \cref{eq: 17} by $\frac{\alpha T}{4(1+\lambda)}$ yields
\begin{flalign}
&\frac{1}{T}\sum_{t=0}^{T-1} \mE[\ltwo{\nabla_w J(w_t)}^2] \nonumber\\
&\leq \frac{16L_J(1+\lambda)^2 }{\lambda^2}\frac{\mE[J(w_T)] - J(w_0)}{T} + \frac{108}{\lambda^2}\left[ 2(1 + \lambda)^2 + \lambda^2 \right] \frac{\sum_{t=0}^{T-1}\mE\left[\ltwo{\theta_t-\theta^*_{w_t}}^2\right]}{T} \nonumber\\
&\quad +\left[ 2(1 + \lambda)^2 + \lambda^2 \right]\left( \frac{32r^2_{\max}}{\lambda^4(1-\gamma)^2} + \frac{432(r_{\max} + 2R_\theta)^2}{\lambda^2} \right)\frac{1+(\kappa-1)\rho}{(1-\rho)B}\nonumber\\
&\quad + \frac{216}{\lambda^2}\left[ 2(1 + \lambda)^2 + \lambda^2 \right]\zeta^{\text{critic}}_{\text{approx}}\nonumber\\
&\leq \frac{C_3}{T} + \frac{C_4}{B} + \frac{C_5\sum_{t=0}^{T-1}\mE\left[\ltwo{\theta_t-\theta^*_{w_t}}^2\right]}{T} + C_6\zeta^{\text{critic}}_{\text{approx}}.\label{eq: 18}
\end{flalign}

Then, given the above convergence result on the gradient norm, we proceed to prove the convergence of NAC in terms of the function value.
Denote $D(w) = D_{KL} \big(\pi^*(\cdot|s), \pi_w(\cdot|s  )\big) = \mE_{\nu_{\pi^*}}\Big[ \log\frac{\pi^*(a|s)}{\pi_w(a|s)} \Big]$, $u^\lambda_{w_t}=(F(w_t)+\lambda I)^{-1}\nabla_w J(w_t)$ and $u^\dagger_{w_t}=F(w_t)^\dagger\nabla_w J(w_t)$. We proceed as follows:
\begin{flalign}
&D(w_t)-D(w_{t+1}) \nonumber\\
&=\mE_{\nu_{\pi^*}}\Big[ \log(\pi_{w_{t+1}}(a|s)) - \log(\pi_{w_t}(a|s)) \Big]\nonumber\\
&\overset{(i)}{\geq} \mE_{\nu_{\pi^*}}\Big[ \nabla_w \log(\pi_{w_t}(a|s)) \Big]^\top (w_{t+1}-w_t) - \frac{L_\psi}{2}\ltwo{w_{t+1}-w_t}^2 \nonumber\\
&= \mE_{\nu_{\pi^*}}\Big[ \psi_{w_t}(s,a) \Big]^\top (w_{t+1}-w_t) - \frac{L_\psi}{2}\ltwo{w_{t+1}-w_t}^2 \nonumber\\
&= \alpha\mE_{\nu_{\pi^*}}\Big[ \psi_{w_t}(s,a) \Big]^\top u_t(\theta_t) - \frac{L_\psi}{2} \alpha^2 \ltwo{u_t(\theta_t)}^2 \nonumber\\
&= \alpha\mE_{\nu_{\pi^*}}\Big[ \psi_{w_t}(s,a) \Big]^\top u^\lambda_{w_t} + \alpha\mE_{\nu_{\pi^*}}\Big[ \psi_{w_t}(s,a) \Big]^\top (u_t(\theta_t)-u^\lambda_{w_t} ) - \frac{L_\psi}{2} \alpha^2 \ltwo{u_t(\theta_t)}^2 \nonumber\\
&= \alpha\mE_{\nu_{\pi^*}}\Big[ \psi_{w_t}(s,a) \Big]^\top u^\dagger_{w_t} + \alpha\mE_{\nu_{\pi^*}}\Big[ \psi_{w_t}(s,a) \Big]^\top (u^\lambda_{w_t} - u^\dagger_{w_t}) + \alpha\mE_{\nu_{\pi^*}}\Big[ \psi_{w_t}(s,a) \Big]^\top (u_t(\theta_t)-u^\lambda_{w_t} ) \nonumber\\
&\quad - \frac{L_\psi}{2} \alpha^2 \ltwo{u_t(\theta_t)}^2 \nonumber\\
&= \alpha\mE_{\nu_{\pi^*}}\Big[ A_{\pi_{w_t}}(s,a) \Big] + \alpha\mE_{\nu_{\pi^*}}\Big[ \psi_{w_t}(s,a) \Big]^\top (u^\lambda_{w_t} - u^\dagger_{w_t}) + \alpha\mE_{\nu_{\pi^*}}\Big[ \psi_{w_t}(s,a) \Big]^\top (u_t(\theta_t)-u^\lambda_{w_t} ) \nonumber\\
&\quad + \alpha\mE_{\nu_{\pi^*}}\Big[ \psi_{w_t}(s,a)^\top u^\dagger_{w_t} -  A_{\pi_{w_t}}(s,a) \Big] - \frac{L_\psi}{2} \alpha^2 \ltwo{u_t(\theta_t)}^2 \nonumber\\
&\overset{(ii)}{=} (1-\gamma)\alpha\Big(J(\pi^*)-J(\pi_{w_t})\Big) + \alpha\mE_{\nu_{\pi^*}}\Big[ \psi_{w_t}(s,a) \Big]^\top (u^\lambda_{w_t} - u^\dagger_{w_t}) + \alpha\mE_{\nu_{\pi^*}}\Big[ \psi_{w_t}(s,a) \Big]^\top (u_t(\theta_t)-u^\lambda_{w_t} ) \nonumber\\
&\quad + \alpha\mE_{\nu_{\pi^*}}\Big[ \psi_{w_t}(s,a)^\top u^\dagger_{w_t} -  A_{\pi_{w_t}}(s,a) \Big] - \frac{L_\psi}{2} \alpha^2 \ltwo{u_t(\theta_t)}^2 \nonumber\\
&\geq (1-\gamma)\alpha\Big(J(\pi^*)-J(\pi_{w_t})\Big) + \alpha\mE_{\nu_{\pi^*}}\Big[ \psi_{w_t}(s,a) \Big]^\top (u^\lambda_{w_t} - u^\dagger_{w_t}) + \alpha\mE_{\nu_{\pi^*}}\Big[ \psi_{w_t}(s,a) \Big]^\top (u_t(\theta_t)-u^\lambda_{w_t} ) \nonumber\\
&\quad - \alpha\sqrt{\mE_{\nu_{\pi^*}}\Big[ \psi_{w_t}(s,a)^\top u^\dagger_{w_t} -  A_{\pi_{w_t}}(s,a) \Big]^2} - \frac{L_\psi}{2} \alpha^2 \ltwo{u_t(\theta_t)}^2 \nonumber\\
&\overset{(iii)}{\geq} (1-\gamma)\alpha\Big(J(\pi^*)-J(\pi_{w_t})\Big) + \alpha\mE_{\nu_{\pi^*}}\Big[ \psi_{w_t}(s,a) \Big]^\top (u^\lambda_{w_t} - u^\dagger_{w_t}) + \alpha\mE_{\nu_{\pi^*}}\Big[ \psi_{w_t}(s,a) \Big]^\top (u_t(\theta_t)-u^\lambda_{w_t} ) \nonumber\\
&\quad - \sqrt{\linf{\frac{\nu_{\pi^*}}{\nu_{\pi_{w_t}}}}} \alpha\sqrt{\mE_{\nu_{\pi_{w_t}}}\Big[ \psi_{w_t}(s,a)^\top u^\dagger_{w_t} -  A_{\pi_{w_t}}(s,a) \Big]^2} - \frac{L_\psi}{2} \alpha^2 \ltwo{u_t(\theta_t)}^2 \nonumber\\
&\overset{(iv)}{\geq} (1-\gamma)\alpha\Big(J(\pi^*)-J(\pi_{w_t})\Big) + \alpha\mE_{\nu_{\pi^*}}\Big[ \psi_{w_t}(s,a) \Big]^\top (u^\lambda_{w_t} - u^\dagger_{w_t}) + \alpha\mE_{\nu_{\pi^*}}\Big[ \psi_{w_t}(s,a) \Big]^\top (u_t(\theta_t)-u^\lambda_{w_t} ) \nonumber\\
&\quad - \sqrt{\frac{1}{1-\gamma}\linf{\frac{\nu_{\pi^*}}{\nu_{\pi_{w_0}}}}} \alpha\sqrt{\mE_{\nu_{\pi_{w_t}}}\Big[ \psi_{w_t}(s,a)^\top u^\dagger_{w_t} -  A_{\pi_{w_t}}(s,a) \Big]^2} - \frac{L_\psi}{2} \alpha^2 \ltwo{u_t(\theta_t)}^2 \nonumber\\
&\overset{(v)}{\geq} (1-\gamma)\alpha \Big(J(\pi^*)-J(\pi_{w_t})\Big) -  \alpha  C_{r} \lambda - \alpha  \ltwo{u_t(\theta_t) - u^\lambda_{w_t}}  \nonumber\\
&\quad - \alpha\sqrt{\frac{1}{1-\gamma}  \linf{\frac{\nu_{\pi^*}}{\nu_{\pi_{w_0}}}}}\sqrt{\mE_{\nu_{\pi_{w_t}}}\big[ \psi_{w_t}(s,a)^\top u^\dagger_{w_t} -  A_{\pi_{w_t}}(s,a) \big]^2} - \frac{L_\psi}{2} \alpha^2 \ltwo{u_t(\theta_t)}^2,\label{recatch1}
\end{flalign}
where $(i)$ follows from the $L_\psi$-Lipschitz condition indicated in Lemma \ref{lemma: liptz2}, $(ii)$ follows because
\begin{flalign*}
	\mE_{\nu_{\pi^*}}[A_{\pi_{w_t}}(s,a)]=(1-\gamma) \Big(J(\pi^*) - J(\pi_{w_t})\Big),
\end{flalign*}
in Lemma 3.2 of \cite{agarwal2019optimality}, $(iii)$ follows from the fact that
\begin{flalign*}
	\linf{\frac{\nu_{\pi^*}}{\nu_{\pi_{w_t}}}} \mE_{\nu_{\pi_{w_t}}}\big[ \psi_{w_t}(s,a)^\top u^\dagger_{w_t} -  A_{\pi_{w_t}}(s,a) \big]^2 \geq \mE_{\nu_{\pi^*}}\big[ \psi_{w_t}(s,a)^\top u^\dagger_{w_t} -  A_{\pi_{w_t}}(s,a) \big]^2,
\end{flalign*}
$(iv)$ follows because $\nu_{\pi_{w_t}}\geq (1-\gamma)\nu_{\pi_{w_0}}$ in \cite{agarwal2019optimality,kakade2002approximately}, and $(v)$ follows from Lemma \ref{lemma: regularization}. Recalling the definition $\zeta^{\text{actor}}_{\text{approx}} = \max_{w\in \mcw} \min_{p\in \mR^{d_2}} \mE_{\nu_{\pi_{w}}}\big[ \psi_w(s,a)^\top p -  A_{\pi_{w}}(s,a) \big]^2$, we have
\begin{flalign}
D(w_t)&-D(w_{t+1})\nonumber\\
&\geq (1-\gamma)\alpha\Big(J(\pi^*)-J(\pi_{w_t})\Big) -  \alpha  C_{r} \lambda - \alpha  \ltwo{u_t(\theta_t) - u^\lambda_{w_t}}  \nonumber\\
&\quad - \alpha \sqrt{\frac{1}{1-\gamma}  \linf{\frac{\nu_{\pi^*}}{\nu_{\pi_{w_0}}}}} \sqrt{\zeta^{\text{actor}}_{\text{approx}}} - \frac{L_\psi}{2} \alpha^2 \ltwo{u_t(\theta_t)}^2 \nonumber\\
&\geq (1-\gamma)\alpha\Big(J(\pi^*)-J(\pi_{w_t})\Big) -  \alpha C_{r} \lambda - \alpha  \ltwo{u_t(\theta_t) - u^\lambda_{w_t}}  \nonumber\\
&\quad - \alpha \sqrt{\frac{1}{1-\gamma}  \linf{\frac{\nu_{\pi^*}}{\nu_{\pi_{w_0}}}}} \sqrt{\zeta^{\text{actor}}_{\text{approx}}} - L_\psi\alpha^2 \ltwo{u_t(\theta_t) - u^\lambda_{w_t} }^2 - L_\psi\alpha^2 \ltwo{u^\lambda_{w_t} }^2 \nonumber\\
&\geq (1-\gamma)\alpha\Big(J(\pi^*)-J(\pi_{w_t})\Big) -  \alpha C_{r} \lambda - \alpha  \ltwo{u_t(\theta_t) - u^\lambda_{w_t}}  \nonumber\\
&\quad - \alpha \sqrt{\frac{1}{1-\gamma}  \linf{\frac{\nu_{\pi^*}}{\nu_{\pi_{w_0}}}}} \sqrt{\zeta^{\text{actor}}_{\text{approx}}} - L_\psi\alpha^2 \ltwo{u_t(\theta_t) - u^\lambda_{w_t} }^2 - \frac{L_\psi\alpha^2}{\lambda^2} \ltwo{\nabla_wJ(w_t) }^2. \label{eq: 19}
\end{flalign}
Rearranging \cref{eq: 19}, dividing both sides by $(1-\gamma)\alpha$, and taking expectation on both sides yield
\begin{flalign}
&J(\pi^*)-\mE[J(\pi_{w_t})] \nonumber\\
&\leq \frac{\mE[D(w_t)]-\mE[D(w_{t+1})]}{(1-\gamma)\alpha} + \frac{\mE[\ltwo{u_t(\theta_t) - u^{\lambda}_{w_t}}]}{1-\gamma} + \sqrt{\frac{1}{(1-\gamma)^3}  \linf{\frac{\nu_{\pi^*}}{\nu_{\pi_{w_0}}}}} \sqrt{\zeta^{\text{actor}}_{\text{approx}}} \nonumber\\
&\quad + \frac{L_\psi \alpha \mE[\ltwo{u_t(\theta_t) - u^{\lambda}_{w_t}}^2]}{1-\gamma} + \frac{L_\psi \alpha}{(1-\gamma)\lambda^2} \mE[\ltwo{\nabla_w J(w_t)}^2]+  \frac{ C_{r} \lambda}{1-\gamma}\nonumber\\
&\leq \frac{\mE[D(w_t)]-\mE[D(w_{t+1})]}{(1-\gamma)\alpha} + \frac{ \sqrt{\mE[\ltwo{u_t(\theta_t) - u^{\lambda}_{w_t}}^2]}}{1-\gamma} + \frac{L_\psi \alpha \mE[\ltwo{u_t(\theta_t) - u^{\lambda}_{w_t}}^2]}{1-\gamma} \nonumber\\
&\quad + \frac{L_\psi \alpha}{(1-\gamma)\lambda^2} \mE[\ltwo{\nabla_w J(w_t)}^2] + \sqrt{\frac{1}{(1-\gamma)^3}  \linf{\frac{\nu_{\pi^*}}{\nu_{\pi_{w_0}}}}} \sqrt{\zeta^{\text{actor}}_{\text{approx}}} +  \frac{C_{r} \lambda}{1-\gamma}. \label{eq: 20}
\end{flalign}
Recalling \cref{eq: 15}, we have 
\begin{flalign}
	 &\mE[\ltwo{u_t(\theta_t) - u^{\lambda}_{w_t}}^2] \nonumber\\
	 &= \mE[\ltwo{u_{t}(\theta_t) - (F(w_t)+\lambda I)^{-1}\nabla_w J(w_t)}^2]\nonumber\\
	 &\leq \frac{18}{\lambda^2}\mE[\ltwo{v_t(\theta_t)- \nabla_w J(w_t)}^2] + \frac{4r^2_{\max}}{\lambda^4(1-\gamma)^2}\mE[\ltwo{F(w_t)-F_t(w_t)}^2]\nonumber\\
	 &\overset{(i)}{\leq}  \frac{18}{\lambda^2}\left[ \frac{24(r_{\max} + 2R_\theta)^2[1+(\kappa-1)\rho]}{B(1-\rho)} + 6\mE[\ltwo{\theta_t-\theta^*_{w_t}}^2] + 12\zeta^{\text{critic}}_{\text{approx}} \right] \nonumber\\
	 &\quad + \frac{4r^2_{\max}}{\lambda^4(1-\gamma)^2}\frac{8[1+(\kappa-1)\rho]}{(1-\rho)B}\nonumber\\
	 &\leq \frac{C_2}{B} + \frac{108\mE[\ltwo{\theta_t-\theta^*_{w_t}}^2]}{\lambda^2} + \frac{216\zeta^{\text{critic}}_{\text{approx}}}{\lambda^2}.\label{eq: 22}
\end{flalign}
where
\begin{flalign*}
	C_2=\frac{18}{\lambda^2}\frac{24(r_{\max} + 2R_\theta)^2[1+(\kappa-1)\rho]}{B(1-\rho)} + \frac{4r^2_{\max}}{\lambda^4(1-\gamma)^2}\frac{8[1+(\kappa-1)\rho]}{(1-\rho)B}.
\end{flalign*}
Substituting \cref{eq: 22} into \cref{eq: 20} yields
\begin{flalign}
	&J(\pi^*)-\mE[J(\pi_{w_t})]\nonumber\\
	&\leq \frac{\mE[D(w_t)]-\mE[D(w_{t+1})]}{(1-\gamma)\alpha} + \frac{1}{1-\gamma} \left(\frac{\sqrt{C_2}}{\sqrt{B}} + \frac{11\sqrt{\mE[\ltwo{\theta_t-\theta^*_{w_t}}^2]}}{\lambda} + \frac{15\sqrt{\zeta^{\text{critic}}_{\text{approx}}}}{\lambda}\right) + \frac{L_\psi \alpha}{(1-\gamma)\lambda^2} \mE[\ltwo{\nabla_w J(w_t)}^2] \nonumber\\
	&\quad + \frac{L_\psi \alpha}{1-\gamma} \left(\frac{C_2}{B} + \frac{108\mE[\ltwo{\theta_t-\theta^*_{w_t}}^2]}{\lambda^2} + \frac{216\zeta^{\text{critic}}_{\text{approx}}}{\lambda^2}\right)+ \sqrt{\frac{1}{(1-\gamma)^3}  \linf{\frac{\nu_{\pi^*}}{\nu_{\pi_{w_0}}}}} \sqrt{\zeta^{\text{actor}}_{\text{approx}}} +  \frac{C_{r} \lambda}{1-\gamma}.\label{eq: 23}
\end{flalign}
Substituting the value of $\alpha$ into \cref{eq: 23}, taking summation of \cref{eq: 23} over $t=\{0,\cdots,T-1\}$, and dividing both sides by $T$ yield
\begin{flalign*}
&J(\pi^*)-\frac{1}{T}\sum_{t=0}^{T-1}\mE[J(\pi_{w_t})] \nonumber\\
&\leq \frac{D(w_0)-\mE[D(w_{T})]}{(1-\gamma)\alpha T} + \frac{1}{1-\gamma} \left(\frac{\sqrt{C_2}}{\sqrt{B}} + \frac{15\sqrt{\zeta^{\text{critic}}_{\text{approx}}}}{\lambda}\right) + \frac{L_\psi \alpha}{(1-\gamma)\lambda^2 T} \sum_{t=0}^{T-1}\mE[\ltwo{\nabla_w J(w_t)}^2] \nonumber\\
&\quad + \frac{11\sum_{t=0}^{T-1}\sqrt{\mE[\ltwo{\theta_t-\theta^*_{w_t}}^2]}}{(1-\gamma)\lambda T} + \frac{108L_\psi \alpha\sum_{t=0}^{T-1}\mE[\ltwo{\theta_t-\theta^*_{w_t}}^2]}{(1-\gamma)\lambda^2 T}\nonumber\\
&\quad + \frac{L_\psi \alpha}{1-\gamma} \left(\frac{C_2}{B}  + \frac{216\zeta^{\text{critic}}_{\text{approx}}}{\lambda^2}\right)+ \sqrt{\frac{1}{(1-\gamma)^3}  \linf{\frac{\nu_{\pi^*}}{\nu_{\pi_{w_0}}}}} \sqrt{\zeta^{\text{actor}}_{\text{approx}}} +  \frac{C_{r} \lambda}{1-\gamma}\nonumber\\
&\overset{(i)}{\leq} \frac{D(w_0)-\mE[D(w_{T})]}{(1-\gamma)\alpha T} + \frac{1}{1-\gamma} \left(\frac{\sqrt{C_2}}{\sqrt{B}} + \frac{15\sqrt{\zeta^{\text{critic}}_{\text{approx}}}}{\lambda}\right) \nonumber\\
&\quad + \frac{L_\psi \alpha}{(1-\gamma)\lambda^2} \left[\frac{C_3}{T} + \frac{C_4}{B} + \frac{C_5\sum_{t=0}^{T-1}\mE\left[\ltwo{\theta_t-\theta^*_{w_t}}^2\right]}{T} + C_6\zeta^{\text{critic}}_{\text{approx}} \right]\nonumber\\
&\quad + \frac{11\sum_{t=0}^{T-1}\sqrt{\mE[\ltwo{\theta_t-\theta^*_{w_t}}^2]}}{(1-\gamma)\lambda T} + \frac{108L_\psi \alpha\sum_{t=0}^{T-1}\mE[\ltwo{\theta_t-\theta^*_{w_t}}^2]}{(1-\gamma)\lambda^2 T}\nonumber\\
&\quad + \frac{L_\psi \alpha}{1-\gamma} \left(\frac{C_2}{B}  + \frac{216\zeta^{\text{critic}}_{\text{approx}}}{\lambda^2}\right)+ \sqrt{\frac{1}{(1-\gamma)^3}  \linf{\frac{\nu_{\pi^*}}{\nu_{\pi_{w_0}}}}} \sqrt{\zeta^{\text{actor}}_{\text{approx}}} +  \frac{C_{r} \lambda}{1-\gamma}\nonumber\\
&=\frac{C_7}{T} + \frac{C_8}{B} + \frac{C_9}{\sqrt{B}} + \frac{C_{10}}{T}\sum_{t=0}^{T-1}\sqrt{\mE[\ltwo{\theta_t-\theta^*_{w_t}}^2]} + \frac{C_{11}}{T}\sum_{t=0}^{T-1}{\mE[\ltwo{\theta_t-\theta^*_{w_t}}^2]} + C_{12}\zeta^{\text{critic}}_{\text{approx}} + C_{13}\lambda\nonumber\\
&\quad + \sqrt{\frac{1}{(1-\gamma)^3}  \linf{\frac{\nu_{\pi^*}}{\nu_{\pi_{w_0}}}}} \sqrt{\zeta^{\text{actor}}_{\text{approx}}}.
\end{flalign*}
where $(i)$ follows from \cref{eq: 18}. Furthermore, letting
\begin{flalign*}
&T= \mathcal{O}\left(\frac{1}{(1-\gamma)^2\epsilon}\right),\nonumber\\
&B = \mathcal{O}\left(\frac{1}{(1-\gamma)^2\epsilon^2}\right),\nonumber\\
&\mE\left[\ltwo{\theta_t-\theta^*_{w_t}}^2\right]\leq \epsilon^2, \quad\text{for all}\quad 0\geq t\geq T-1,
\end{flalign*}
we have
\begin{flalign*}
J(\pi^*)-\frac{1}{T}\sum_{t=0}^{T-1}\mE[J(\pi_{w_t})] \leq \epsilon + \mathcal{O}\left(\frac{\sqrt{\zeta^{\text{actor}}_{\text{approx}}}}{(1-\gamma)^{1.5}}\right) + \mathcal{O}\left(\zeta^{\text{critic}}_{\text{approx}}\right) + \mathcal{O}(\lambda).
\end{flalign*}
The total sample complexity is given by
\begin{flalign*}
	(B+MT_c)T&=\mathcal{O}\left[ \left( \frac{1}{(1-\gamma)^2\epsilon^2} + \frac{1}{\epsilon^2}\log\left(\frac{1}{\epsilon}\right)\right)\frac{1}{(1-\gamma)^2\epsilon} \right]\nonumber\\
	&=\mathcal{O}\left( \frac{1}{(1-\gamma)^4\epsilon^3}  \log\left(\frac{1}{\epsilon} \right) \right).
\end{flalign*}
\end{proof}

\end{document}